\tikzset{initial text={}} 
\newcommand{\rev}[1]{\textcolor{black}{#1}}
\newcommand\BibTeX{{\rmfamily B\kern-.05em \textsc{i\kern-.025em b}\kern-.08em
T\kern-.1667em\lower.7ex\hbox{E}\kern-.125emX}}
\newtheorem{theorem}{Theorem}
\newtheorem{proposition}{Proposition}
\newtheorem{lemma}{Lemma}
\newtheorem{definition}{Definition}
\newtheorem{remark}{Remark}
\newtheorem{problem}{Problem}
\newtheorem{myexam}{Example}
\DeclareMathOperator*{\argmin}{arg\,min}
\newcommand{\set}[1]{\{#1\}}
\newcommand\scr[1]{\mathcal{#1}}
\newcommand{\AP}{\Pi}
\newcommand{\ap}{\pi}
\newcommand{\PowerSetofAP}{2^\AP}
\newcommand{\trace}{\omega}
\newcommand{\traceSequence}{\trace_1 \trace_2 \ldots \trace_n}
\newcommand{\traces}{\Omega}
\newcommand{\word}{\trace}
\newcommand{\infword}{\trace}
\newcommand{\run}{z}
\newcommand{\runSequence}{z_0 z_1 \ldots z_n}
\newcommand{\plan}{\gamma}
\newcommand{\observation}{\rho_\plan}
\newcommand{\nextOp}{\mathcal{X}}
\newcommand{\globallyOp}{\mathcal{G}}
\renewcommand{\L}{\mathcal{L}}      
\newcommand{\cosafe}{\text{task}}   
\newcommand{\safe}{\text{safe}}     
\newcommand{\A}{\mathcal{A}}        
\newcommand{\Pd}{\mathcal{P}}       
\newcommand{\PP}{\mathbb{P}}        
\newcommand{\PA}{\A^\PP}            
\newcommand{\BLUE}{Q_\text{blue}}
\newcommand{\RED}{Q_\text{red}}
\newcommand{\water}{w}
\newcommand{\green}{g}
\newcommand{\carpet}{c}
\newcommand{\emptyAP}{e}
\newcommand{\fish}{f}
\newcommand{\ship}{s}
\newcommand{\target}[1]{$[#1]$}
\newcommand{\deltaA}{\delta_{\mathcal{A}}}
\newcommand{\deltaPA}{\delta_{\PP}}
\newcommand{\FinalPA}{F_{\PP}}
\newcommand{\PATuple}{(Q, \Sigma, q_0, \deltaA, F, \deltaPA, \FinalPA)}
\newcommand{\Sys}{\text{R}}
\newcommand{\Env}{\text{E}}
\newcommand{\SysState}{S_\Sys}
\newcommand{\EnvState}{S_\Env}
\newcommand{\SysAction}{A_\Sys}
\newcommand{\EnvAction}{A_\Env}
\newcommand{\varphiSafe}{\varphi_\safe}
\newcommand{\ASafe}{\A_\safe}
\newcommand{\QSafe}{Q^s}
\newcommand{\qInitSafe}{q^s_0}
\newcommand{\deltaSafe}{\delta^s}
\newcommand{\FinalSafe}{F^s}
\newcommand{\ASafeTuple}{(\QSafe, \Sigma, \qInitSafe, \deltaSafe, \FinalSafe)}
\newcommand{\play}{\mathscr{S}}
\newcommand{\playSequence}{s_0 s_1 \ldots s_n}
\DeclareMathOperator*{\PlaysOf}{Play}
\DeclareMathOperator*{\totalpayoff}{TP}
\newcommand{\strategy}{\tau}
\newcommand{\tpSet}{U}
\newcommand{\tpSetAfter}[1]{
    \tpSet_{#1}
}
\newcommand{\pp}{p}
\newcommand{\ppSet}{\mathscr{P}}
\newcommand{\Gstate}{s}
\newcommand{\GState}{S}
\newcommand{\GAction}{A}
\newcommand{\GInit}{s_0}
\newcommand{\GDelta}{\delta}
\newcommand{\GLabel}{L}
\newcommand{\GWeight}{W}
\newcommand{\DTGTuple}{(\GState, \GAction, \GInit, \GDelta, \AP, \GLabel, \GWeight)}
\newcommand{\augG}{{\bar{\G}}}
\newcommand{\augGState}{\bar{S}}
\newcommand{\augGAction}{A}
\newcommand{\augGDelta}{\bar{\delta}}
\newcommand{\augGInit}{\bar{s}_0}
\newcommand{\augGLabel}{\bar{L}}
\newcommand{\augDTGTuple}{(\augGState, \augGAction, \augGInit, \augGDelta, \AP, \augGLabel)}
\newcommand{\G}{G}        
\newcommand{\GPd}{\Pd^\G}
\newcommand{\GPdstate}{s^\Pd}
\newcommand{\GPdstateprime}{s^{\prime \Pd}}
\newcommand{\GPdState}{S^\Pd}
\newcommand{\GPdAction}{A}
\newcommand{\GPdInit}{s_0^\Pd}
\newcommand{\GPdTerm}{s_t^\Pd}
\newcommand{\GPdEdge}{E^\Pd}
\newcommand{\GPdWeight}{W^\Pd}
\newcommand{\GPdTuple}{(\GPdState, \GPdAction, \GPdInit, \GPdTerm, \GPdEdge, \GPdWeight)}
\newcommand{\GPdNumState}{|\GPdState|}
\newcommand{\GPdNumEdge}{|\GPdEdge|}
\newcommand{\GPdMaxStep}{\GPdNumState-1}
\newcommand{\GPdComplexity}{\mathcal{O}(\GPdNumState(\GPdNumState + \GPdNumEdge))}
\newcommand{\pathGPd}{\Lambda^{\Pd}}
\newcommand{\fpoperator}{F}
\begin{document}

\runninghead{Kandai, Renninger, and et al.}

\title{Learning specifications for reactive synthesis with safety constraints}

\author{
Kandai Watanabe\affilnum{1}, 
Nicholas Renninger\affilnum{2}, 
Sriram Sankaranarayanan\affilnum{3}
and Morteza Lahijanian\affilnum{3}
}

\affiliation{\affilnum{1}Google LLC\\
\affilnum{2}MITRE Corporation\\
\affilnum{3}University of Colorado Boulder}

\corrauth{Kandai Watanabe, Google LLC,
901 Cherry Avenue, San Bruno, CA}

\email{kandai.watanabe@colorado.edu}

\begin{abstract}
This paper presents a novel approach to \emph{learning from demonstration} that enables robots to autonomously execute complex tasks in dynamic environments. We model latent tasks as probabilistic formal languages and introduce a tailored reactive synthesis framework that balances robot costs with user task preferences. Our methodology focuses on safety-constrained learning and inferring formal task specifications as Probabilistic Deterministic Finite Automata (PDFA). We adapt existing ``evidence-driven state merging" algorithms and incorporate safety requirements throughout the learning process to ensure that the learned PDFA always complies with safety constraints. Furthermore, we introduce a multi-objective reactive synthesis algorithm that generates deterministic strategies that are guaranteed to satisfy the PDFA task while optimizing the trade-offs between user preferences and robot costs, resulting in a Pareto front of optimal solutions. Our approach models the interaction as a two-player game between the robot and the environment, accounting for dynamic changes. We present a computationally-tractable value iteration algorithm to generate the Pareto front and the corresponding deterministic strategies.
Comprehensive experimental results demonstrate the effectiveness of our algorithms across various robots and tasks, showing that the learned PDFA never includes unsafe behaviors and that synthesized strategies consistently achieve the task
while meeting both the robot cost and user-preference requirements.
\end{abstract}

\keywords{Formal Methods, Specification Learning, Reactive synthesis}
\maketitle

\section{Introduction}



Technological advancements are enabling robots to operate with increasing autonomy in human-shared domains. Examples range from home assistive robots and assembly lines to deep-sea and planetary exploration. In these environments, robots must make decisions to achieve \emph{complex tasks} in diverse, dynamic conditions while adhering to strict \emph{safety} requirements. However, complex task specifications are often unavailable or too difficult for non-experts to provide. Instead, tasks can be demonstrated through human operation or past data. The robot must then infer the task objective and execute it autonomously. This process presents five challenges: 
(i) identifying a formalism for efficient and precise learning from demonstrations, 
(ii) ensuring the learned specification satisfies safety properties, 
(iii) capturing operator preferences or hidden costs, 
(iv) applying the specifications in new environments, and 
(v) ensuring task completion with reactivity. 
In this article, we aim to address these challenges by drawing on formal methods to develop a specification-learning scheme and a reactive strategy synthesis algorithm that effectively complement each other.

Consider, for instance, an underwater robot deployed for deep-sea scientific exploration, as depicted in \Cref{fig:introductory_example}. The scientists want the robot to investigate a shipwreck on the ocean floor, observe the behavior of a school of fish, and steer clear of coral reefs to prevent damage. Rather than requiring a roboticist and domain specialist to either remotely operate the robot or meticulously define the mission objectives and execution plan, the goal of this work is to enable the autonomous execution of this task by just exposing the robot to the data (demonstrations) of similar missions in previous deployments. From such data, the robot should be able to infer the robust task representation and generate the necessary strategy to accomplish the task even under dynamically changing environments. We call this problem \textit{Specification Learning from Demonstrations}, which can be regarded as a new form of \textit{Learning from Demonstration} (LfD) \cite{ravichandar2020recent}.

\begin{figure}
    \centering
    \includegraphics[width=\linewidth]{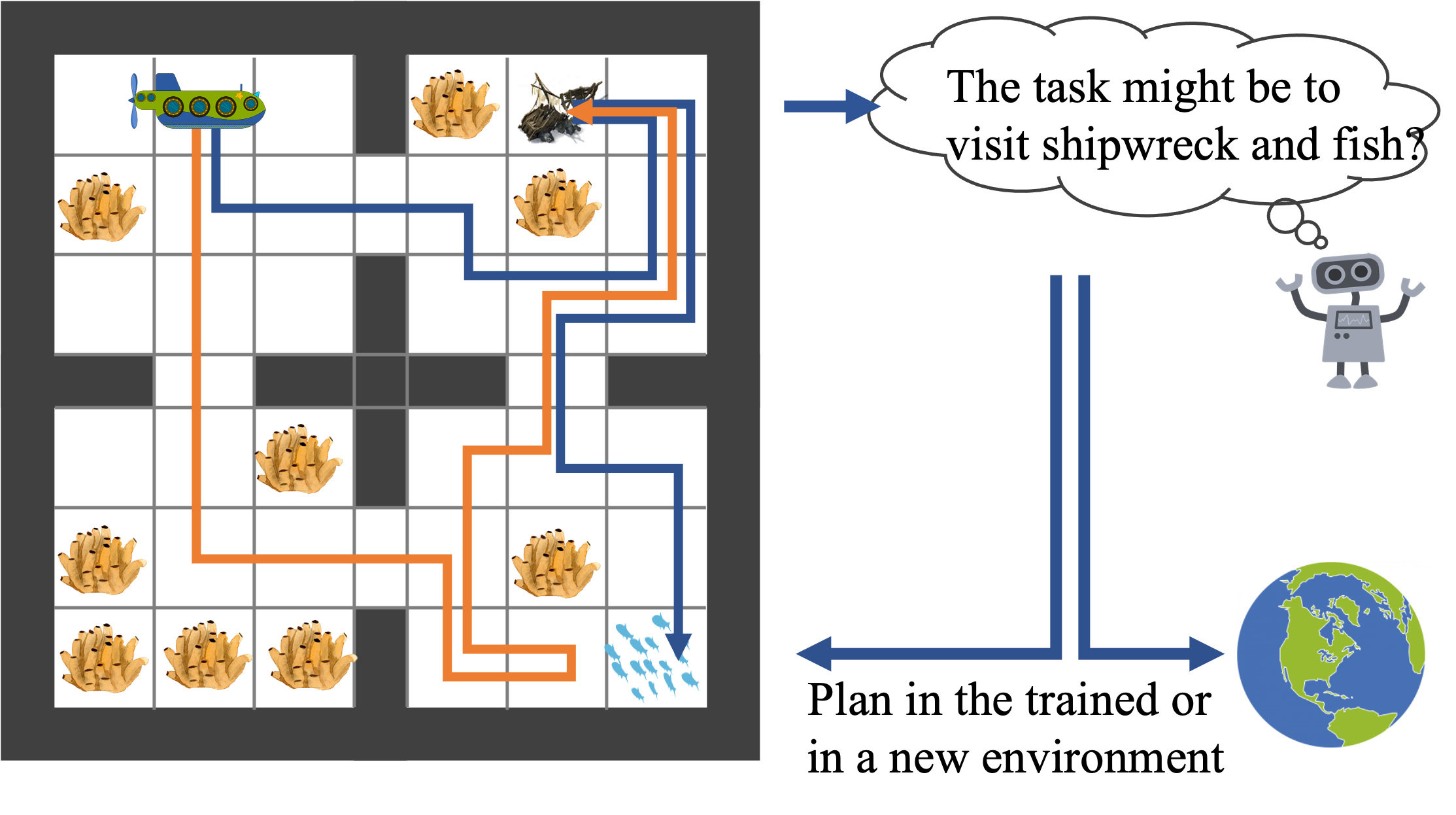}
    \caption{Schematic  of an autonomous deep-sea science mission. The task for the green underwater vehicle is to visit a school of fish (blue) and shipwreck (brown) while avoiding coral reefs (yellow). Our goal is to infer the underlying task from the demonstrations (the colored path) and synthesize a controller in the trained/new environment that achieves the learned task.}
    \label{fig:introductory_example}
\end{figure}


Most existing approaches to LfD and reactive planning focus on learning a reward structure or policy \cite{ravichandar2020recent, hussein2017imitation}. These methods typically learn a function specific to the environment and robot model used during training, making them fragile to changes in those models. 
In many real-world scenarios, however, demonstrations are performed in settings different from the execution environment.
Furthermore, these approaches are limited to Markovian tasks, where decisions at the current state depend only on the present and not on past events. But, complex tasks usually require maintaining a history of past events for successful completion.  For example, the robot task in \Cref{fig:introductory_example} involves visiting both the shipwreck and the school of fish in any order, making it non-Markovian. In such tasks, the robot must track its previous locations to decide the next one. 

In this work, we propose a novel approach to LfD by viewing tasks as probabilistic formal languages and introduce a reactive synthesis framework that optimally trades off robot's operational costs with user preferences on how the task should be completed.  We infer formal task specifications as probabilistic automata, drawing insights from the \emph{grammatical inference} (GI) domain \cite{de2010grammatical}.  
\rev{
Representing tasks as automata offers several advantages: (i) interpretability, (ii) symbolic reasoning capabilities, and (iii) access to well-studied algorithmic manipulation techniques.
}
We focus on \textit{safety-property constrained learning}, wherein the final learned specification must satisfy the safety properties. 
Distinctly, our proposed learning technique incorporates safety constraints throughout the learning process,
rather than applying them post hoc.
Specifically, we adapt existing 
\rev{Evidence-Driven State Merging (EDSM)
algorithms in GI to learn the task specification as a \textit{Probabilistic Deterministic Finite Automaton} (PDFA)~\cite{de2010grammatical}.
We integrate safety properties into the learning process, ensuring that all execution traces of the resultant PDFA satisfy the safety requirements.
}

Given the inferred PDFA, we present a reactive synthesis algorithm that generates deterministic strategies to accomplish the task while concurrently maximizing the demonstrator's preferences and minimizing the robot's operational costs. 
However, these objectives often conflict, resulting in a trade-off. 
\rev{
This leads to multiple optimal solutions, collectively known as the \emph{Pareto front}, where each solution offers a different balance between the task preference and cost minimization~\cite{chen2013stochastic}.
}
We propose a computationally efficient algorithm to compute the Pareto front and derive strategies for each Pareto point. This approach models the problem as a two-player game between the robot (system) and the environment, treating the environment as an adversarial agent to handle dynamic changes. Our experimental results, tested across various robots and tasks, demonstrate that the learned PDFA consistently avoids unsafe behaviors, while the synthesized strategies always satisfy the task requirements. Additionally, the robot's cost and task preferences remain within the bounds predicted by the corresponding Pareto point.


This manuscript substantially extends the plan synthesis component of the conference version \cite{watanabe2021probabilistic}.  Specifically, \cite{watanabe2021probabilistic} assumes a static environment and synthesizes a path over a graph that optimizes the preference measure of the learned PDFA.  This work generalizes \cite{watanabe2021probabilistic} by considering dynamic environments and synthesizing reactive strategies that guarantee the completion of the learned task while optimizing the trade-off between task preference measure and robot action cost. This significantly transforms the original problem from a graph search to a multi-objective game between the robot and the environment.  The new content includes a novel analysis method and synthesis algorithm for such games under deterministic strategies along with proofs of correctness and completeness as well as experimental evaluations. 

Overall, the contribution of this work is four-fold.
\begin{itemize}
    \item a derivation of a safety-guaranteed PDFA learning algorithm compatible with any EDSM techniques,
    \item a multi-objective reactive synthesis algorithm that leverages the learned PDFA to handle dynamic environments,
    \item a value iteration approach for Pareto front computation over deterministic strategies with completeness proof, and 
    \item a comprehensive set of experiments demonstrating the efficacy of the proposed algorithms in both mobile and manipulator robot applications.
\end{itemize}

\section{Related Work}
\textbf{Specification Learning: }
Many LfD research aims to learn a policy or a reward function.
For policy learning, techniques such as \textit{reinforcement learning} (RL) \cite{sutton2018reinforcement} and Dynamic Movement Primitives \cite{schaal2006dynamic, paraschos2013probabilistic} are typically used to learn a function that maps agent states to actions.
In reward learning, a scalar reward function that maps agent states to rewards is learned via, e.g., \textit{inverse reinforcement learning} (IRL)
\cite{ng2000algorithms, ziebart2008maximum, wulfmeier2015maximum, ramachandran2007bayesian}, to simultaneously train a policy on an agent. As mentioned above, these methods are fragile to the changes in the environment as they learn a function that is specific to the environment model used during training.

An alternative approach to expressing tasks is to use formal languages such as \textit{linear temporal logic} (LTL) \cite{BaierBook2008}, which is widely used in formal verification and increasingly employed in robotics in recent years, e.g., \cite{Lahijanian:AR-CRAS:2018}.  Such languages enable formal expression of rich missions, including non-Markovian tasks \cite{vazquez2017learning} as well as \textit{liveness} (``something good eventually happens'') and \textit{safety} (``something bad never happens'') requirements.
Other important benefits of formal languages is in their ease of interpretability and flexibility to compose multiple specifications.  Such benefits have even led to their use in RL, e.g., \cite{ijcai2019-840,li2017reinforcement,li2019formal}.
Nevertheless, writing correct formal specifications requires domain knowledge.

In recent years, a new line of research has emerged with a focus on learning formal specifications from data \cite{vazquez2017learning,vazquez2017logical,xu2018advisory,jha2017telex,shah2018bayesian}.
Most work has been concerned with learning temporal logic formulas with the purpose of classification and prediction from user data (in the supervised learning sense) \cite{xu2018advisory, jha2017telex} or interpretation and planning for tasks \cite{shah2018bayesian}.
Those studies restrict the exploration problem to a set of formula templates provided \textit{a priori}.
Recent work \cite{vazquez2017learning} overcomes this restriction by iterating over all combinations of formulas. The method is based on maximum a posterior learning and can account for noisy samples.  It however is slow due to the large space of exploration for formulas.
Another important issue with formula learning methods for the purpose of planning is that they typically need to be translated to an automaton, which could lead to the \emph{state-explosion} problem \cite{BaierBook2008, Lahijanian:AR-CRAS:2018}.
Work \cite{araki2019learning} overcomes this issue by directly learning a \textit{Deterministic Finite Automaton} (DFA).
They however assume the structure of the DFA is known and only learn the transitions between the DFA states while an oracle labels each sample with DFA states.

\textbf{Synthesis: }
Planning algorithms that utilize LTL have been widely explored \cite{Lahijanian:AR-CRAS:2018}, and this work builds upon these developed approaches. Common methods include automata-based techniques for discrete states \cite{Lahijanian:AR-CRAS:2018}, sampling-based motion planning \cite{bhatia2010sampling}, and reinforcement learning for continuous states \cite{camacho2019ltl}. These methods have been extended to synthesize solutions in reactive environments \cite{fainekos2005temporal}. In this study, we model a dynamic environment as a game between a system player and an environment player, similar to the frameworks presented in \cite{He:RAL:2019,he2017reactive,muvvala2022regret,muvvala2023efficient}. However, our approach must account for multiple quantitative objectives, rooted in the probabilities of the learned PDFA and the robot's operational costs, alongside the reachability requirement for task completion. This naturally leads to a multi-objective reachability game, where the goal is to synthesize a strategy that satisfies the reachability requirement while optimally trading off the quantitative objectives.


In \cite{chen2013synthesis, basset2015strategy, chen2013stochastic}, the authors explore multi-objective stochastic games, addressing both stopping and non-stopping games. For stopping games, they demonstrate that either infinite memory is necessary for deterministic strategies or randomization is required. In contrast, \cite{chatterjee2012strategy} presents a synthesis algorithm for multi-objective (multi-energy, mean-payoff, and parity) non-stopping games, showing that exponential memory is sufficient in multidimensional energy parity games and introducing a symbolic algorithm to compute a finite-memory winning strategy.
Our approach, however, synthesizes a deterministic strategy for a multi-objective stopping game using value iteration, which aligns more closely with \cite{chen2013synthesis}. Additionally, \cite{sastry2005new} addresses a similar problem but focuses on finding a set of Pareto-optimal solutions by transforming the multiple objectives into a single objective and reducing the problem to a shortest-path formulation. In contrast, our method computes the entire Pareto front, achievable by deterministic strategies.

\section{Preliminaries} \label{sec:preliminaries}


In this work, we are interested in deploying robots in dynamic environments to collect demonstrations for task learning and strategy synthesis. To define the problem, we first provide the necessary background on modeling the dynamic environments, demonstrations, task specifications, and strategies. Once all terms are defined, we formally introduce the problem in \Cref{sec:problemformulation}. 

\subsection{Two-player Game: Robot-Environment Interactive Model}

\rev{
We consider a robot that has to interact with a dynamic environment to achieve a task.  For example, the robot in \Cref{fig:introductory_example}, to fulfill its goal, has to interact with a school of fish that can freely move around. This interaction can be modeled as a game between the robot and the environment (fish), where each player has their own objectives and set of actions. 
While in reality, this game takes place in a continuous domain and may be concurrent, abstractions can be made to represent it as a discrete two-player game.  
Such an abstraction is commonly used and constructed in formal approaches to both mobile robotics \cite{Lahijanian:AR-CRAS:2018,Hadas:ICRA:2007,Lahijanian:ICRA:2009} and robotic manipulators \cite{He:ICRA:2015,He:RAL:2019,Muvvala:ICRA:2024}.
}

\begin{definition}[Two-player Game]
	A \textit{two-player game} is a tuple $\G = \DTGTuple$, where

	\begin{itemize}
		\item $\GState=\SysState \cup \EnvState$ is a finite set of states, where $\SysState$ and $\EnvState$ are the set of robot and environment states, respectively, and $\SysState \cap \EnvState = \emptyset$,
		\item $\GAction=\SysAction \cup \EnvAction$ is a finite set of controls or actions, where $\SysAction$ and $\EnvAction$ are the set of robot and environment actions, respectively,
		\item $\GInit \in \GState$ is the initial state,
		\item $\GDelta: \GState \times \GAction \rightarrow \GState$ is the transition function, 
		\item $\AP$ is a finite set of atomic propositions (predicates), 
		\item $\GLabel: \GState \rightarrow \PowerSetofAP$ is a labeling function that maps each state to the set of predicates that are true at that state, and
        \item $\GWeight: \GState \times \GAction \rightarrow \mathbb{R}^m_{\geq 0}$ is a weighting function that assigns to each  $(\Gstate,a) \in \GState \times \GAction$ an $m$-dimensional vector of non-negative weights $\GWeight(\Gstate,a)$.
	\end{itemize}
\end{definition}

Game $\G$ is also referred to as \textit{multi-objective} two-player game since it allows the encoding of multiple weights to each edge via $\GWeight$, i.e., $m$ weights and hence $m$ objectives. For instance, the weights could represent energy and distance costs.

The evolution of the game is as follows.  At state $\Gstate \in \GState_i$, player $i\in \{\Sys, \Env\}$ picks an action $a \in \GAction_i$, and receives a weight of $W(\Gstate,a)$. Then, the state of the game evolves to $s' = \GDelta(s,a) \in \GState_j, j \in \{\Sys, \Env\} \setminus \{i\}$. Next, it is player $j$'s turn to take an action, and the process repeats.  

\begin{myexam}[Two-player game]
The game abstraction of
\Cref{fig:introductory_example} is defined as follows. A state is a tuple of vehicle location $l_\Sys$, fish location $l_\Env$, and player's turn $i \in \{\Sys, \Env\}$, i.e., $s=(l_\Sys, l_\Env, i)$ where $l = (x, y)$ is the coordinate of each agent. Starting from the initial state $\GInit = ((2,1), (7,7), \Sys)$, the vehicle can take actions N (north), S (south), E (east), or W (west) to transition to an adjacent cell of distance 1 by consuming energy cost of 2, i.e., 
\rev{$\GWeight(s, a) = (1, 2)$ for all $a \in \{$N,\,S,\,E,\,W$\}$}
Fish can take action likewise.
The set of all possible atomic predicates that can be observed in this environment is $\AP = \set{\texttt{shipwreck}, \texttt{fish}, \texttt{coral-reefs}}$.
When the vehicle and fish are both at location $l_\Sys = l_\Env = (7, 7)$, then the observation is $\GLabel((l_\Sys, l_\Env, i)) = \{fish\}$ for $i\in\{\Sys,\Env\}$.
\end{myexam}


Players take actions in turn\footnote{
\rev{
Note that turn-taking occurs only in the (discrete) abstraction. In reality, agents may have continuous dynamics and act concurrently. The abstraction process maps these concurrent, continuous interactions into a discrete, turn-based game, as detailed in \cite{He:IROS:2017,Muvvala:ICRA:2024}.
%
}
} 
and this evolution results in a sequence of states called a play, which generates a sequence of observations called a trace (also known as word). 

\begin{definition}[Play \& Trace]
    \label{def:play}
    A \textit{play} $\play = \playSequence$ is a sequence of states starting from the initial state $s_0$, for all steps $0 \leq k < n$, there exists an action $a_k \in \GAction$ that transitions to the next state $s_{k+1} = \GDelta(s_k, a_{k})$. The set of all plays in $\G$ is denoted by $\PlaysOf$. 
    The output \textit{trace} of $\play$ is the sequence of state labels $\trace = \GLabel(\play) = \GLabel(s_0) \GLabel(s_1) \ldots \GLabel(s_n)$.
\end{definition}

    

The prefix of play $\play$ at position $k \leq n$ is a finite sequence of states $\play(k) = s_0 s_1 \ldots s_k$ from $s_0$ to the $k$-th state.
We say a prefix $\play(k)$ belongs to the robot player if $s_k \in \GState_\Sys$; otherwise, it belongs to the environment player.

\begin{myexam}[Play and Observation Trace]
Starting at $\GInit=((2,1), (7,7))$, if the robot takes actions S and W and fish takes action S in turn, then the resulting play is $\play = ((2,1),(7,7), \Sys)$ $((2,2), (7,7), \Env)$ $((2,2), (7,7), \Sys)$ $((1,2), (7,7), \Env)$. This induces the observation trace $\trace = \emptyset \emptyset \emptyset \set{\texttt{coral-reefs}}$.
\end{myexam}

The total cost that each player receives along a play is called
total payoff.

\begin{definition}[Multi-objective total payoff]
    \label{def:payoff}
    The \textit{multi-objective total payoff} of play $\play = s_0\ldots s_n$ is the sum of the weight vectors along $\play$, i.e,
    $$\totalpayoff(\play) = \sum_{k=0}^{n-1} \GWeight(s_k, s_{k+1}).$$
\end{definition}

In game $\G$, each player picks an action according to a strategy.  This choice of action can generally be deterministic or stochastic. 
In this work, we focus on deterministic strategies since we are interested in the robot behavior in one deployment instead of the expected behavior over multiple deployments.


\begin{definition}[Strategy]
    \label{def:strategy}
    \rev{Let $\GState^*\GState_i$ 
    denote the set of all finite plays that end in $\GState_i \subseteq S$, where $*$ is the Kleene star, and $i \in \{\Sys, \Env\}$.}
    A (deterministic) 
    \textit{strategy} for player $i \in \{\Sys, \Env\}$ is a function $\strategy_i: \GState^* \GState_i \rightarrow A_i$ that chooses the next action given a (finite) play that ends in a state in $S_i$.
\end{definition}

Under robot and environment strategies $\strategy_\Sys$ and $\strategy_\Env$, the game results in a single play denoted by $\PlaysOf(\strategy_\Sys, \strategy_\Env)$.  However, $\strategy_\Env$ is usually unknown; hence, our goal is to choose a $\strategy_\Sys$ that achieves the robot's objectives against all possible environment strategies, i.e., all possible plays under $\strategy_\Sys$, denoted by $\PlaysOf(\strategy_\Sys, \cdot)$. 

\subsection{Task Specifications}


We assume a robot task can be represented as a deterministic finite automaton (DFA) with alphabets $2^\Pi$.

\begin{figure}
    \begin{subfigure}{0.45\linewidth}
        \scalebox{.83}{
            \begin{tikzpicture}[
                ->, 
                >=stealth', 
                node distance=0.5\linewidth, 
                scale=0.8,
                every node/.style={scale=0.8, font=\small}]
            \node [state, initial] (q0) {$q0$};
            \node [state, below left of=q0] (q1) {$q1$};
            \node [state, below right of=q0] (q2) {$q2$};
            \node [state, accepting, below left of=q2] (q3) {$q3$};
            \draw
                (q0) edge[loop above, right, align=center] node[xshift=2mm, yshift=-2mm]{$\emptyset$} (q0)
                (q1) edge[loop left, below, align=center] node[xshift=-2mm, yshift=-1mm]{$\emptyset$} (q1)
                (q2) edge[loop right, below, align=center] node[xshift=2mm, yshift=-1mm]{$\emptyset$} (q2)
                (q0) edge[above left, align=center] node{$\set{\texttt{shipwreck}}$} (q1)
                (q0) edge[above right, align=center] node{$\set{\texttt{fish}}$} (q2)
                (q1) edge[below left, align=center] node[xshift=2mm, yshift=0mm]{$\set{\texttt{fish}}$} (q3)
                (q2) edge[below right, align=center] node[xshift=-2mm, yshift=0mm]{$\set{\texttt{shipwreck}}$} (q3);
            \end{tikzpicture}
        }
        \caption{Example DFA}
        \label{fig:exampleDFA}
    \end{subfigure}
    ~
    \begin{subfigure}{0.45\linewidth}
        \scalebox{.83}{
            \begin{tikzpicture}[
                ->, 
                >=stealth', 
                node distance=0.5\linewidth, 
                scale=0.8,
                every node/.style={scale=0.8, font=\small}]
            \node [state, initial] (q0) {$q0$};
            \node [state, below left of=q0] (q1) {$q1$};
            \node [state, below right of=q0] (q2) {$q2$};
            \node [state, accepting, below left of=q2] (q3) {$q3$:1.00};
            \draw
                (q0) edge[loop above, right, align=center] node[xshift=2mm, yshift=-2mm]{$\emptyset$: 0.8} (q0)
                (q1) edge[loop left, below, align=center] node[xshift=-2mm, yshift=-1mm]{$\emptyset$: 0.8} (q1)
                (q2) edge[loop right, below, align=center] node[xshift=2mm, yshift=-1mm]{$\emptyset$: 0.8} (q2)
                (q0) edge[above left, align=center] node{$\set{\texttt{shipwreck}}$: 0.15} (q1)
                (q0) edge[above right, align=center] node{$\set{\texttt{fish}}$: 0.05} (q2)
                (q1) edge[below left, align=center] node[xshift=2mm, yshift=0mm]{$\set{\texttt{fish}}$: 0.2} (q3)
                (q2) edge[below right, align=center] node[xshift=-2mm, yshift=0mm]{$\set{\texttt{shipwreck}}$: 0.2} (q3);
            \end{tikzpicture}
        }
        \caption{Example PDFA}
        \label{fig:examplePDFA}
    \end{subfigure}
    \caption{DFA and PDFA representation of an autonomous deep-sea science mission.}
    \label{fig:running_example_pdfa}
\end{figure}
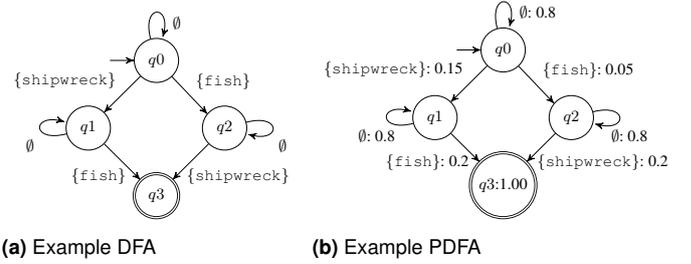

\begin{definition}[DFA]
	A \textit{deterministic finite automaton} (DFA) is a tuple $\A = (Q,\Sigma,q_0,\delta_\A,F)$, where
	\begin{itemize}
		\item $Q$ is a finite set of states,
		\item $\Sigma = 2^\AP$ is a finite set of input symbols, where each symbol is a subset of $\AP$,
		\item $q_0 \in Q$ is the initial state,
		\item $\delta_\A: Q \times \Sigma \to Q$ is the transition function, and
		\item $F \subseteq Q$ is the set of final or accepting states.
	\end{itemize}
\end{definition}

\noindent
The transition function $\delta_\A$ can be also viewed as a relation $\delta_\A \subseteq Q\times\Sigma\times Q$, where every transition is a tuple $(q,\sigma,q') \in \delta_\A$ iff $q' = \delta_\A(q,\sigma)$, where $\sigma \in \Sigma$.



A trace $\trace = \traceSequence$, where $\trace_i \in 2^\Pi$ for all $1 \leq i \leq n$, induces
a \textit{run} $\run = \runSequence$ on DFA $\A$, where $\run_0 = q_0$ and $\run_i = \delta(\run_{i-1},\trace_i)$ for $i = 1, \dots, n$.
A run $\run$ is called \textit{accepting} if $\run_n \in F$.  Trace $\trace$ is accepted by $\A$ if it induces an accepting run.  The set of all traces that are accepted by DFA $\A$ is called the language of $\A$ and is denoted by $\L(\A)$.
In game $\G$, we say play $\play$ satisfies the task represented by $\A$ if its output trace $\trace \in \L(\A)$.



\begin{myexam}
    \label{ex:DFA}
    \Cref{fig:exampleDFA}
    shows an example of a DFA that represents the robot task in \Cref{fig:introductory_example}.
    The set of accepting states is $F = \{q_3\}$.
    Trace $\omega = \emptyset \set{\texttt{shipwreck}} \emptyset \set{\texttt{fish}}$ induces accepting run $q_0 q_0 q_1 q_1 q_3$ on this DFA.
\end{myexam}



\rev{
A probabilistic extension of DFA is called PDFA, which assigns probabilities to the edges of the DFA~\cite{de2010grammatical}. 
}
This consequently induces a probability measure over the traces in the language of the DFA.  We use this measure as a preference metric over the accepting traces.

\begin{definition}[PDFA]
    \rev{
    A \textit{probabilistic DFA} (PDFA) is a tuple $\PA = (\A, \delta_\PP, F_\PP)$, where $\A$ is a DFA, and $\delta_\PP: 
    Q \times \Sigma \times Q    
    \to [0,1]$ assigns a probability to every transition in $\A$ such that
    $\sum_{\sigma \in \Sigma} \delta_\PP(q,\sigma,\delta_\A(q,\sigma))=1$
    for every $q\in Q$, and $F_\PP: Q \to [0,1]$ assigns a probability of terminating at each state such that $F_\PP(q) = 0$ if $q\not \in F$.
    }
\end{definition}

Consider trace $\trace = \traceSequence$ and its induced run $\run = \runSequence$ on PDFA $\PA$. The probability of $\trace$ is given by
$$P(\trace) = \prod_{i=1}^n \delta_\PP(\run_{i-1},\trace_i,\run_i) \cdot F_\PP(\run_n).$$
We say $\PA$ accepts $\trace$ iff $P(\trace) > 0$.
The language of $\PA$ is the set of traces with non-zero probabilities, i.e.,
$$\L(\PA) = \{\trace \in (\PowerSetofAP)^* \mid P(\trace) >0 \}.$$

\begin{myexam}[PDFA]
    \Cref{fig:examplePDFA}
    shows the PDFA extension of the DFA in \Cref{ex:DFA}, where the termination probability is 1 at $q_3$ and zero everywhere else.
    The probability of trace
    $$\omega = \emptyset \set{\texttt{shipwreck}} \emptyset \set{\texttt{fish}}$$
    is
    $P(\omega) = 0.8\times 0.15  \times 0.8 \times 0.2 \times 1.0 = 0.0192$.
\end{myexam}

\subsection{Safety Specifications}
\label{sec:safeLTL}
To express the safety constraints for the robot, we use safe LTL \cite{kupferman:FMSD:2001}, defined over the set of atomic propositions $\AP$.

\begin{definition}[Safe LTL Syntax]
	A \textit{syntactically safe LTL} formula over $\AP$ is recursively defined as
	\begin{equation*}
		\varphi := \ap \mid \lnot \ap \mid \varphi \lor \varphi \mid \varphi \land \varphi \mid  \mathcal{X} \varphi \mid \mathcal{G} \varphi
	\end{equation*}
	where $\ap \in \AP$, $\lnot$ (negation), $\lor$ (disjunction), and $\land$ (conjunction) are Boolean operators, and $\nextOp$ (``next'') and $\globallyOp$ (``globally'') are temporal operators.
\end{definition}

\rev{Note that the commonly used implication operator ($\rightarrow$) can be derived from $\neg$ and $\lor$, i.e., $\varphi_1 \rightarrow \varphi_2 \equiv \neg \varphi_1 \lor \varphi_2$.}

\begin{definition}[Safe LTL Semantics]
The semantics of syntactically safe LTL formulas are defined over infinite traces over $\PowerSetofAP$. Let $\infword = \word_1\word_2\ldots$ be an infinite trace $\infword \in (2^\Pi)^\omega$ with symbols $\word_i \in \PowerSetofAP$, and define
$\infword^i = \word_i \word_{i+1} \ldots$ to be a suffix of $\word$.
The notion $\infword \models \varphi$ indicates that $\infword$ satisfies formula $\varphi$ and is inductively defined as:
\begin{itemize}
    \item $\infword \models \ap$ iff $\ap \in \word_0$;
    \item $\infword \models \lnot \ap$ iff $\ap \notin \word_0$;

    \item $\infword \models \varphi_1 \lor \varphi_2$ iff $\infword \models \varphi_1$ or $\infword \models \varphi_2$;
    \item $\infword \models \varphi_1 \land \varphi_2$ iff $\infword \models \varphi_1$ and $\infword \models \varphi_2$;
    \item $\infword \models \nextOp \varphi$ iff $\infword^1 \models \varphi$;
    \item $\infword \models \globallyOp \varphi$ if $\forall k \geq 0$, $\infword^k \models \varphi$.
\end{itemize}
\end{definition}

Safe LTL formulas reason over infinite traces, but finite traces are sufficient to violate them \cite{kupferman:FMSD:2001}.
Therefore, as long as a finite trace does not violate the safe LTL, it respects the safety constraints.
We denote the set of finite traces that violate safety formula $\varphi_\safe$ by $\L(\neg \varphi_\safe)$. From safe LTL formula $\varphi_\safe$, we can construct a DFA $\A_{\neg \varphi_\safe}$ that accepts all violating traces $\L(\neg \varphi_\safe)$. By flipping the accepting condition and minimizing this DFA, we can construct a safety DFA $\A_{\varphi_\safe}$ where every state is accepting 
\rev{
(see~\cite{Lahijanian:TRO:2016} for more details).
}

\begin{figure}
    \centering
    \begin{subfigure}{0.6\linewidth}
        \centering
        \scalebox{.83}{
            \begin{tikzpicture}[
                ->, 
                >=stealth', 
                node distance=0.6\linewidth, 
                scale=0.8,
                every node/.style={scale=0.8, font=\small}]
              \node [state, initial] (q0) {$q0$};
              \node [state, right of=q0] (q1) {$q1$};
              \node [state, accepting, right of=q1] (q2) {$q2$};
              \draw
                (q0) edge[loop above, right, align=center] node[xshift=1mm, yshift=-1mm]{$\neg\set{\texttt{coral}}$} (q0)
                (q2) edge[loop above, right, align=center] node[xshift=1mm, yshift=-1mm]{true} (q2)
                (q0) edge[above, align=center] node{$\set{\texttt{coral}}$} (q1)
                (q1) edge[below, align=center, bend left] node{$\neg \set{\texttt{coral}}$} (q0)
                (q1) edge[above, align=center] node{$\set{\texttt{coral}}$} (q2);
            \end{tikzpicture} 
        }
        \caption{Safety Violating DFA $\A_{\neg \varphi_\safe}$}
        \label{fig:safetyViolatingDFA}
    \end{subfigure}
    
    \begin{subfigure}{0.6\linewidth}
        \centering
        \scalebox{.83}{
            \rev{
            \begin{tikzpicture}[
                ->, 
                >=stealth', 
                node distance=0.6\linewidth, 
                scale=0.8,
                every node/.style={scale=0.8, font=\small}]
              \node [state, accepting, initial] (q0) {$q0$};
              \node [state, accepting, right of=q0] (q1) {$q1$};
              \node [state, right of=q1] (q2) {$q2$};
              \draw
                (q0) edge[loop above, right, align=center] node[xshift=1mm, yshift=-1mm]{$\neg\set{\texttt{coral}}$} (q0)
                (q2) edge[loop above, right, align=center] node[xshift=1mm, yshift=-1mm]{true} (q2)
                (q0) edge[above, align=center] node{$\set{\texttt{coral}}$} (q1)
                (q1) edge[below, align=center, bend left] node{$\neg \set{\texttt{coral}}$} (q0)
                (q1) edge[above, align=center] node{$\set{\texttt{coral}}$} (q2);
            \end{tikzpicture} 
            }
        }
        \caption{\rev{Safety DFA $\neg \A_{\neg \varphi_\safe}$}}
        \label{fig:safetyNonViolatingDFA}
    \end{subfigure}

    \begin{subfigure}{0.6\linewidth}
        \centering
        \scalebox{.83}{
            \begin{tikzpicture}[
                ->, 
                >=stealth', 
                node distance=0.6\linewidth, 
                scale=0.8,
                every node/.style={scale=0.8, font=\small}]
              \node [state, initial, accepting] (q0) {$q0$};
              \node [state, accepting, right of=q0] (q1) {$q1$};
              \draw
                (q0) edge[loop above, right, align=center] node[xshift=1mm, yshift=-1mm]{$\neg\set{\texttt{coral}}$} (q0)
                (q1) edge[bend left, below, align=center] node[xshift=1mm, yshift=-1mm]{$\neg\set{\texttt{coral}}$} (q0)
                (q0) edge[above, align=center] node{$\set{\texttt{coral}}$} (q1);
            \end{tikzpicture}
        }
        \caption{Minimized Safety DFA $\A_{\varphi_\safe}$}
        \label{fig:safetyDFA}
    \end{subfigure}
    \caption{Safety Violating DFA, Safety DFA, and Minimized Safety DFA}
    \label{fig:safety_dfas}
\end{figure}
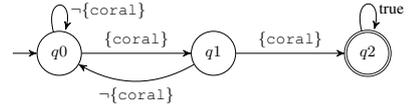
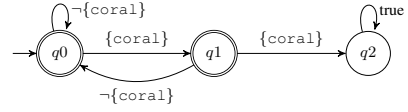
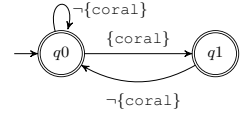

\begin{myexam}[Safe LTL]
    We can express the safety property ``always escape from coral reefs in 1 step''
    for the robot in \Cref{fig:introductory_example} as a safe LTL formula $\varphi = \globallyOp (\rm{coral} \rightarrow \nextOp \neg \rm{coral})$.  
    \rev{
    We construct the corresponding safety DFA $\A_{\varphi_\safe}$ by first constructing the safety-violating DFA $\A_{\neg {\varphi_\safe}}$ in \Cref{fig:safetyViolatingDFA}, and
    then by flipping the accepting condition (\Cref{fig:safetyNonViolatingDFA}) and minimizing it (\Cref{fig:safetyDFA}). 
    }
\end{myexam}

\section{Problem Formulation} \label{sec:problemformulation}

In this work, we are interested in deploying a robot in a dynamic environment with a task that is not specified but rather demonstrated, e.g., past deployment data.  Our goal for the robot is to infer the task and execute it according to the user preference with the completion and safety guarantees in face of environmental changes.  In addition to the challenges of dealing with task and preference inference and dynamic environment, we also do not require the demonstrations to be necessarily in the same environment in which the robot is to be deployed.  
This provides a high level of flexibility that allows the demonstration data to be collected independently from the environment and robot dynamics.
Below, we first introduce the notions of demonstrations, preferences, safety constraints, and task completion, and then introduce the formal statement of the problem.

We assume the demonstrator has a task $\varphi_\cosafe$ in mind that can be represented as a DFA with alphabet $2^\AP$.  Let $\A_{\cosafe}$ denote this DFA.  Based on $\varphi_\cosafe$, the demonstrator provides demonstrations to the robot.  We define a demonstration to be a sequence of symbols that achieves $\varphi_\cosafe$.
\begin{definition}[Demonstration]
    \label{def:demonstration}
    A \textit{demonstration} of task $\varphi_\cosafe$ is a trace $\trace = \trace_1 \trace_2 \ldots \trace_n$, where $\trace_i \in 2^\AP$ for all $1 \leq i \leq n$, such that $\trace \in \L(\A_{\cosafe})$.  
\end{definition}


Given a finite set of demonstrations $\traces =\set{\trace^1, \ldots, \trace^{n_{\traces}}}$, our goal is to learn the underlying task as well as the demonstrator's \emph{preferences} on how to achieve the task. For instance, a demonstrator may prefer to avoid a collision with an obstacle by \emph{steering left} since it may put the vehicle in a position that can avoid a future collision with less effort.  We assume that the preferred behaviors are demonstrated more often (repeated more) in $\traces$.
We aim to learn the task and preferences in the form of a PDFA $\tilde{\PA}_{\varphi_\cosafe}$, where the probabilities over traces reflect preferences. 

In addition, we specify a safety property $\varphi_\safe$ that the robot must not violate. The safety property characterizes a potentially infinite set of negative demonstrations that violate the safety property for our learner. By considering safety constraints, we also avoid trivial solutions to the problem, e.g., a trivial specification that accepts all possible behaviors. 

Now consider the interaction model of the robot and the environment as a two-player game $G$ and action costs for the robot. 
\rev{
Once a task is learned,
we are interested in synthesizing a deterministic strategy for the robot such that the robot is guaranteed to complete the learned task $\PA$ in one run while minimizing total cost (payoff) and maximizing the preferences.
}
The formal statement of the problem is as follows.

\begin{problem}
	\label{problem}
	Given robot-environment model as a two-player game $\G$ and a set of demonstrations $\traces =\set{\trace^i}_{i=1}^{n_\traces}$ 
	of a (latent) task $\varphi_\cosafe$ 
	according to some preference (probability distribution)
	and a safe LTL formula $\varphi_\safe$,
	\begin{enumerate}
		\item learn $\varphi_\cosafe$ and the demonstrator's preferences as a PDFA $\tilde \PA_{\varphi_\cosafe}$ such that it does not accept any trace that violates safety, 
		i.e, $\L (\tilde \PA_{\varphi_\cosafe}) \cap \L(\neg \varphi_\safe) = \emptyset$, and
		\label{prob:learning}

	

        \item compute a (deterministic) strategy $\strategy^{*}_\Sys$ such that every play under $\strategy^*_\Sys$ achieves $\varphi_\cosafe$ and never violates $\varphi_\safe$ while minimizing the worst total cost and maximizing the lowest preference, i.e., for $\play = \PlaysOf(\strategy_\Sys,\strategy_\Env)$,
        \begin{align}
            \begin{split}
                \label{eq:vector-optimization}
                \strategy^*_\Sys &= \argmin_{\strategy_\Sys} \max_{\strategy_\Env} \big(\totalpayoff (\play), \, - P ( \GLabel (\play)) \big)  
            \end{split}\\
            & \text{subject to } \nonumber \\
            \label{eq: satisfy PDFA}
            & \GLabel \big(\PlaysOf(\strategy_\Sys,\strategy_\Env)\big) \in \L (\tilde \PA_{\varphi_\cosafe}) \qquad \forall \strategy_\Env  \\
            \label{eq: satisfy safety}
            & \GLabel \big(\PlaysOf(\strategy_\Sys,\strategy_\Env)\big) \models \varphi_\safe \qquad \qquad \forall \strategy_\Env
        \end{align}
        
        where $P(\cdot)$ is the probability measure over PDFA $\tilde \PA_{\varphi_\cosafe}$.
        
		\label{prob:synthesis}
		
		
	\end{enumerate}
\end{problem}

Note that the optimization in \eqref{eq:vector-optimization} is multi-objective, i.e., $m+1$ objectives: $m$ dimensions of $\totalpayoff(\play)$ and one for $-P ( \GLabel (\play))$.  Further, these objectives are competing, i.e., 
by optimizing for one, the other becomes suboptimal.  In such cases, the interest is in the trade-offs of the objectives.  Hence, our goal is to \emph{optimize} for the trade-off between the objectives, which is known as \emph{Pareto optimal}.
Since there could be multiple optimal trade-offs, we aim to compute all Pareto optimal points possible under deterministic strategies.  Then, given a choice of one, we synthesize the corresponding strategy.  
\rev{
Finally, constraints \eqref{eq: satisfy PDFA}–\eqref{eq: satisfy safety} ensure that the computed strategy completes the learned task safely, regardless of the environment's strategy.
}



For Problem~1.1, we use grammatical inference~\cite{de2010grammatical} while  incorporating the safety property during the learning process, as described in \Cref{sec:pdfalearning}.  
We use this PDFA to solve Problem~1.2 as detailed in \Cref{sec:controlsynthesis}.

\section{Safety Guaranteed PDFA Learning}
\label{sec:pdfalearning}

In this section, we explain how a PDFA can be learned from demonstrations and present our method of embedding safety specification in the learning process. We first show a general PDFA learning algorithm, and then we describe our algorithms to incorporate safety.

\subsection{Grammatical Inference: PDFA Learning}

PDFA learning has been extensively studied as part of \emph{grammatical inference} (GI) with existing algorithms such as ALERGIA, DSAI, and  MDI, that can learn PDFAs from unlabeled demonstrations \cite{de2010grammatical}.
These algorithms are all based on a principle called \emph{evidence-driven state-merging} (EDSM).
At a high level, EDSM approaches find an  appropriate structure for an automaton $\PA$ and simultaneously estimate the probability distribution parameters $\FinalPA$ and $\deltaPA$ given a set of sample traces~$\traces$. 
This is  achieved by first constructing a large (prefix) tree from the samples, and then repeatedly  merging the states of the tree to form an 
automaton that is as compact as possible while ensuring the acceptance of the demonstrated traces in $\traces$. 
The difference between various algorithms (e.g., ALERGIA and  MDI) is in the choice of the method for merging states.

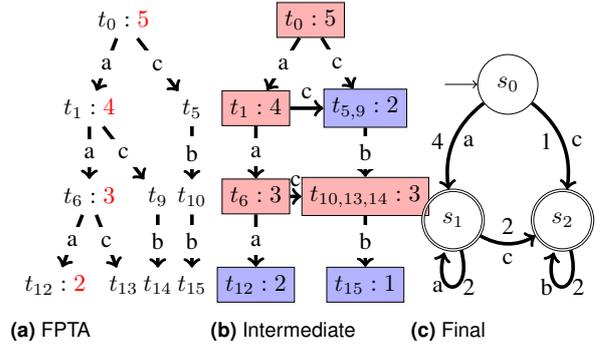
\begin{figure}
\centering
\begin{minipage}[b]{0.3\linewidth}
    \centering
    \scalebox{.9}{
        \begin{tikzpicture}
          \node (p0) at (-3,2) {$t_0: \textcolor{red}{5}$};
        
          \node (p1) at (-3.5,0.7){$t_1: \textcolor{red}{4}$};
          \node (p5) at (-2,0.7){$t_5$};
        
          \node (p6) at (-3.5,-0.6){$t_6: \textcolor{red}{3}$};
          \node (p9) at (-2.5,-0.6){$t_9$};
          \node (p10) at (-2,-0.6){$t_{10}$};
        
          \node (p12) at (-4,-1.9){$t_{12}: \textcolor{red}{2}$};
          \node (p13) at (-3,-1.9){$t_{13}$};
          \node (p14) at (-2.5,-1.9){$t_{14}$};
          \node (p15) at (-2,-1.9){$t_{15}$};
        
          \path[->, line width=1.5] (p0) edge node[midway, fill=white]{a} (p1)
          (p0) edge node[midway, fill=white]{c} (p5)
          (p1) edge node[midway, fill=white]{a} (p6)
          (p1) edge node[midway, fill=white]{c} (p9)
          (p5) edge node[midway, fill=white]{b} (p10)
          (p6) edge node[midway, fill=white]{a} (p12)
          (p6) edge node[midway, fill=white]{c} (p13)
          (p9) edge node[midway, fill=white]{b} (p14)
          (p10) edge node[midway, fill=white]{b} (p15);
        \end{tikzpicture}
    }
    \subcaption{FPTA}
    \label{fig:FPTA}
\end{minipage}
\begin{minipage}[b]{0.3\linewidth}
    \centering
    \scalebox{0.9}{
        \begin{tikzpicture}
          \node[rectangle, draw=black, fill=red!30] (q0) at (-0.2,2) {$t_{0}: 5$};
          \node[rectangle, draw=black, fill=red!30] (q1) at (-1,0.7){$t_{1}: 4$};
          \node[rectangle, draw=black, fill=blue!30] (q2) at (0.6,0.7){$t_{5,9}: 2$};
          \node[rectangle, draw=black, fill=red!30] (q3) at (-1,-0.6){$t_{6}: 3$};
          \node[rectangle, draw=black, fill=red!30] (q4) at (0.6,-0.6){$t_{10,13,14}: 3$};
          \node[rectangle, draw=black, fill=blue!30] (q5) at (-1,-1.9){$t_{12}: 2$};
          \node[rectangle, draw=black, fill=blue!30] (q6) at (0.6,-1.9){$t_{15}: 1$};
        
          \path[->, line width=1.5] (q0) edge  node[midway, fill=white]{a} (q1)
           (q0) edge  node[midway, fill=white]{c} (q2)
           (q1) edge  node[midway, fill=white]{a} (q3)
           (q2) edge  node[midway, fill=white]{b} (q4)
           (q1) edge node[above]{c} (q2)
           (q3) edge node[above]{c} (q4)
           (q3) edge node[midway, fill=white]{a} (q5)
           (q4) edge node[midway, fill=white]{b} (q6);
        \end{tikzpicture}
    }
    \subcaption{Intermediate}
    \label{fig:Itermediate}
\end{minipage}
\begin{minipage}[b]{0.3\linewidth}
    \centering
    \scalebox{0.9}{
        \begin{tikzpicture}

          \node[state,initial](n0) at (3,1.5) {$s_0$};
          \node[state,accepting](n1) at (2.2,-0.5) {$s_1$};
          \node[state,accepting](n2) at (3.8,-0.5) {$s_2$};
          \path[->,line width=1.5pt] (n0) edge[bend right] node[right]{a} node[left]{4} (n1)
          (n1) edge[bend right] node[below]{c} node[above]{2} (n2)
          (n0) edge[bend left] node[right]{c} node[left]{1}  (n2)
          (n1) edge [loop below] node[left]{a} node[right]{2} (n1)
          (n2) edge [loop below] node[left]{b} node[right]{2} (n2);
        
        \end{tikzpicture}
    }
    \subcaption{Final}
    \label{fig:Final}
\end{minipage}
\caption{Schematic illustration of evidence-driven state merging (EDSM) algorithm. (a) A frequency
    prefix tree acceptor (FPTA) is constructed from the given demonstrations with frequencies greater than 1 shown in red; (b) intermediate
    automaton as states are merged according to criteria that differ across various algorithms with frequencies shown at each node; and (c)
    the final frequency DFA (FDFA) that is learned is shown in red.}
\label{fig:EDSM-illustration}
\end{figure}

\Cref{fig:EDSM-illustration} shows a general scheme for an EDSM-based algorithm for learning a PDFA, and \Cref{alg:EDSM} shows the pseudo-code of this algorithm. The initial step is to construct a \textit{frequency prefix tree acceptor} (FPTA)
from the traces 
in $\traces$ (\Cref{fig:FPTA}, and \Cref{line:build-fpta} of \Cref{alg:EDSM}). The next step is to
incrementally merge states of the FPTA, two at a time, based on a
\emph{compatibility} criterion that varies depending on the actual
algorithm (Lines \ref{line:compatible} and \ref{line:merge}). As two states are merged, so are their subtrees in the FPTA 
(\Cref{fig:Itermediate}). The nodes of the
intermediate automata are variously
colored red/blue using a coloring scheme to keep track of how states are selected for merging.
Furthermore, algorithms also maintain frequencies alongside the nodes based on the number
of traces that reach a particular node. These frequencies are also combined
during the state merging process. The final result is a \textit{frequency DFA} (FDFA) wherein
frequencies along edges indicate how often they are taken by a demonstration (\Cref{fig:Final}). The frequencies
of all outgoing edges are normalized to yield a distribution (\Cref{line:convert}).

The various PDFA learning algorithms such as
ALERGIA or MDI differ on how they implement the \emph{compatibility}
check for whether two given nodes can be merged. For instance,
 the ALERGIA algorithm implements a statistical test based on frequencies to compare if two states are compatible, whereas the MDI approach
 first temporarily merges two states and their subtrees, and then checks if a metric computed on automaton  after the
 merge is smaller than that before the merge. If so, then it accepts the merge, otherwise, it rejects the merge.  
 Our goal is to learn a PDFA from $\traces$ while respecting safety property $\varphi_\safe$ by building on the existing PDFA learning algorithms, as described below.

\begin{algorithm}[t]
\SetKwInOut{Input}{Input}
\SetKwInOut{Output}{Output}
\Input{A demonstrated traces $\Omega$, and merge consistency parameter $\alpha$}
\Output{A PDFA}

$\mathcal{A}' \leftarrow \textsc{buildFPTA}(\Omega)$ \label{line:build-fpta}

$\RED \leftarrow \{q_0\}$

$\BLUE \leftarrow \textsc{children}(q_0)$

\While{ $|\BLUE|> 0$}{
    $q_b \leftarrow$ \textsc{choose}($\BLUE$)

    \eIf{$\exists q_r \in \RED \ \& \ \textsc{compatible}(\mathcal{A}', q_r, q_b, \alpha)$} { \label{line:compatible}
        $\mathcal{A}' \leftarrow $ $\textsc{stochasticMerge}(\mathcal{A}', q_r, q_b)$  \label{line:merge}
    }{
        $\RED \leftarrow \RED \cup \ \{ q_b \}$
    }
    $\BLUE \leftarrow \cup_{q \in \RED} \textsc{children}(q)  \setminus \RED$
}
\Return{$\textsc{FDFA2PDFA}(\mathcal{A}'$)} \label{line:convert}

\caption{\textsc{EDSM Algorithm}}
\label{alg:EDSM}
\end{algorithm}

\subsection{Learning with Safety Specification}

We now consider two different approaches for learning with a safety specification. 
The first method is a \emph{post-processing} technique that simply runs the PDFA learning algorithm on the given 
demonstration traces and then subsequently intersects the resulting PDFA with the automaton for the safety property.
The second method incorporates the safety specification during 
the learning process by modifying the EDSM algorithm. In particular, the merges are defined so that the result
continues to satisfy the safety specifications.

\subsubsection{Post-process Algorithm}
\label{sec:postprocess}

From $\varphiSafe$, we first construct a DFA $\A_{\neg \varphiSafe}$ that accepts precisely all those traces that violate the safety property~\cite{kupferman:FMSD:2001} (see \Cref{sec:safeLTL}). 
Then, by complementing $\A_{\neg \varphiSafe}$, we obtain  $\ASafe = \ASafeTuple$ that accepts all the traces that do not violate $\varphiSafe$.  
\rev{
Let  $\PA = \PATuple$ be the PDFA learned from the given demonstration traces without considering the safety property. 
}
We intersect the languages of $\PA$ and $\ASafe$ by constructing a product automaton $\PA_\safe$.

\begin{definition}[Product Automaton]
A product automaton is a tuple $\PA_\safe = \PA \otimes \ASafe = (Q_\safe, \Sigma, q_{0,\safe}, \delta_\safe, F_\safe, \delta_{\PP,\safe}, F_{\PP,\safe})$, where,

\begin{itemize}
	\item $Q_\safe = Q \times Q^s$,
	\item $q_{0,\safe} = (q_0, q_0^s)$, 
	\item $F_\safe = F \times F^s$,
    \item 
    \rev{
    $\delta_{\safe} ((q,q^s),\sigma) = (q',q^{s\prime})$ if $q' = \deltaA(q,\sigma) \wedge q^{s\prime} = \delta^s(q^s,\sigma)$,
    }
    \item $F_{\PP,\safe}((q,q^s)) = F_\PP(q)$,
    \item $\delta_{\PP,\safe}((q,q^s),\sigma,(q',q^{s\prime})) =$
    \begin{equation}
        \begin{cases}
            \frac{\delta_\PP(q,\sigma,q')}{N(q,q^s)} & \hspace{-1mm} \text{if } (q',q^{s\prime}) =  \delta_\safe((q,q^s),\sigma) \\
            0 & \hspace{-1mm} \text{otherwise}
        \end{cases}
        \label{eq:normalization}
    \end{equation}
\end{itemize}
where $N(q,q^s)$ is the normalizing function such that $\sum_{(\sigma, q_\safe') \in (\Sigma \times Q_\safe)} \delta_{\PP,\safe}((q,q^s),\sigma,q_\safe') = 1$.
\end{definition}

The resulting PDFA is guaranteed to be safe due to the intersection of languages. 
However, this method of pruning (imposing safety) as a post-process step 
alters the probability distributions  over the next-state transitions,  since we 
remove the transitions that violate safety and renormalize the probability distribution at each state, as shown in \eqref{eq:normalization}.
This overrides the probability distributions constructed by the original PDFA learning algorithm in an unpredictable manner.  Therefore,
while this method of imposing safety always succeeds, its probability distributions may not reflect the preferences embedded in the demonstrations accurately.

\subsubsection{Safety-Incorporated Learning Algorithm using ``Pre-Processing''} 
\label{sec:preprocess}

Whereas the \emph{post-processing} approach enforces safety after the PDFA is
learned, the pre-processing approach guarantees that the intermediate results always preserve safety, hence preventing alterations to the probability distributions due to unsafety.
The main idea is to build the PDFA that generalizes the demonstrated traces but carries along with it
the information about how the generalization satisfies the safety property $\varphiSafe$ at
the same time in the form of a simulation relation with $\ASafe$.

\begin{definition}[Simulation Relation]
A simulation relation $R$ 
between two automata $\scr{A}$ and $\scr{B}$
is a relation between their states, $R \subseteq Q_\scr{A} \times Q_\scr{B}$
\begin{compactenum}[(a)]
\item  Initial states of $\scr{A}$
relate to the initial states of $\scr{B}$; 
\item If pair $(s,t) \in R$, where $s \in Q_\scr{A}$ and $t \in Q_\scr{B}$, and automaton $\scr{A}$ can transition from $s$ to $s' \in Q_\scr{A}$ on symbol
$\sigma$, then there must exist a state $t' \in Q_\scr{B}$ such that automaton $\scr{B}$
transitions from $t$ to $t'$ on the same symbol $\sigma$ and $(s',t') \in R$; 
\item For each $(s, t) \in R$, if $s$ is final in 
$\scr{A}$ then $t$ must be final in $\scr{B}$.
\end{compactenum}
\end{definition}

\begin{theorem}
Let $R$ be a simulation relation between automata $\scr{A}$ and $\scr{B}$. It follows that $\L(\scr{A}) \subseteq \L(\scr{B})$.
\label{th:simulation-relation}
\end{theorem}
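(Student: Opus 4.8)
The plan is to fix an arbitrary trace $\trace \in \L(\scr{A})$ and show that its run on $\scr{B}$ is accepting, so that $\trace \in \L(\scr{B})$; since $\trace$ is arbitrary, this establishes the inclusion $\L(\scr{A}) \subseteq \L(\scr{B})$. Because both automata are deterministic, the trace $\trace = \traceSequence$ induces a unique run $\run = \runSequence$ on $\scr{A}$ with $z_0$ the initial state of $\scr{A}$ and $z_i = \delta_\scr{A}(z_{i-1}, \trace_i)$, and likewise a unique run $w_0 w_1 \ldots w_n$ on $\scr{B}$. Since $\trace \in \L(\scr{A})$, the run on $\scr{A}$ is accepting, i.e., $z_n \in F_\scr{A}$. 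The core of the argument is to prove, by induction on the position $i$, that the two runs remain $R$-related at every step: $(z_i, w_i) \in R$ for all $0 \leq i \leq n$.

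First I would handle the base case $i = 0$: the states $z_0$ and $w_0$ are the initial states of $\scr{A}$ and $\scr{B}$, so property (a) of the simulation relation gives $(z_0, w_0) \in R$. For the inductive step, I assume $(z_i, w_i) \in R$ and use the transition $z_{i+1} = \delta_\scr{A}(z_i, \trace_{i+1})$ taken by $\scr{A}$ on the symbol $\trace_{i+1}$. Property (b) then asserts that the state reached by $\scr{B}$ from $w_i$ on the same symbol $\trace_{i+1}$ is $R$-related to $z_{i+1}$; because $\scr{B}$ is deterministic, that state is exactly $w_{i+1} = \delta_\scr{B}(w_i, \trace_{i+1})$, and therefore $(z_{i+1}, w_{i+1}) \in R$, which closes the induction.

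Finally, instantiating the inductive conclusion at $i = n$ yields $(z_n, w_n) \in R$. Since $z_n \in F_\scr{A}$, property (c) forces $w_n$ to be final in $\scr{B}$, so the run $w_0 \ldots w_n$ is accepting and $\trace \in \L(\scr{B})$, completing the proof.

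I do not expect a substantive obstacle, as this is a standard simulation argument; the one point that warrants care is the inductive step. Property (b) only guarantees the \emph{existence} of an $R$-related successor in $\scr{B}$, and it is the determinism of $\scr{B}$ that lets me identify this successor with the unique next state $w_{i+1}$ actually visited by the run. Were $\scr{B}$ nondeterministic, I would instead have to build an accepting run of $\scr{B}$ witness by witness rather than appealing to a single predetermined run.
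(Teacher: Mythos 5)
Your proof is correct and follows essentially the same route as the paper's: an induction showing that the runs of $\scr{A}$ and $\scr{B}$ on a common trace remain $R$-related step by step (base case from property (a), inductive step from property (b)), followed by an appeal to property (c) at the final state to conclude acceptance in $\scr{B}$. Your explicit remark that determinism of $\scr{B}$ is what identifies the existential witness from property (b) with the unique run of $\scr{B}$ is a point the paper glosses over, but it is a refinement of the same argument, not a different one.
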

\begin{proof}
The proof is by induction on the string $\trace \in \Sigma^*$ that if $s \xrightarrow{\trace} s'$ and $(s, t) \in R$, then there exists $t'$ such that $t \xrightarrow{\trace} t'$ and $(s', t') \in R$. By definition of simulation relation, $(s_0, t_0) \in R$. For $\trace = \epsilon$ (empty string), $s = s_0$ and $t = t_0$. Since $(s, t) \in R$, it is trivial that $(s', t') \in R$. 
Now, assume the statement holds for a string $\trace$. Consider a string $\trace a$ where $a \in \Sigma$. Let the transitions be $s \xrightarrow{\trace} s' \xrightarrow{a} s''$ and $t \xrightarrow{\trace} t' \xrightarrow{a} t''$. By the induction hypothesis, $(s', t') \in R$. By the definition of simulation relation, $(s'', t'') \in R$. 

If $\trace \in \L(A)$, there exists $s' \in F_A$ such that $s \xrightarrow{\trace} s'$. By the simulation relation, there exists $t' \in F_B$ such that $t\xrightarrow{\trace} t'$. Thus, $\trace \in \L(B)$. Therefore, $\L(A) \subseteq \L(B)$.
\end{proof}

The proof simply shows by induction that for any accepting run corresponding to an input trace $\trace$ in automaton $\scr{A}$ from the initial state to a final state,  there
exists an accepting run  in $\scr{B}$ for the same trace $\trace$ from its initial state to the final state. The relation $R$ allows us to construct
such a run.

The key idea behind the \emph{pre-process} approach is to maintain a simulation relation between the FDFA and safety automaton $\ASafe$ at all intermediate states. The key is to restrict the merging of states so that  we can guarantee that a
simulation relation between the original automaton and $\ASafe$ before
merging can be modified to yield a simulation relation between the merged
automaton and $\ASafe$ afterwards.

Formally, we build the safety FPTA 
by augmenting the initial FPTA so
that each state is now a tuple of the form $(t_j, s_k)$ wherein $t_j$ is a node
in the original FPTA and $s_k$ is the state in $\ASafe$
reached when the prefix that leads upto the
state $t_j$ is run through $\ASafe$. 
Thus, we ensure that every branch not only corresponds to a demonstration but also to a valid trace in $\ASafe$.

Let $R$ be a relation between states of the FPTA and $\ASafe$ that contains all nodes $(t_j, s_k)$ in the safety FPTA.
\begin{lemma}
    \label{lemma1}
  Assuming no demonstration trace violates the safety property $\varphiSafe$, then $R$ is a simulation relation between the initial FPTA and the automaton $\ASafe$.
\end{lemma}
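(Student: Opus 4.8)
The plan is to verify directly the three defining conditions (a)--(c) of a simulation relation for the relation $R$, exploiting the fact that $R$ is defined to be exactly the set of node-pairs $(t_j, s_k)$ appearing in the safety FPTA, and that the construction synchronizes each FPTA node $t_j$ with the state $s_k \in \ASafe$ reached by running the unique prefix that leads to $t_j$. The backbone of the argument is a bookkeeping invariant that comes for free from the construction: for every node $(t_j, s_k) \in R$, the state $s_k$ equals $\deltaSafe^*(\qInitSafe, u)$, where $u$ is the labeling of the (unique) path from the root to $t_j$ in the FPTA and $\deltaSafe^*$ denotes the extension of $\deltaSafe$ to strings. All three conditions will be read off from this invariant.

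First I would dispatch the two easy conditions. For condition (a), the root of the FPTA corresponds to the empty prefix, which $\ASafe$ maps to its own initial state $\qInitSafe$; hence $(t_0, \qInitSafe)$ is a node of the safety FPTA and therefore lies in $R$, relating the initial states. For condition (c), I would observe that, by the construction of the safety DFA in \Cref{sec:safeLTL}, \emph{every} state of $\ASafe$ is accepting. Consequently, whenever $t_j$ is a final (demonstration-terminating) node of the FPTA and $(t_j, s_k) \in R$, the companion state $s_k \in \FinalSafe$ holds automatically, so (c) is satisfied with no further work.

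The substantive step is condition (b). Take any $(t_j, s_k) \in R$ and any symbol $\sigma$ on which the FPTA transitions from $t_j$ to a child $t_{j'}$. By the synchronization invariant, $s_k$ is the $\ASafe$-state reached by the prefix $u$ to $t_j$, and the prefix to $t_{j'}$ is precisely $u\sigma$. The claimed witness is $s_{k'} = \deltaSafe(s_k, \sigma)$: provided this transition is defined, the construction guarantees that $(t_{j'}, s_{k'})$ is a node of the safety FPTA and hence belongs to $R$, which establishes (b).

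The main obstacle, and the only place where the hypothesis is actually used, is showing that $\deltaSafe(s_k, \sigma)$ is in fact defined. The subtlety is that $\ASafe$ is in general a \emph{partial} automaton: transitions that would lead to a rejecting sink are discarded when the safety DFA is minimized (as in \Cref{fig:safetyDFA}, where no $\set{\texttt{coral}}$-edge leaves $q_1$). I would resolve this using the assumption that no demonstration trace violates $\varphiSafe$: every FPTA edge lies on the path of some demonstration in $\traces$, so $u\sigma$ is itself a prefix of a demonstration and therefore does not violate $\varphiSafe$. A non-violating string admits a full run in $\ASafe$, so $\deltaSafe^*(\qInitSafe, u\sigma) = \deltaSafe(s_k,\sigma)$ is defined, closing the gap. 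With well-definedness secured, conditions (a)--(c) together yield that $R$ is a simulation relation between the initial FPTA and $\ASafe$.
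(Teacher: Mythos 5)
Your proof is correct and rests on the same underlying idea as the paper's (much terser) proof: since no demonstration violates $\varphiSafe$ and $\ASafe$ accepts exactly the non-violating traces, every prefix stored in the FPTA has a run in $\ASafe$, so the synchronized pairing $R$ satisfies the definition of a simulation relation. Your write-up is considerably more careful than the paper's one-line argument --- in particular, your observation that the minimized $\ASafe$ is a partial automaton, so one must use the hypothesis to show $\deltaSafe(s_k,\sigma)$ is even defined, and your appeal to the fact that every state of $\ASafe$ is accepting to discharge the finality condition, are exactly the details the paper glosses over.
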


\begin{proof}
By definition, positive traces (in the initial FPTA) must be simulated by the automaton $\ASafe$. Thus, the FPTA and $\ASafe$ have a simulation relation.
\end{proof}

We can represent any intermediate FDFA state in the form $(T, s)$ wherein $T$
is a set of states from the initial FPTA, and $s$ is state in $\ASafe$. Next, we modify the EDSM approach to allow a merge between two states $(T_i, s_k)$ and $(T_j, s_l)$ only if $s_k = s_l$. The result of the merge creates a state
$(T_i \cup T_j, s_k)$.

\begin{lemma}
    \label{lemma2}
  Let $\scr{A}_1$ and $\scr{A}_2$ be the automata before and after an EDSM merge that
  is compatible with respect to the $\ASafe$ states. Let $R_1$ be the
  relation between the states of $\scr{A}_1$ and those of $\ASafe$ that
  is a simulation relation. 
  We can construct a simulation relation $R_2$ between the states of $\scr{A}_2$ and $\ASafe$.
\end{lemma}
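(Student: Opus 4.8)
The plan is to exhibit the relation $R_2$ explicitly from the $\ASafe$-components carried by the states of $\scr{A}_2$, and then verify the three clauses of the simulation-relation definition, with essentially all the work falling on the transition-matching clause. Recall that every state of $\scr{A}_1$ (and of $\scr{A}_2$) has the form $(T,s)$, where $T$ is a set of FPTA nodes and $s \in \QSafe$, and that, following the construction in the text, $R_1$ relates each state $(T,s)$ of $\scr{A}_1$ to its component $s$, i.e.\ $R_1 = \{((T,s),s) : (T,s) \text{ a state of } \scr{A}_1\}$. I would define $R_2$ in exactly the same shape, $R_2 = \{((T,s),s) : (T,s) \text{ a state of } \scr{A}_2\}$. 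Since the merge of $(T_i,s)$ and $(T_j,s)$ produces $(T_i \cup T_j, s)$ with the same $\ASafe$-component $s$, every state of $\scr{A}_2$ still carries a well-defined component, so $R_2$ is well-defined.

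Clauses (a) and (c) are immediate. The initial state is either untouched by the merge or, if it is the red target, retains $\ASafe$-component $\qInitSafe$; in both cases it relates to $\qInitSafe$, giving (a). For (c), recall that $\ASafe$ is the (minimized) safety automaton in which every state is accepting, so $\FinalSafe = \QSafe$; hence whenever $(T,s)$ is final in $\scr{A}_2$, its related state $s$ is automatically final in $\ASafe$. (Even without this, the assumption of \Cref{lemma1} that no demonstration violates safety guarantees that every $\ASafe$-component reachable in the FPTA, and therefore in $\scr{A}_2$, is a non-trap, accepting state.)

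The heart of the argument is clause (b), and here I would isolate the invariant that makes the $s_k = s_l$ merge restriction do its work: every transition $(T,s) \xrightarrow{\sigma} (T',s')$ of $\scr{A}_2$ satisfies $s' = \deltaSafe(s,\sigma)$. Granting this invariant, clause (b) follows at once, since $\ASafe$ then has the matching transition $s \xrightarrow{\sigma} s'$ and $R_2$ relates $(T',s')$ to $s'$. To establish the invariant I would classify the edges of $\scr{A}_2$: edges untouched by the merge inherit the property from $R_1$; an incoming edge $(q,\sigma,(T_j,s))$ redirected to the merged state $(T_i \cup T_j, s)$ still points to a state with $\ASafe$-component $s$, so it matches the very same $\ASafe$-transition it matched in $\scr{A}_1$ (this is precisely where equality of the two merged components is used); and an outgoing edge of the merged state on $\sigma$ leads to the fold of the $\sigma$-successors of $(T_i,s)$ and $(T_j,s)$, both of which had component $\deltaSafe(s,\sigma)$ in $\scr{A}_1$, so the folded target again has component $\deltaSafe(s,\sigma)$.

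The hard part will be the cascading nature of EDSM merges: folding the subtree of the blue state into the red state can trigger further merges of their $\sigma$-successors, and so on recursively down the trees. I would handle this by induction on the fold: each induced sub-merge again combines two states whose $\ASafe$-components coincide (both equal $\deltaSafe$ of the parent's component on the common symbol), so well-definedness of $R_2$ and the invariant above are preserved at every level of the recursion. Once the invariant is shown to survive every elementary fold, clause (b) holds for all edges of $\scr{A}_2$, and together with (a) and (c) this shows that $R_2$ is a simulation relation between $\scr{A}_2$ and $\ASafe$, as claimed.
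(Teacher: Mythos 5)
Your proposal is correct and takes essentially the same route as the paper's own proof: both rest on the observation that, since merges are restricted to states with identical $\ASafe$-components, every transition of the merged automaton $\scr{A}_2$ is still matched by a transition of $\ASafe$, so the componentwise relation remains a simulation. Your version is simply a more rigorous elaboration---explicitly defining $R_2$, isolating the invariant $s' = \deltaSafe(s,\sigma)$, and handling the cascading sub-merges of the EDSM fold by induction---of what the paper compresses into the single sentence that merging nodes with the same safety state ``always keeps these mappings.''
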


\begin{proof}
Assume $\scr{A}_1$ and $\ASafe$ have a relation $R_1$. From \Cref{th:simulation-relation}, $\L(\A_1) \subseteq \L(\ASafe)$. Any transition in $\scr{A}_1$ $\bigl((s_{k-1} \rightarrow s_k)$, $(s_k \rightarrow s_k)$ and $(s_k \rightarrow s_{k+1})\bigr)$ can be simulated in $\ASafe$. Merging two nodes with the same safety states always keep these mappings. Therefore, any transition in $\scr{A}_2$ can be simulated in $\ASafe$. Thus, $R_2$ is a simulation relation.
\end{proof}

Combining Lemmas \ref{lemma1} and \ref{lemma2}, we conclude by induction on the number of merging steps that the final resulting PDFA must have a simulation relation to the safety automaton $\ASafe$.  Since we have a simulation relation, we conclude that the language of the final resulting FDFA and PDFA are contained in the that of $\ASafe$, i.e., the resulting PDFA does not accept a trace that violates $\varphiSafe$.

\section{Reactive Strategy Synthesis with PDFA}
\label{sec:controlsynthesis}

Once a task is learned as a PDFA $\tilde{\PA}$, we are interested in synthesizing a strategy to accomplish the learned task in a dynamic environment. 
At first, we reduce the reactive synthesis problem to a reachability game problem. 
To do so, we take a product of game $\G$ and PDFA $\tilde{\PA}$ to construct a Product Game that captures all possible plays that can achieve the task in $\G$. The strategy synthesis problem then turns into a quantitative (multi-objective) reachability game to guarantee that all the plays reach the accepting state with (Pareto optimal) payoffs. 


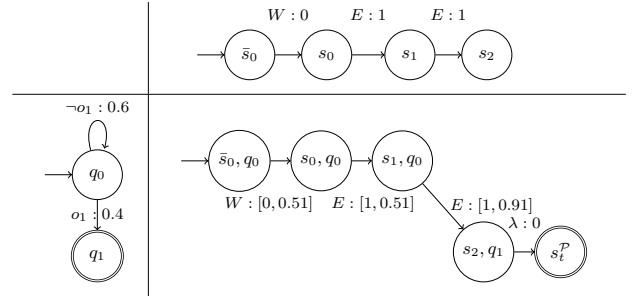
\begin{figure}
    \centering
    \scalebox{.75}{\begin{tabular}{c|c}
&
\begin{tikzpicture}[fontscale/.style={font=\small}]
    \matrix[every node/.style={font=\small}, row sep=15pt, column sep=13pt]{
        \node[state, initial](n12){$\augGInit$}; &
        \node[state, label={above right:{}}](n13){$s_0$}; &
        \node[state](n14){$s_1$}; &
        \node[state](n15){$s_2$}; &
        \node[draw=none](n16){}; &
        \node[draw=none](n17){}; \\
    };
    \draw[->,font=\footnotesize]
        (n12) edge[above] node[yshift=5mm]{$W:0$} (n13)
        (n13) edge[above] node[yshift=5mm]{$E:1$} (n14)
        (n14) edge[above] node[yshift=5mm]{$E:1$} (n15);
\end{tikzpicture} \\
\hline
\begin{tikzpicture}[fontscale/.style={font=\small}]
    \matrix[every node/.style={font=\small}, row sep=15pt, column sep=10pt]{
        \node(n21)[state, initial]{$q_0$}; \\
        \node(n31)[state, accepting]{$q_1$};\\
    };
    \draw[->,font=\footnotesize]
            (n21) edge[loop above] node{$\lnot o_1: 0.6$} (n21)
            (n21) edge node{$o_1: 0.4$} (n31);
\end{tikzpicture} &
\begin{tikzpicture}[fontscale/.style={font=\small}]
    \matrix[every node/.style={font=\small}, row sep=15pt, column sep=10pt]{
        \node[state, initial](n22){$\augGInit, q_0$}; &
        \node[state](n23){$s_0, q_0$}; &
        \node[state](n24){$s_1, q_0$}; &
        \node[draw=none](n25){}; &
        \node[draw=none](n26){}; & \\
        \node[draw=none](n32){}; &
        \node[draw=none](n33){}; &
        \node[draw=none](n34){}; & 
        \node[state](n35){$s_2, q_1$}; &
        \node[state, accepting](n36){$\GPdTerm$};\\
    };
    \draw[->,font=\footnotesize]
        (n22) edge[below] node[xshift=-2mm, yshift=-5mm]{$W:[0,0.51]$} (n23)
        (n23) edge[below] node[xshift=2mm, yshift=-5mm]{$E:[1,0.51]$} (n24)
        (n24) edge[right] node{$E:[1,0.91]$} (n35)
        (n35) edge[above] node[yshift=3mm]{$\lambda: 0$} (n36);
\end{tikzpicture}
\end{tabular}}
    \caption{
    Schematic of game product construction. The left automaton represents a PDFA $\tilde{\PA}$ and the top graph represents an augmented game graph $\augG$. The game product in the center is constructed by taking a product of the two. $E$ represents an action "East", $\lambda$ is a random action, and $o_1$ is an observation at $\GPdstate_2$ that satisfies the guard between $q_0$ and $q_1$ in $\tilde{\PA}$.}
    \label{fig:product_construction}
\end{figure}

\subsection{Product Construction}
First, we augment the game graph $\G$ with a new initial state $\augGInit$ and a transition to the original initial state $\GInit$.
Formally, we define $\augG = \augDTGTuple$, where $\augGState=\GState \cup \{\augGInit\}$, and $\augGDelta(s, a) = \GInit$ if $s=\augGInit$, otherwise $\augGDelta(s, a) = \GDelta(s, a)$ $\forall a \in \augGAction$.
This augmentation ensures that the label of $\GInit$ is correctly observed when taking a product with $\tilde{\PA}$.
An example of $\augG$ is shown in the top row of \Cref{fig:product_construction} and $\tilde{\PA}$ on the left column.
Given the two, we construct a \textit{multi-weighted game graph},

\begin{definition}[Product Game]
    Given augmented game $\augG = \augDTGTuple$ and PDFA $\tilde{\PA} = (Q,\Sigma,q_0,\delta_\A,F, \delta_\PP, F_\PP)$,
    the \textit{product game} is a tuple $ \GPd = \augG \times \tilde{\PA} = \GPdTuple$, where
	\begin{itemize}
		\item $\GPdState=(\augGState \times Q) \cup \{ \GPdTerm \} $ is a set of states (nodes),
		\item $\GPdAction$ is a finite set of controls or actions,
		\item $\GPdInit = (\augGInit, q_0) \in \GPdState$ is the initial state,
		\item $\GPdTerm \in \GPdState$ is the terminal state,
		\item $\GPdEdge \subseteq \GPdState \times \GPdAction \times \GPdState$ is a set of edges, and
		\item $\GPdWeight: \GPdEdge \rightarrow \mathbb{R}^{m+1}_{\geq 0}$ is a weighting function over edges.
	\end{itemize}
        The constructions of $\GPdEdge$ and $\GPdWeight$ are as follows.
    \begin{itemize}
        \item 
        \rev{
        Edge $e = ((s, q), a, (s', q')) \in \GPdEdge$ if $q'=\deltaA(q, \GLabel(s'))$ and $s'=\augGDelta(s, a)$.  Then, the multi-objective edge weight is $\GPdWeight((s, q), a, (s', q')) = \bigl(\GWeight(s, a), - \mbox{log}(\deltaPA(q, \GLabel(s'), q'))\bigr).$ 
        }
        \item Edge $e = ((s, q), a, \GPdTerm)) \in \GPdEdge$ \ if \ $\FinalPA(q) > 0$. Then, its weight $\GPdWeight((q, s), a, \GPdTerm) = \big(\vec{0}, -\mbox{log}(\FinalPA(q)) \big)$, where 
        $\vec{0}$ is a vector of zeros of length $m$.
    \end{itemize}
\end{definition}

Product game $\GPd$ captures the constraints of both the robot and task along with the robot/environment costs and demonstrator's preference. Let
$\pathGPd = \GPdInit \GPdstate_1 \ldots \GPdstate_n \GPdTerm = (\augGInit, q_0) (\GInit, q_1) \ldots (s_{n-1}, q_{n}) \GPdTerm$
be a path over $\GPd$.
The projection of this path (with the deletion of $\GPdTerm$) onto $\tilde{\PA}$ is an accepting run $q_0q_1\ldots q_n$. 
The projection of $\pathGPd$ on $\G$ is play $\play = s_0s_1\ldots s_{n-1}$ that generates the accepting observation trace  
$\trace = \observation = L(s_0)L(s_1) \ldots L(s_{n-1})$ that induces run $q_0q_1\ldots q_n$.
%
Furthermore, the total payoff of the path $\pathGPd$ is
\begin{align}
    \nonumber
    & \totalpayoff( \pathGPd ) \\
    \nonumber
    &= 
    \sum_{i=0}^{n-1} \GPdWeight(\GPdstate_{i}, a_i, \GPdstate_{i+1}) + \GPdWeight(\GPdstate_{n}, a_{n}, \GPdstate_{t}) \\
    \nonumber
    &= \Big(
    \sum_{i=0}^{n-1} \GWeight(s_i, a_i), \; -\sum_{i=0}^{n-1} \mbox{log} \big( \deltaPA(q_i, \GLabel(s_{i}), q_{i+1}) \big) -\\
    \nonumber
    & \hspace{57mm} \mbox{log}(\FinalPA(q_n)) \Big) \\
    \nonumber
    & = \Big(\totalpayoff (\play), \; -\mbox{log}\big(\prod_{i=0}^{n-1} \deltaPA(q_i, \GLabel(s_{i}), q_{i+1}) \cdot \FinalPA(q_n) \big) \Big) \\
    \label{eq: total cost of prod game}
    & = \Big(\totalpayoff (\play), \; - \mbox{log} \big( P ( L(\play) ) \big) \Big).
\end{align}


Therefore, to compute a robot strategy that produces accepting traces in $\L(\PA) \cap \L(\G)$, we need to find a strategy on $\GPd$, under which every play of the game for every environment strategy reaches the terminal state $\GPdTerm$.  Such a robot strategy is called \textit{winning}.
Specifically, among all the winning strategies, we require the ones that produce Pareto optimal costs; hence, solving Problem 1.2.

\subsection{Pareto Front Computation}
\label{sec:pareto points alg}

\rev{
Here, we focus on generating the set of all (Pareto) optimal values (Pareto front)
\cite{chen2013stochastic}.
}
First, we formally define Pareto front using the notion of dominance on vectors.
\begin{definition}[Dominance]
    \label{def:dominance}
    Given two vectors $v,v' \in \mathbb{R}^{m+1}_{\geq 0}$, we say 
    $v$ dominates $v'$, denoted by $v \succeq v'$, if $v_i \leq v'_i$ for every $0\leq i \leq m+1$, where $v_i$ is the $i$-th element of $v$.  Vector 
    $v$ \emph{strictly} dominates $v'$, denoted by $v \succ v',$ if $v_i < v_i'$ for all $0\leq i \leq m+1$.
\end{definition}
%
%
\begin{definition}[Pareto front]
    \label{def:pareto-front}
    Given a robot strategy $\strategy_\Sys$, denote the maximum total payoff that can be enforced by the environment by $v^*_{\strategy_\Sys}$, i.e.,
    $$ v^*_{\strategy_\Sys} = \max_{\strategy_\Env} \totalpayoff(\PlaysOf(\strategy_\Sys, \strategy_\Env)).$$
    We say that $\strategy_\Sys$ is a \emph{Pareto optimal} strategy if there does not exist another robot strategy $\strategy'_\Sys$ whose maximum total payoff $v^*_{\strategy'_\Sys}$ strictly dominates $v^*_{\strategy_\Sys}$, i.e.,
    $$ \nexists \strategy_\Sys' \; \text{ s.t. } \; v^*_{\strategy_\Sys'} \succ v^*_{\strategy_\Sys}.$$ 
    Then, $v^*_{\strategy_\Sys}$ is called a \emph{Pareto point}. The set of all Pareto points is called the \emph{Pareto front} $\ppSet$.
\end{definition}

With this definition and the total payoff equivalence in \eqref{eq: total cost of prod game}, Problem 1.2 reduces to generating the Pareto front and the corresponding optimal strategies on product game $\GPd$.  We first present an algorithm for Pareto front generation. Then, given a choice of a Pareto point, we can compute the corresponding strategy as discussed in Section~\ref{sec: strategy synthesis}.

Now, we present a polynomial algorithm to compute the Pareto points under all the winning strategies on $\GPd$ using a value iteration approach.  
The pseudocode is shown in \Cref{alg:pareto_points_computation}.
It uses the Pareto greatest fixed operator $\fpoperator_p$, defined below. 
Given set $V$, let $\ppSet(V) = \{v \in V \mid \nexists v' \in V \text{ s.t. } v' \succ v \}$ be the Pareto front of $V$, i.e., set of all Pareto points in $V$. Further, let $\tpSet(\GPdstate) \subset \mathbb{R}^{m+1}$ denote a set of total payoff vectors for plays initialized at state $\GPdstate$, and define the \textit{upper set} of vector $v \in \mathbb{R}^{m+1} \cup \{\vec{\infty}\}$ to be the set of vectors that are dominated by $v$, i.e., 
$$ \mathrm{upset}(v) = \{ v' \in \mathbb{R}^{m+1}_{\geq 0} \cup \{\vec{\infty}\} \mid v \succeq v' \}. $$ 
Then, we define $\fpoperator_\ppSet \big(\tpSet(\GPdstate) \big) = \ppSet (\fpoperator\big(\tpSet(\GPdstate) \big))$, where
\begin{align*}
    \label{eq: F-operator}
    &\fpoperator\big(\tpSet(\GPdstate) \big) = \nonumber \\
    &\left\{
    \begin{array}{ll}
     \!\!\!\! \bigcup\limits_{\substack{(\GPdstate, a, \GPdstateprime) \in \GPdEdge \\u \in \tpSet(\GPdstateprime)}} \!\!\!\!\!\!\!\! \mathrm{upset}(\GPdWeight(\GPdstate, a, \GPdstateprime) + u) & \text{if } \GPdstate \!\! \in \SysState \\
     \!\!\!\! \bigcap\limits_{\substack{(\GPdstate, a, \GPdstateprime) \in \GPdEdge, \\u \in \tpSet(\GPdstateprime)}} \!\!\!\!\!\!\!\! \mathrm{upset}(\GPdWeight(\GPdstate, a, \GPdstateprime) + u) & \text{if } \GPdstate \!\! \in \EnvState \\
    \end{array}
    \right.
\end{align*}

Intuitively, $\fpoperator_\ppSet$ back-propagates the set of total payoffs of the successor states and keeps only the Pareto optimal points. 
The upper set of the Pareto points represents the superset of all total payoffs that the robot can achieve. 
First, $\fpoperator$ takes the intersection/union operations on these sets to account for what is possible at the environment/robot nodes.  Specifically,
at the environment nodes, $\fpoperator$ takes the intersection of the upper sets to account for the worst (maximum) choice of the environment into account. This guarantees that the robot can always maintain the total costs lower than these values. Similarly, at the robot nodes, $\fpoperator$ takes the union of the sets to account for all of its choices.  Then, $\ppSet$ extracts the Pareto front of the resulting sets to complete one iteration of $\fpoperator_\ppSet$.  

A visualization of the above operation is shown in \Cref{fig:set_operation}.
The orange and green regions represent the upper set of the Pareto points at the successor nodes. The regions are then expanded by adding the edge weights. The left figure shows the union operation and the right figure shows the intersection operation.  The resulting sets are shown in gray.  Notice that the union operation maintains the largest possible set whereas the intersection operation shrinks the region. The resulting Pareto points are the vertices of the gray regions.

The algorithm initializes $\tpSet(\GPdstate)$ to a vector of zeros for the terminal state $\GPdstate = \GPdTerm$ and to infinity for all the other states $\GPdstate \in \GPdState \backslash \{\GPdTerm\}$. Then, it applies operator $\fpoperator_\ppSet$ recursively to back-propagate the Pareto points until the solution converges.
Note that applying $\fpoperator_p$ back-propagates costs from the terminating state to the initial state, exploring all plays and identifying all possible total payoffs.  The algorithm terminates when $\tpSet(\GPdstate)$ converges. 

Through this method, all those states that have finite values, i.e., $\max \tpSet(\GPdstate) < \infty $, are in the winning region, i.e., there exists a winning strategy under which all the plays initialized at $\GPdstate$ reach terminal state $\GPdTerm$.  For the rest of the states, a winning strategy does not exist.  Therefore, the obtained 
$\tpSet(\GPdInit)$, if finite, is the Pareto front for the initial states under only winning strategies, solving Problem~1.2.

\SetKwComment{Comment}{/* }{ */}
\begin{algorithm}[tb]
\SetKwInOut{Input}{Input}
\SetKwInOut{Output}{Output}
\Input{A Game Product Graph $\GPd = \GPdTuple$, Convergence margin $\epsilon$}
\Output{Pareto Points at $\GPdInit$}


$\tpSet(\GPdstate) \gets \{\vec{\infty}\} \;\; \ \forall \GPdstate \in \GPdState \backslash \{\GPdTerm\}$\; \label{line:pareto_initialization} 

$\tpSet(\GPdTerm) \gets \{\vec{0}\}$  \;




\While{$\tpSet$ is not converged 
}{



    \For{$\GPdstate \in \GPdState$}{

        $\tpSet'(\GPdstate) \gets \fpoperator_\ppSet (\tpSet(\GPdstate))$;
    
        
            
        
        
            
            
    }
    $\tpSet \gets \tpSet' $
}

\Return {$\tpSet(\GPdInit)$}
\caption{Pareto Points Computation}
\label{alg:pareto_points_computation}
\end{algorithm}

\begin{figure}
    \centering
    \begin{subfigure}[b]{0.45\linewidth}
        \centering
        \includegraphics[width=\linewidth]{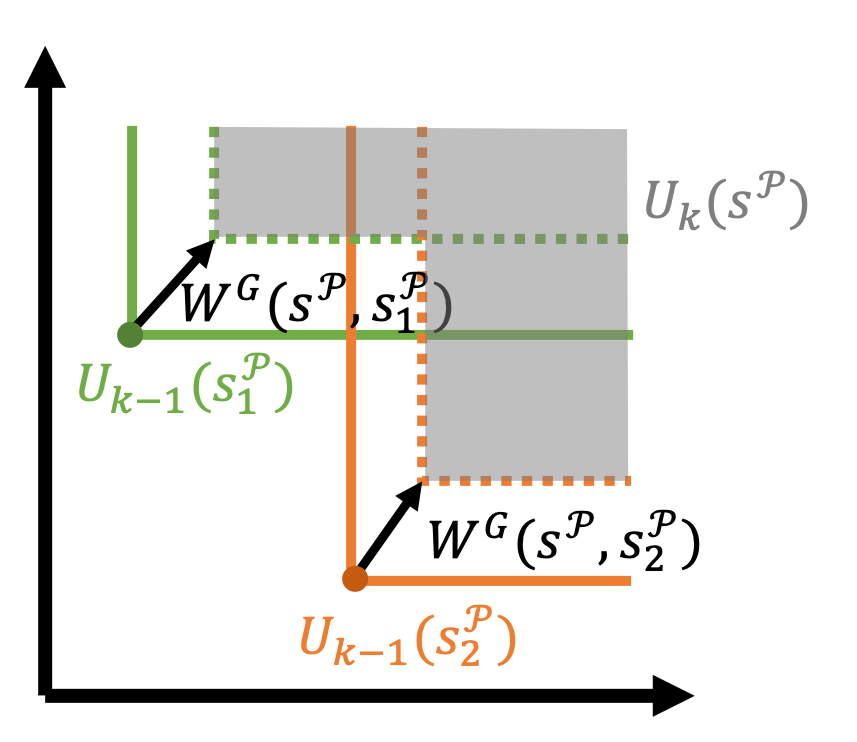}
        \caption{Union of two sets}
        \label{fig:union}
    \end{subfigure}
    \hfill
    \begin{subfigure}[b]{0.45\linewidth}
        \centering
        \includegraphics[width=\linewidth]{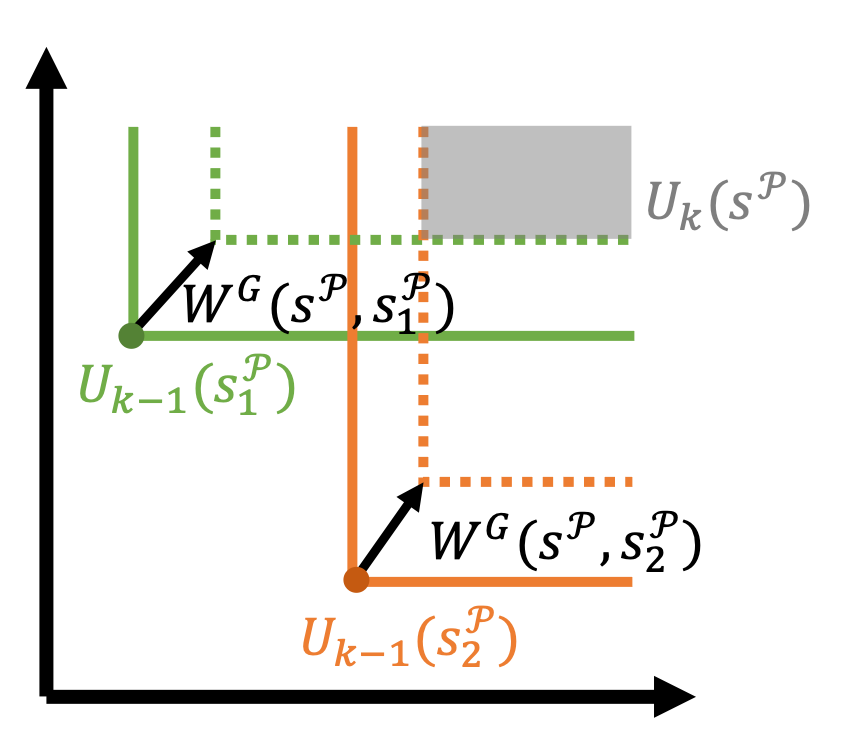}
        \caption{Intersection of two sets}
        \label{fig:intersection}
    \end{subfigure}
    \caption{The depiction of the two different set operations at node $s$ at iteration $k$ where $s_1$ and $s_2$ are its children.}
    \label{fig:set_operation}
    \vspace{-5mm}
\end{figure}

\begin{myexam}
\Cref{fig:pareto_points_compuation} illustrates an example of the product game. The values in square brackets represent edge weights, and those without any edge weights are assumed to be zero. 
Initially, the Pareto points at each node are set to infinities, while at terminating state $\GPdTerm$, they are set to zeros. 
The computation of Pareto points begins by applying $\fpoperator_p$ on $\GPdTerm$ and then recursively propagating back the Pareto front to the initial state $(\augGInit, q_0)$. 
In the first iteration, the total payoff at $(\Gstate_9, q_1)$ gets updated from infinities to zeros (zero weights and zero costs at $\GPdTerm$). 
\rev{
Next, those at nodes $(\Gstate_i, q_0)$ for $i\in \{5,\ldots,8\}$ are also updated to zeros.  
Note that, at $(\Gstate_4, q_0)$, the environment player can force a self-loop, ensuring a win against the system. This results in a value of $\tpSet((s_4,q_0)) = \{\vec{\infty}\}$.}

At $(\Gstate_2, q_0)$, the system can choose $(\Gstate_4, q_0)$, $(\Gstate_5, q_0)$, or $(\Gstate_6, q_0)$.  
By taking the union of the upper sets of the Pareto points of the successor states, we obtain the Pareto point of $(5,5)$. Note that by taking the union, infinity values are subsumed in the upper set of the point $(5, 5)$. In other words, $(5, 5)$ dominates the infinity values so the infinity costs are no longer considered. Similarly, the Pareto points at $(\Gstate_3, q_0)$ are (1,10) and (10,1). 
\rev{
In the next iteration, at environment state $(\Gstate_1, q_0)$, 
we take the intersection of the upper sets of $(5, 5)$ with the union of the upper sets of $(1,10)$ and $(10,1)$, resulting in Pareto points $(5, 10)$ and $(10,5)$.  In the two iterations, those values are propagated to $(\augGInit,q)$, and convergence is achieved.
}
\end{myexam}

\begin{figure}
    \centering
    \scalebox{.75}{\begin{tikzpicture}[
fontscale/.style = {font=\small},
node distance=40pt and 50pt,
every node/.style={}
]
\node[state, circle, initial, label={below:$\{(5,10),(10,5)\}$}](n1){$\augGInit, q_0$};
\node[state, circle, right of=n1](n2){$s_0, q_0$};
\node[state, rectangle, right of=n2, label={above:$\{(5,10),(10,5)\}$}](n3){$s_1, q_0$};
\node[state, circle, above right of=n3, yshift=5mm, label={above:$\{(5,5)\}$}](n4){$s_2, q_0$};
\node[state, circle, below right of=n3, yshift=-5mm, label={below:$\{(1,10),(10,1)\}$}](n5){$s_3, q_0$};
\node[state, rectangle, xshift=5mm, right of=n4](n7){$s_5, q_0$};
\node[state, rectangle, above of=n7, yshift=-3mm](n6){$s_4, q_0$};
\node[state, rectangle, below of=n7, yshift=3mm](n8){$s_6, q_0$};
\node[draw=none, xshift=5mm, right of=n5](n0){};
\node[state, rectangle, above of=n0, yshift=-5mm](n9){$s_7, q_0$};
\node[state, rectangle, below of=n0, yshift=5mm](n10){$s_8, q_0$};
\node[state, circle, right of=n8](n11){$s_9, q_1$};
\node[state, circle, accepting, right of=n11](n12){$\GPdTerm$};

\draw[font=\footnotesize]
    (n1) edge (n2)
    (n2) edge (n3)
    (n3) edge (n4)
    (n3) edge (n5)
    (n4) edge (n6)
    (n4) edge[above] node{$[5,5]$} (n7)
    (n4) edge[below] node{$[5,5]$} (n8)
    (n5) edge[above] node{$[10,1]$} (n9)
    (n5) edge[above] node{$[1,10]$} (n10)
    (n6) edge[loop above] (n6)
    (n6) edge (n11)
    (n7) edge (n11)
    (n8) edge (n11)
    (n9) edge (n11)
    (n10) edge (n11)
    (n11) edge (n12)
    ;
\end{tikzpicture}}
    \caption{
    \rev{
    Schematic of Pareto point computation.
    Circle and square nodes represent system and environment states, respectively.
    Edge weights are shown in square brackets, and Pareto points at nodes are enclosed in curly brackets.
    Edges without displayed weights are assumed to have weight zero.}
    }
    \label{fig:pareto_points_compuation}
\end{figure}
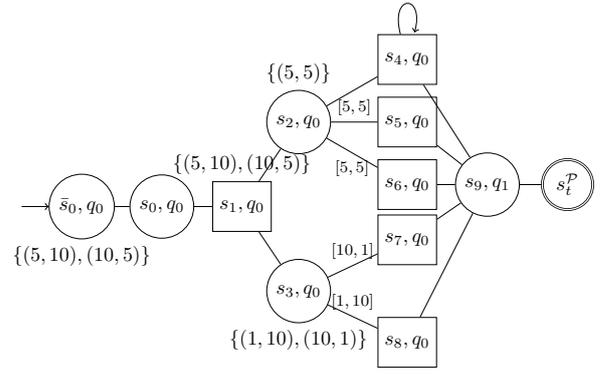

Below, we prove the completeness and runtime complexity of this algorithm. 
Specifically, we show that $\fpoperator_\ppSet$ operator is contractive and its greatest fixed-point is the true Pareto front and achieved in finite time. For simplicity, we overload the dominance operator $\succeq$ to apply to sets of total payoffs, i.e., we write $U' \succeq U$ if $u' \succeq u$ for all $u' \in U'$ and for all $u \in U$. 

\begin{lemma}
Given game product $\GPd$ and operator $\fpoperator_\ppSet$, let $\tpSetAfter{i}(\GPdstate)$ denote the set of total payoff points at state $\GPdstate \in \GPdState$ obtained in the $i$-th iteration of Alg.~\ref{alg:pareto_points_computation}.  Then, the total payoff points in $\tpSetAfter{i+1}(\GPdstate) = \fpoperator_\ppSet(\tpSetAfter{i}(\GPdstate))$ dominate the total payoff points in $\tpSetAfter{i}(\GPdstate)$ for every $\GPdstate \in \GPdState$, 
i.e., 
$$ \tpSetAfter{i+1}(\GPdstate) \succeq \tpSetAfter{i}(\GPdstate) \qquad \forall \GPdstate \in \GPdState.$$
%
%
%
\label{lemma: monotonic}
\end{lemma}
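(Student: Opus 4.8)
The statement is a monotonicity (contractivity toward the greatest fixed point) result for the value-iteration operator $\fpoperator_\ppSet$. The natural approach is induction on the iteration count $i$, combined with a structural case analysis on the type of node ($\GPdstate \in \SysState$ versus $\GPdstate \in \EnvState$). I would first establish the base case, then prove the inductive step by showing that the set operations defining $\fpoperator$ preserve the dominance ordering $\succeq$ (as overloaded to sets), and finally argue that extracting the Pareto front via $\ppSet(\cdot)$ does not destroy dominance.

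\textbf{Base case.} At $i=0$, the algorithm initializes $\tpSetAfter{0}(\GPdstate) = \{\vec\infty\}$ for all $\GPdstate \neq \GPdTerm$ and $\tpSetAfter{0}(\GPdTerm)=\{\vec 0\}$ (Lines \ref{line:pareto_initialization}--). Since every vector dominates $\vec\infty$ (it is the top element of the ordering, dominated by everything), we trivially have $\tpSetAfter{1}(\GPdstate) \succeq \tpSetAfter{0}(\GPdstate)=\{\vec\infty\}$ for all non-terminal $\GPdstate$; at $\GPdTerm$ the value is fixed, so dominance holds with equality. This confirms the claim for $i=0$.

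\textbf{Inductive step.} Assume $\tpSetAfter{i}(\GPdstateprime) \succeq \tpSetAfter{i-1}(\GPdstateprime)$ for every successor $\GPdstateprime$. The key monotonicity lemma I would isolate is: if $U' \succeq U$ then $\mathrm{upset}(w+u') \supseteq \mathrm{upset}(w+u)$ pointwise in the appropriate sense, and both the union $\bigcup$ (robot nodes) and the intersection $\bigcap$ (environment nodes) over these upper sets preserve $\succeq$ when the inputs improve. Concretely, adding a fixed weight vector $\GPdWeight(\GPdstate,a,\GPdstateprime)$ is order-preserving, and taking upper sets of dominating points yields larger (more dominating) regions; unions and intersections of such monotone families inherit the ordering. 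Hence $\fpoperator(\tpSetAfter{i}(\GPdstate)) \succeq \fpoperator(\tpSetAfter{i-1}(\GPdstate))$. Finally, applying $\ppSet$ to extract the Pareto front preserves the relation because the Pareto front of a dominating set still dominates the Pareto front of the dominated set — every point removed as non-optimal is dominated by a retained one, so no dominance information is lost.

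\textbf{Main obstacle.} The delicate part is handling the \emph{intersection} at environment nodes cleanly, since intersection is the operation that can shrink regions (as emphasized in the discussion around \Cref{fig:set_operation}); I must verify that $\bigcap_a \mathrm{upset}(w_a + u_a')$ still dominates $\bigcap_a \mathrm{upset}(w_a + u_a)$ when each $u_a' \succeq u_a$, rather than the reverse. The subtlety is that intersection is antitone in the \emph{collection} of sets but here each individual set grows, so I would argue that a larger upper set per branch yields a larger intersection, keeping the direction of $\succeq$ correct. A secondary care point is the interaction between the set-overloaded $\succeq$ and the mix of finite vectors with $\vec\infty$: I would confirm the ordering conventions make $\vec\infty$ the least-dominating element so that improvements away from infinity are consistent with $\succeq$.
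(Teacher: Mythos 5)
Your proof is correct and establishes the same core fact as the paper's — that one application of $\fpoperator_\ppSet$ preserves the dominance order, split by node type — but your organization of the argument is genuinely different and, in fact, tighter. The paper argues in terms of paths: at system nodes it reasons that newly discovered paths with dominant payoffs survive the union; at environment nodes it argues informally that values update only after successor values update; and it then needs a third, separate case for strongly connected components, where it invokes positivity of the edge weights to argue that traversing a loop (Minkowski-adding its weight) cannot produce a dominating payoff. Your formulation — induction on the iteration index, using the one-step facts that $u' \succeq u$ implies $\mathrm{upset}(w+u') \supseteq \mathrm{upset}(w+u)$, that unions and intersections of enlarged upper sets are enlarged, and that extracting the Pareto front of an enlarged upper set still yields a dominating front — makes the loop case unnecessary: cycles are invisible to an argument that relates iterate $i+1$ to iterate $i$ only through a single application of the operator, so SCCs are covered for free, and nowhere do you need nonnegativity of the weights (which is genuinely needed only for termination, i.e., Proposition~\ref{lemma:maxpropagation} and Lemma~\ref{lemma:convergence}, not for monotonicity). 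You also correctly isolate the one delicate point, the environment-node intersection, which the paper handles only in passing. One shared caveat worth noting: both your final step and the paper's statement use the overloaded set dominance loosely. Under the paper's literal definition ($u' \succeq u$ for \emph{all} pairs $u' \in U'$, $u \in U$), the lemma can fail as soon as a new, incomparable Pareto point appears — e.g., a node's set evolving from $\{(1,10)\}$ to $\{(1,10),(10,1)\}$ — so what both proofs actually establish is the weaker, intended statement that every payoff in $\tpSetAfter{i}(\GPdstate)$ is dominated by \emph{some} payoff in $\tpSetAfter{i+1}(\GPdstate)$; your phrasing ``every point removed as non-optimal is dominated by a retained one'' is exactly this weaker relation.
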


The proof is provided in Appendix~\ref{appendix: proof of lemma monotonic}. Lemma~\ref{lemma: monotonic} shows that operator $\fpoperator_\ppSet$ is monotonic in that the newly computed total payoff set dominates the previous one.  Hence, by repeatedly applying $\fpoperator_\ppSet$, we prune out dominated values.  The following proposition shows that in finite number of iterations of applying $\fpoperator_\ppSet$, the total payoff sets can be propagated to all the states.


\begin{proposition}
    The set of total payoffs can be propagated to every state in $\GPdComplexity$ iterations. 
    \label{lemma:maxpropagation}
\end{proposition}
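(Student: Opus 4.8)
The plan is to recognize that assigning a \emph{finite} (non-$\vec{\infty}$) total-payoff set to a state is exactly the attractor/reachability computation of the two-player reachability game on $\GPd$ with target $\GPdTerm$, and that each sweep of $\fpoperator_\ppSet$ in \Cref{alg:pareto_points_computation} advances this frontier by one layer. Concretely, I would define the attractor levels $\mathrm{Attr}_0 = \{\GPdTerm\}$ and
\[
\mathrm{Attr}_{k+1} = \mathrm{Attr}_k \cup \{\GPdstate \in \SysState : \exists\, (\GPdstate,a,\GPdstateprime)\in\GPdEdge,\ \GPdstateprime\in\mathrm{Attr}_k\} \cup \{\GPdstate\in\EnvState : \forall\, (\GPdstate,a,\GPdstateprime)\in\GPdEdge,\ \GPdstateprime\in\mathrm{Attr}_k\},
\]
so that the robot needs a single good successor while the environment needs \emph{all} successors already in the previous level. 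The central claim, to be proved by induction on the iteration index $i$, is that $\max \tpSetAfter{i}(\GPdstate) < \infty$ iff $\GPdstate \in \mathrm{Attr}_i$.

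The base case $i=0$ is immediate from the initialization, since the algorithm sets $\tpSet(\GPdTerm)=\{\vec{0}\}$ and $\tpSet(\GPdstate)=\{\vec{\infty}\}$ for all other $\GPdstate$, matching $\mathrm{Attr}_0 = \{\GPdTerm\}$. For the inductive step the key is to track how $\fpoperator$ moves finiteness across one edge: adding $\GPdWeight$ keeps a finite vector finite and maps $\vec{\infty}$ to $\vec{\infty}$, and one uses $\mathrm{upset}(\vec{\infty})=\{\vec{\infty}\}$ together with $\vec{\infty}\in\mathrm{upset}(\tpVec)$ for every finite $\tpVec$. At a robot node $\GPdstate\in\SysState$, $\fpoperator$ takes the \emph{union} of the $\mathrm{upset}(\GPdWeight+\tpVec)$ over successors, so the result contains a finite point iff at least one successor already carried a finite value at iteration $i$ — matching the robot's attractor rule. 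At an environment node $\GPdstate\in\EnvState$, $\fpoperator$ takes the \emph{intersection}; if even one successor still equals $\{\vec{\infty}\}$ the corresponding set is $\{\vec{\infty}\}$ and the intersection collapses to $\{\vec{\infty}\}$, whereas if every successor is finite the intersection is the upper set of the componentwise maximum of finite anchors and is therefore finite — matching the environment's universal rule. Finally, taking the Pareto front $\ppSet(\cdot)$ preserves finiteness, since a finite point strictly dominates $\vec{\infty}$ and $\ppSet$ removes $\vec{\infty}$ exactly when a finite point is present. This closes the induction.

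It remains to bound the number of sweeps. The inclusions $\mathrm{Attr}_0\subseteq\mathrm{Attr}_1\subseteq\cdots$ are monotone (consistent with \Cref{lemma: monotonic}), and every strict inclusion adds at least one state; since $|\mathrm{Attr}_0|=1$ and there are $\GPdNumState$ states in total, the chain stabilizes after at most $\GPdMaxStep$ sweeps, at which point every winning state — in particular $\GPdInit$ whenever it is winning — holds a finite set and all remaining states stay at $\{\vec{\infty}\}$. Each sweep evaluates $\fpoperator_\ppSet$ once per state while touching every edge, costing $\mathcal{O}(\GPdNumState+\GPdNumEdge)$, so the propagation to all states completes within the stated $\GPdComplexity$ budget.

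The main obstacle I expect is the bookkeeping at environment nodes: one must verify carefully that the intersection of upper sets genuinely collapses to $\{\vec{\infty}\}$ the moment a single successor is infinite (which rests on $\mathrm{upset}(\vec{\infty})=\{\vec{\infty}\}$ and on $\vec{\infty}$ lying in every finite upper set), so that finiteness cannot ``leak through'' one finite successor and the propagation obeys the universal quantifier of the attractor exactly. The robot (union) side is comparatively routine.
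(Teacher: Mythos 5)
Your proof is correct, but it reaches the bound by a genuinely different route than the paper. The paper's argument rests on a path-length lemma (Lemma~\ref{lemma:max_steps}): plays under winning strategies are loop-free, hence every winning state lies within $\GPdMaxStep$ edges of $\GPdTerm$, so $\GPdMaxStep$ backward sweeps, each touching all $\GPdNumState$ nodes and $\GPdNumEdge$ edges, suffice --- giving $\GPdComplexity$. You instead recast the propagation as the standard attractor computation for two-player reachability games, prove by induction an exact correspondence between finiteness of $\tpSetAfter{i}(\GPdstate)$ and membership in $\mathrm{Attr}_i$, and bound the number of sweeps by cardinality (a monotone chain of subsets of an $\GPdNumState$-element set starting from a singleton admits at most $\GPdMaxStep$ strict inclusions). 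Your version is rigorous precisely where the paper is informal: it handles the $\exists$/$\forall$ alternation at robot versus environment nodes explicitly (union versus intersection of upper sets, with the collapse to $\{\vec{\infty}\}$ when one successor is still infinite), and it covers \emph{every} state uniformly, whereas the paper's lemma is stated only for the initial state and then applied to all states. What the paper's path-length argument buys, and what your finiteness induction does not by itself deliver, is the stronger reading that \Cref{lemma:convergence} later relies on: not merely that \emph{some} finite value reaches each winning state by sweep $\GPdMaxStep$, but that the payoffs of \emph{all} dominant plays have been accumulated into the sets by then. This is easy to graft onto your induction --- show that $\tpSetAfter{k}$ contains the worst-case payoffs of all strategies forcing $\GPdTerm$ within $k$ steps, and note that dominant payoffs arise from loop-free plays of length at most $\GPdMaxStep$ because the weights are non-negative --- but as written your argument establishes when finiteness arrives rather than when the payoff sets are fully populated.
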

%

The proof is provided in Appendix~\ref{appendix: proof of lemma:maxpropagation}.  Using the results of this proposition, the following lemma shows that $\fpoperator_\ppSet$ reaches a fixed-point in polynomial time.

\begin{lemma}
    After the maximum iterations in \Cref{lemma:maxpropagation}, $\tpSetAfter{i}$ does not change. 
    \label{lemma:convergence}
\end{lemma}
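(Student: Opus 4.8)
The plan is to obtain convergence by combining the monotonicity of $\fpoperator_\ppSet$ (\Cref{lemma: monotonic}) with the propagation bound of \Cref{lemma:maxpropagation}. Write $N = \GPdComplexity$ for that bound. By \Cref{lemma: monotonic}, the iterates form a dominating chain, $\tpSetAfter{i+1}(\GPdstate) \succeq \tpSetAfter{i}(\GPdstate)$ for every $\GPdstate \in \GPdState$, so coordinatewise the payoff vectors only decrease and are bounded below by $\vec{0}$; hence the sequence can only settle, never cycle. It then suffices to show that the family of sets reached after $N$ iterations is a fixed point of $\fpoperator_\ppSet$, because $\tpSetAfter{N+1} = \fpoperator_\ppSet(\tpSetAfter{N})$, so establishing $\fpoperator_\ppSet(\tpSetAfter{N}) = \tpSetAfter{N}$ yields exactly the claim that $\tpSetAfter{i}$ does not change thereafter.

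First I would invoke \Cref{lemma:maxpropagation} to conclude that after $N$ iterations the total-payoff set has been fully back-propagated from $\GPdTerm$, so that each $\tpSetAfter{N}(\GPdstate)$ already equals the limiting (Pareto-optimal) total-payoff set $\tpSet^\ast(\GPdstate)$ at every state, with losing states retaining $\{\vec{\infty}\}$. Next I would verify that $\tpSet^\ast$ satisfies the Bellman-type optimality equation $\tpSet^\ast = \fpoperator_\ppSet(\tpSet^\ast)$, checking it node by node: at a robot node $\GPdstate \in \SysState$ the achievable Pareto front equals $\ppSet$ of the union over outgoing edges of $\mathrm{upset}(\GPdWeight(\GPdstate, a, \GPdstateprime) + u)$, since the robot selects its best action, whereas at an environment node $\GPdstate \in \EnvState$ it equals $\ppSet$ of the intersection of the same sets, since the environment forces the worst action. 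These are precisely $\ppSet(\fpoperator(\cdot))$, i.e. $\fpoperator_\ppSet$. Finally, since both $\tpSet^\ast(\GPdstate)$ and its successors' values are already optimal, one further application reproduces them; equivalently, any payoff arising from one extra backward step corresponds to a play that revisits a state, and excising that non-negatively weighted cycle yields a dominating shorter play already counted, so $\ppSet$ discards the extra vectors. This gives $\fpoperator_\ppSet(\tpSetAfter{N}) = \tpSetAfter{N}$.

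The hard part will be the optimality equation at environment nodes, where $\fpoperator$ uses an intersection of upper sets rather than a union: I must argue that a vector survives the intersection exactly when the robot can realize it (or something dominating it) against every environment action, and that this adversarial reading meshes with the union at robot nodes to make $\tpSet^\ast$ a genuine fixed point once propagation is complete. Care is also needed for the $\{\vec{\infty}\}$ entries of losing states, where I must confirm stability under $\fpoperator_\ppSet$ --- an environment node stays losing if any successor is losing (the intersection with $\mathrm{upset}(\vec{\infty})$ forces $\{\vec{\infty}\}$), while a robot node stays losing only if all successors are losing --- so that the winning/losing partition does not shift after iteration $N$. The non-negativity of $\GPdWeight$ is the structural fact that closes the argument, since it forbids either player from improving its payoff via a cycle and thereby licenses the bounded-length-play reasoning underlying both \Cref{lemma:maxpropagation} and the fixed-point check.
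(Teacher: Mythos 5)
Your proof is correct and rests on the same two pillars as the paper's own argument: \Cref{lemma:maxpropagation} (all dominating payoffs are back-propagated within $\GPdComplexity$ iterations) and the non-negativity of the edge weights (so a play that revisits a state can be shortened without worsening its payoff). The paper packages this as a contradiction about a hypothetical shorter path through an unvisited successor, whereas you verify directly that $\tpSetAfter{N}$ is a fixed point of $\fpoperator_\ppSet$ via cycle excision --- the same argument in direct rather than contrapositive form --- and your explicit handling of the environment-node intersections and the $\{\vec{\infty}\}$ values at losing states supplies details the paper's terse proof glosses over.
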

\begin{proof}
    This can be shown with a contradiction. 
    Recall that the set of payoffs obtained by $\fpoperator_\ppSet(\tpSetAfter{i}(\GPdstate))$ changes only if a shorter path to a terminal state is found.  However, by \Cref{lemma:maxpropagation} all the paths with dominant payoffs are explored in at most $\GPdComplexity$ iterations.  
    After $\GPdComplexity$ iterations, if there were a shorter path, then it would have to contain an unvisited successor node ${\GPdstate}'$. 
    Let the current state be $\GPdstate$ and the total payoffs from $\GPdTerm$ to states $\GPdstate$ and ${\GPdstate}'$ at step $i$ be $\tpSetAfter{i}({\GPdstate}')$ and $\tpSetAfter{i}({\GPdstate}')$, respectively. 
    \rev{
    If ${\GPdstate}'$ were in the shortest path, then $\tpSetAfter{i}(\GPdstate) \preceq  \tpSetAfter{i}({\GPdstate}') \oplus \{W({\GPdstate}', a, \GPdstate)\}$. 
    However, $\tpSetAfter{i}(\GPdstate)$ must be dominant over $\tpSetAfter{i}({\GPdstate}') \oplus \{W({\GPdstate}', a, \GPdstate)\}$
    }
    because otherwise the total payoffs would have been included in the upper set, i.e., $\tpSetAfter{i}(\GPdstate) \supseteq \mathrm{upset}(W({\GPdstate}', a, \GPdstate) + u)$ for $u \in \tpSetAfter{i}({\GPdstate}')$. Hence,  $\tpSetAfter{i}$ converges in $\GPdComplexity$ iterations. 
\end{proof}

The above lemmas and proposition show that the Pareto front is the greatest fixed-point of $\fpoperator_\ppSet$, which can be computed in finite time.  Hence, \Cref{alg:pareto_points_computation} is complete and efficient as stated below.

\begin{theorem}
Given a game $\GPd$, \Cref{alg:pareto_points_computation} computes the Pareto fronts for all states in $\GPd$ in polynomial time.
\label{thm: pareto_points_computation}
\end{theorem}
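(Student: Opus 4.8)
The plan is to obtain the theorem as a synthesis of the three preceding results, establishing both \emph{completeness} (the algorithm returns the true Pareto front at every state) and \emph{polynomial runtime}. First I would establish convergence to a fixed-point. By \Cref{lemma: monotonic} the iterates satisfy $\tpSetAfter{i+1}(\GPdstate) \succeq \tpSetAfter{i}(\GPdstate)$ for every $\GPdstate \in \GPdState$, so the sequence of total-payoff sets is monotone under dominance and only ever prunes dominated vectors. By \Cref{lemma:maxpropagation} every reachable total payoff is propagated to all states within $\GPdComplexity$ iterations, and by \Cref{lemma:convergence} the iterates no longer change thereafter. Together these give an assignment $\tpSet^\ast$ with $\tpSet^\ast(\GPdstate) = \fpoperator_\ppSet(\tpSet^\ast(\GPdstate))$ for all $\GPdstate$, attained after at most $\GPdComplexity$ iterations; this is exactly the fixed-point reached from the $\{\vec{\infty}\}$ initialization that the discussion preceding the theorem identifies as the greatest fixed-point of $\fpoperator_\ppSet$.

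Second, the crux is arguing that $\tpSet^\ast(\GPdstate)$ equals the genuine Pareto front of winning total payoffs at $\GPdstate$, a two-direction claim. For \emph{soundness} I would show each point of $\tpSet^\ast(\GPdstate)$ is realizable by a concrete winning strategy recovered from the back-propagation: at a system node $\GPdstate \in \SysState$ the union records an action the robot may pick, while at an environment node $\GPdstate \in \EnvState$ the intersection retains only payoffs guaranteed against every environment action; reading off these choices yields a $\strategy_\Sys$ whose worst-case value $\max_{\strategy_\Env} \totalpayoff(\PlaysOf(\strategy_\Sys, \strategy_\Env))$ equals that point, and finiteness of the value certifies reachability of $\GPdTerm$, i.e. that $\strategy_\Sys$ is winning. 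For \emph{completeness} I would show by induction on play length, using \Cref{lemma:maxpropagation} to guarantee that all relevant paths are explored within the iteration bound, that any payoff achievable by a winning strategy is dominated by some element of $\tpSet^\ast(\GPdstate)$, so no Pareto-optimal value is missed. The $\ppSet$ projection inside $\fpoperator_\ppSet$ then leaves precisely the non-dominated vectors, matching \Cref{def:pareto-front}.

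For the complexity bound I would count directly: the outer loop runs for $\GPdComplexity$ iterations by \Cref{lemma:maxpropagation}, and within each iteration $\fpoperator_\ppSet$ is applied once per state. Evaluating $\fpoperator$ at a state requires a union or intersection of the upper sets of its successors (at most $\GPdNumEdge$ of them) followed by a Pareto-front extraction $\ppSet$, each polynomial in the number of maintained Pareto points and the dimension $m+1$; multiplying through gives a polynomial total running time.

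I expect the main obstacle to be the completeness direction of the correctness argument: rigorously matching the set-theoretic union/intersection semantics of $\fpoperator$ to the min--max game semantics, so that the greatest fixed-point provably coincides with the worst-case-optimal payoffs achievable by deterministic winning strategies against an adversarial environment. A secondary subtlety is making the polynomial-time claim airtight, which hinges on bounding the number of Pareto points retained at each node; this relies on the non-negativity of $\GPdWeight$ and the stopping (reachability) structure of the game, or, failing a purely combinatorial bound, on treating the objective dimension $m$ as fixed.
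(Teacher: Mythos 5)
Your proposal is correct and takes essentially the same route as the paper: the paper's proof is exactly the synthesis you describe, citing \Cref{lemma: monotonic} for correctness of each iterate, \Cref{lemma:maxpropagation} for propagation within $\GPdComplexity$ iterations, and \Cref{lemma:convergence} for convergence of the total payoff sets. Your two additional elaborations — the soundness/completeness matching of the fixed point of $\fpoperator_\ppSet$ to the game-theoretic Pareto front, and the per-iteration cost accounting (which the paper relegates to a remark on polygon clipping) — go beyond the paper's terser argument and address precisely the points it leaves implicit.
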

\begin{proof}
    By initializing the Pareto sets to vectors of infinity,
    \Cref{alg:pareto_points_computation} correctly computes the Pareto points at each iteration $i \geq 0$ by \Cref{lemma: monotonic}. By \Cref{lemma:maxpropagation}, the algorithm propagates all edge weights after $\GPdComplexity$ iterations, 
    leading to the convergence of the total payoff sets by \Cref{lemma:convergence}. 
\end{proof}


\begin{remark}
    While Theorem~\ref{thm: pareto_points_computation} shows that \Cref{alg:pareto_points_computation} is guaranteed to terminate in finite number of iterations,  in each iteration, the set operations of intersection and union need to be performed.  In our implementation, 
    we use
    the polygon clipping algorithm 
    which runs in $\mathcal{O}(n\cdot m)$, where $n$ and $m$ are the number of vertices of the two polygons \cite{puri2013efficient}. 
    This algorithm is known to run in $\mathcal{O}((k + n) \log n)$ where $k$ is a number of intersections. 
\end{remark}

\subsection{Pareto Optimal Strategy Synthesis}
\label{sec: strategy synthesis}

\begin{algorithm}[tb]
\SetKwInOut{Input}{Input}
\SetKwInOut{Output}{Output}
\Input{A game product graph $\GPd = \GPdTuple$, a Pareto point at the initial state $\pp$, and the sets of total payoffs $\tpSet$}
\Output{A deterministic strategy $\strategy$}

$Q = Queue((\GPdInit, \pp))$\label{str: init}\; 

$Visited = Set(\GPdInit)$\;

$E = List()$\;

\While{$Q$ is not empty}{

    $\GPdstate, \ \pp \leftarrow Q.pop()$\;
 
    \For{$(\GPdstate, a, \GPdstateprime) \in \GPdEdge$}{
        
        \If{$\pp' \in \tpSet(\GPdstateprime)$ and $\pp - \GPdWeight(\GPdstate, a, \GPdstateprime) \geq \pp'$}{  \label{str: inppset}
        
            $\strategy(\GPdstate) = a$\;
                    
            \If{$\GPdstateprime$ not in $Visited$}{
            
                $Q.add((\GPdstateprime, \pp'))$\;
            
                $Visited.add(\GPdstateprime)$\;
                
                $E.append((\GPdstate, a, \GPdstateprime))$\;
                
            }
        }
    }

}

\Return{$\strategy$}
\caption{Strategy Synthesis for a Pareto point $p$}
\label{alg:strategy_synthesis}
\end{algorithm}


Here, we show that a strategy for a selected Pareto point can be computed in linear time with respect to the number of nodes in the product game. The algorithm is presented in \Cref{alg:strategy_synthesis}.
At a high level, the algorithm selects an action at each state to find paths whose total payoff is less than or equal to the Pareto point. Recall that Pareto points represent the worst possible total payoffs. Starting from the initial state, any path leading to a terminating state must have a cost less than or equal to the Pareto point. Thus, as long as the difference between the Pareto point and the accumulated path cost at the current node is positive, the selected action ensures that the total payoff remains within the Pareto point. The algorithm tracks the \textit{remaining} total payoff starting from the initial state.

On \Cref{str: init}, we start by adding the initial state and the selected Pareto point $\pp \in \ppSet$ to a queue. On \Cref{str: inppset}, we choose a successor node $\GPdstate$ such that the successor's Pareto point $\pp'$ remains inside the current node's cost set, $\pp-\GPdWeight(\GPdstate, a, \GPdstate_i)$. This ensures that the remaining total payoff is positive, $\pp-\GPdWeight(\GPdstate, a, \GPdstate_i) - \pp' \geq 0$. 
Once the strategy is obtained, the algorithm checks if there are possible loops. For this, we can construct a strategy graph by only keeping the strategy's actions in the product game, running a backward reachability analysis on the strategy graph, and only retaining the states that are reachable from the accepting state. This prevents cycling in a loop in the strategy graph.

\begin{figure*}[t]
    \centering
    \begin{subfigure}[b]{0.18\textwidth}
        \centering
        \includegraphics[width=.82\textwidth]{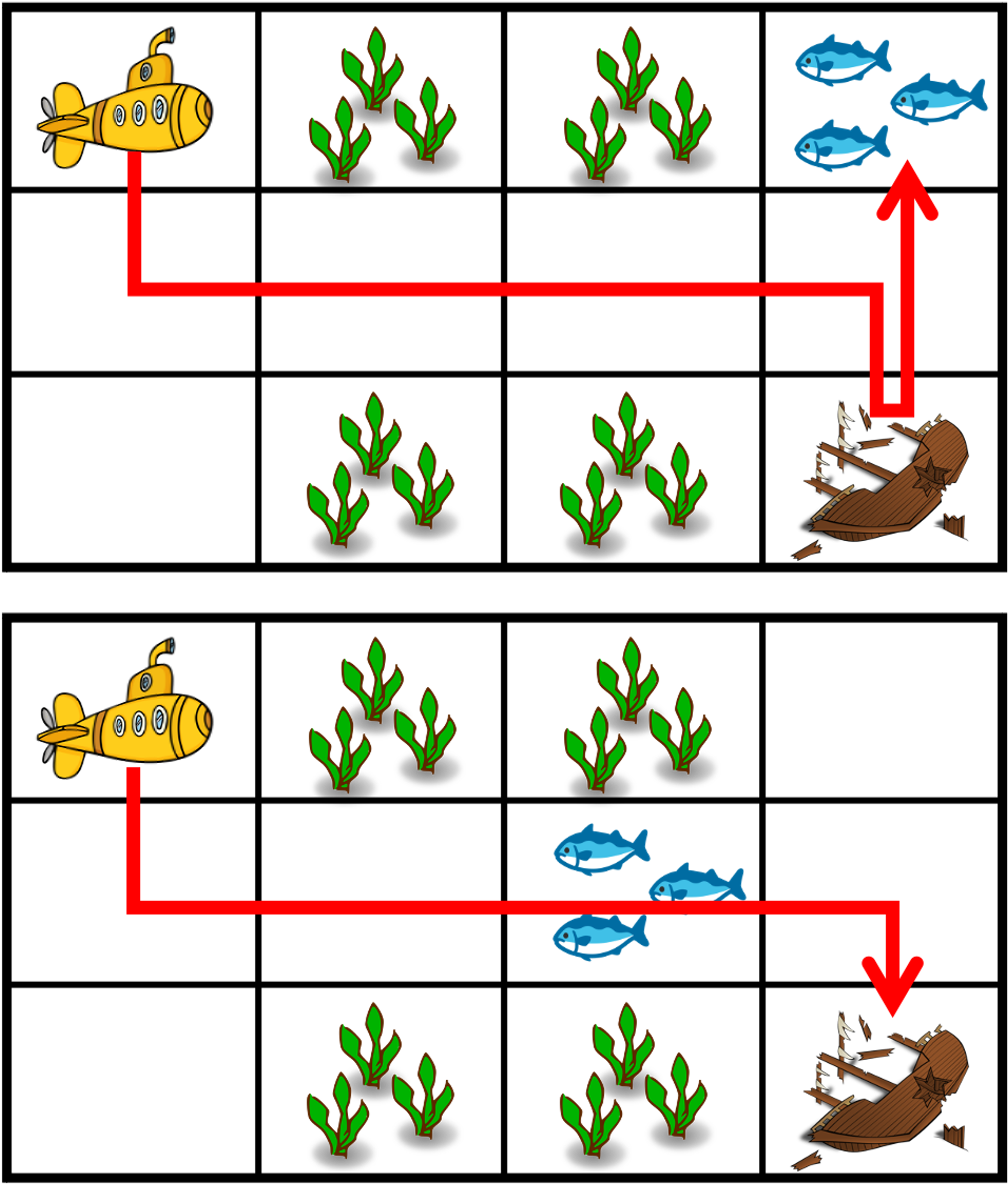}
        \caption{Non-Markovian Task}
        \label{fig:reachTwoGoals}
    \end{subfigure}
    \hfill
    \begin{subfigure}[b]{0.18\textwidth}
        \includegraphics[width=\textwidth]{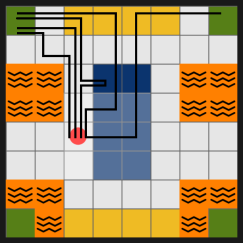}
        \caption{Five demos}
        \label{fig:seshia}
    \end{subfigure}
    \hfill
    \begin{subfigure}[b]{0.18\textwidth}
        \includegraphics[width=\textwidth]{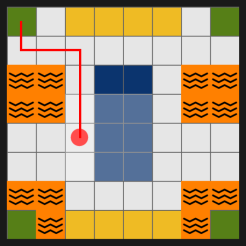}
        \caption{Synthesized Plan}
        \label{fig:gridworld1}
    \end{subfigure}
    \hfill
    \begin{subfigure}[b]{0.18\textwidth}
        \includegraphics[width=\textwidth]{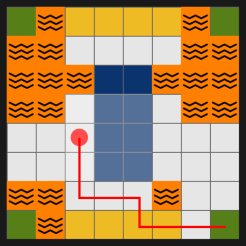}
        \caption{Synthesized Plan}
        \label{fig:gridworld2}
    \end{subfigure}
    \hfill
    \begin{subfigure}[b]{0.23\textwidth}
        \includegraphics[width=\textwidth]{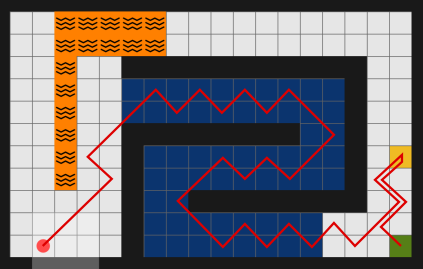}
        \caption{Synthesized Plan for a diagonal-moving robot}
        \label{fig:gridworld3}
    \end{subfigure}
    \caption{Various environments and robots considered for the case studies. (a) Learning and planning for the non-Markovian task. (b) Environment and demonstrations from \cite{vazquez2017learning}. (c)-(e) Synthesized plans (shown in red) based on the learned task from (b).}
    \label{fig:gridworld}
    \vspace{-3mm}
\end{figure*}
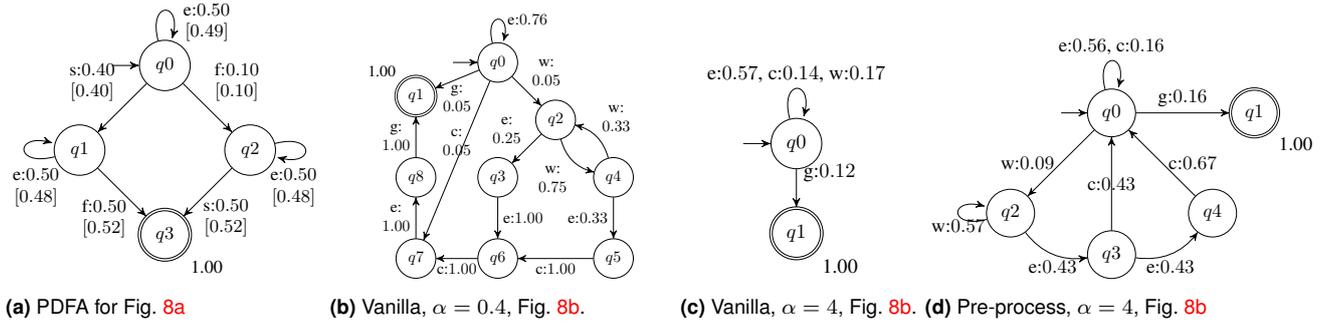
\begin{figure*}
\centering
\begin{minipage}[b]{0.24\linewidth}
    \centering
    \scalebox{.95}{

\begin{tikzpicture}[
    ->, 
    >=stealth', 
    node distance=0.5\linewidth, 
    scale=0.8,
    every node/.style={scale=0.8, font=\small},
    ]

  \node [state, initial] (q0) {$q0$};
  
  \node [state, below left of=q0] (q1) {$q1$};
  
  \node [state, below right of=q0] (q2) {$q2$};
  
  \node [state, accepting, below left of=q2, label=below right:1.00] (q3) {$q3$};

  \draw
    (q0) edge[loop above, right, align=center] node[xshift=2mm, yshift=-2mm]{\emptyAP:$0.50$\\\target{0.49}} (q0)
    
    (q1) edge[loop left, below, align=center] node[xshift=2mm, yshift=-2mm]{\emptyAP:$0.50$\\\target{0.48}} (q1)

    (q2) edge[loop right, below, align=center] node[xshift=-2mm, yshift=-2mm]{\emptyAP:$0.50$\\\target{0.48}} (q2)
    
    (q0) edge[above left, align=center] node{ \ship:$0.40$\\\target{0.40}} (q1)
    (q0) edge[above right, align=center] node{\fish:$0.10$\\\target{0.10}} (q2)
    (q1) edge[below left, align=center] node[xshift=2mm, yshift=0mm]{\fish:$0.50$\\\target{0.52}} (q3)
    (q2) edge[below right, align=center] node[xshift=-2mm, yshift=0mm]{\ship:$0.50$\\\target{0.52}} (q3)

    ;
\end{tikzpicture}}
    \subcaption{PDFA for Fig. \ref{fig:reachTwoGoals}}
    \label{fig:PDFA_twogoals}
\end{minipage}
\begin{minipage}[b]{0.26\linewidth}
    \centering
    \scalebox{0.85}{

\begin{tikzpicture}[
    ->, 
    >=stealth', 
    node distance=0.4\linewidth, 
    scale=0.8,
    every node/.style={scale=0.7},
    ]

  \node [state, initial] (q0) {$q0$};
  
  \node [state, below right of=q0] (q2) {$q2$};
  
  \node [state, below left of=q2] (q3) {$q3$};
  
  \node [state, below right of=q2] (q4) {$q4$};
  
  \node [state, below of=q4] (q5) {$q5$};
  
  \node [state, below of=q3] (q6) {$q6$};
  
  \node [state, left of=q6] (q7) {$q7$};
  
  \node [state, above of=q7] (q8) {$q8$};
  
  \node [state, accepting, above of=q8, label=above left:1.00] (q1) {$q1$};

  \draw
    (q0) edge[loop above, right] node{\hspace{1mm}\emptyAP:$0.76$} (q0)
    (q0) edge[below, align=center] node{\green:\\$0.05$} (q1)
    (q0) edge[above, align=center] node{\carpet:\\$0.05$} (q7)
    (q0) edge[above right, align=center] node{\water:\\$0.05$} (q2)

    (q2) edge[above left, align=center] node{\emptyAP:\\$0.25$} (q3)
    (q2) edge[bend right, below left, align=center] node{\water:\\$0.75$} (q4)

    (q3) edge[right] node{\emptyAP:$1.00$} (q6)

    (q4) edge[bend right, above right, align=center] node{\water:\\$0.33$} (q2)
    (q4) edge[left] node{\emptyAP:$0.33$} (q5)

    (q5) edge[below] node{\carpet:$1.00$} (q6)

    (q6) edge[below, align=center] node{\carpet:$1.00$} (q7)

    (q7) edge[left, align=center] node{\emptyAP:\\$1.00$} (q8)

    (q8) edge[left, align=center] node{\green:\\$1.00$} (q1)
    ;
\end{tikzpicture}}
    \subcaption{Vanilla, $\alpha=0.4$, Fig. \ref{fig:seshia}.}
    \label{fig:experiment1result}
\end{minipage}
\begin{minipage}[b]{0.18\linewidth}
    \centering
    \scalebox{0.95}{

\begin{tikzpicture}[
    ->, 
    >=stealth', 
    node distance=0.5\linewidth, 
    scale=0.8,
    every node/.style={scale=0.8}
    ]

  \node [state, initial] (q0) {$q0$};
  
  \node [state, accepting, below of=q0, label=below right:1.00] (q1) {$q1$};

  \draw
    (q0) edge[loop above, above] node{\emptyAP:$0.57$, \carpet:$0.14$, \water:$0.17$} (q0)
    (q0) edge[above right, align=center] node{\green:$0.12$} (q1)
    ;
\end{tikzpicture}}
    \subcaption{Vanilla, $\alpha=4$, Fig. \ref{fig:seshia}.}
    \label{fig:spec}
\end{minipage}
\begin{minipage}[b]{0.3\linewidth}
    \centering
    \scalebox{0.9}{

\begin{tikzpicture}[
    ->, 
    >=stealth', 
    node distance=0.5\linewidth, 
    scale=0.8,
    every node/.style={scale=0.8}
    ]

  \node [state, initial] (q0) {$q0$};
  
  \node [state, below left of=q0] (q2) {$q2$};
  
  \node [state, below of=q0] (q3) {$q3$};
  
  \node [state, below right of=q0] (q4) {$q4$};
  
  \node [state, accepting, right of=q0, label=below right:1.00] (q1) {$q1$};

  \draw
    (q0) edge[loop above, above] node{\emptyAP:$0.56$, \carpet:$0.16$} (q0)
    (q0) edge[above] node{\green:$0.16$} (q1)

    (q0) edge[left] node{\water:$0.09$} (q2)

    (q2) edge[loop left, below] node{\water:$0.57$} (q2)
    (q2) edge[bend right, below] node{\emptyAP:$0.43$} (q3)
    
    (q3) edge[bend right, below] node{\emptyAP:$0.43$} (q4)
    
    (q3) edge[] node{\carpet:$0.43$} (q0)
    (q4) edge[right] node{\carpet:$0.67$} (q0);
\end{tikzpicture}}
    \subcaption{Pre-process, $\alpha=4$, Fig. \ref{fig:seshia}}
    \label{fig:pre}
\end{minipage}
\caption{The task specification and the learned PDFAs for the scenarios in Fig. \ref{fig:reachTwoGoals} and \ref{fig:seshia}. Each letter represents a symbol with a single atomic proposition s=$\{ship\}$, f=$\{fish\}$, b=$\{blue\}$, c=$\{carpet\}$, g=$\{green\}$, p=$\{purple\}$, and $e = \emptyset$. The termination probability $F_\PP$ of double-edged states is 1 and 0 at all other states.
The values in square brackets in (a) are the learned probabilities.
}
\label{fig:pdfas}
\end{figure*}

\begin{remark}
    Our algorithm can be used for static environments as well.
    Static environments can be viewed as dynamic environments, where the environment player is limited to one action at each state. 
\end{remark}

\subsection{\rev{Scalability Discussion}}


\rev{
As shown in Theorem~\ref{thm: pareto_points_computation}, the synthesis algorithm is polynomial in the size of the product game $\mathcal{G}^{\mathcal{P}}$, which is the Cartesian product of game $G$ and learned PDFA $\tilde{A}^{\mathbb{P}}$. The PDFA is typically small for robotic tasks, so the scalability of our framework is primarily influenced by the size of $G$, which depends on the problem-specific abstraction into a two-player game. While this abstraction process is well-studied and domain-dependent
(mobile robotics~\cite{Lahijanian:AR-CRAS:2018,Hadas:ICRA:2007,Lahijanian:ICRA:2009} and robotic manipulators~\cite{He:ICRA:2015,He:RAL:2019}), it can become computationally expensive, particularly in scenarios involving multiple environment agents.
}

\rev{
To mitigate this, symbolic representations such as Binary Decision Diagrams (BDDs) and Algebraic Decision Diagrams (ADDs) can be employed to compactly represent and manipulate large game graphs as shown in \cite{he2019efficient,muvvala2023efficient}. 
Furthermore, we highlight that our use of a learned PDFA for task representation (rather than an LTL-based specification), leads to significantly smaller automata and hence more efficient product construction. 
This is because the size of the DFA generated from co-safe LTL or LTL\textsubscript{f} task specifications is doubly exponential in the formula size. LTL specifications typically omit physical constraints, requiring the automaton to represent all logically possible executions. In contrast, our PDFA is learned from demonstrations that inherently reflect physical constraints, yielding a much smaller and more tractable automaton.}


\section{Case Studies and Evaluations}
\label{sec:caseStudy}




In this section, we evaluate the performance of the proposed algorithms across five case studies. We demonstrate that the solutions generated by our approach satisfy all the requirements outlined in \Cref{problem}. The case studies are designed to address the following key questions:






\begin{enumerate}[i.]
    \item Can the algorithm learn a non-Markovian task from demonstrations while capturing the demonstrator's preferences as a PDFA?
    \item Does the algorithm learn a PDFA that ensures safety is never violated, regardless of hyperparameters or the number of demonstrations?
    \item Do the synthesized strategies guarantee task completion in dynamic environments, while simultaneously maximizing preferences and minimizing robot cost?
    \item Is the algorithm applicable to real-world robotic systems?
\end{enumerate}

Our implementation of the EDSM algorithm is based on the MDI method that is used in the \textit{flexfringe} library \cite{verwer2017flexfringe}.
We call the basic algorithm the \textit{Vanilla} algorithm.
All the case studies were run on a MacBook Pro with 2.3 GHz Dual-Core Intel Core i5 and 16 GB RAM. Videos of all case studies are available to view \endnote{\url{https://youtu.be/TU8MhPBDBBs}}.

\subsection{Learning and Planning for Non-Markovian Tasks}
\label{subsec:caseStudy1}
In this case study, we consider the robotic scenario in \Cref{fig:reachTwoGoals}.  The task is to  visit both the school of fish and the shipwreck in any order and always avoid coral reefs. The preference is to visit  the shipwreck first.  A PDFA representation of this specification is shown in \Cref{fig:PDFA_twogoals}.

To learn this task, we sampled 1000 traces from this PDFA on the gridworld environment in \Cref{fig:reachTwoGoals}.  From these demonstrations, the Vanilla algorithm learned a PDFA with the same exact structure as the true PDFA and probabilities within 0.02 of the true values (in square brackets in Fig. \ref{fig:PDFA_twogoals}).

As the PDFA shows, our method correctly learned the non-Markovian task of visiting both the shipwreck and the school of fish in both orders and favors going to the shipwreck first.  Using this PDFA, our planner generated the robot trajectory shown in \Cref{fig:reachTwoGoals} (top), which correctly visits the shipwreck first and then the school of fish.
Next, we changed the environment by moving the location of the fish to be on the robot's way to the shipwreck as shown in \Cref{fig:reachTwoGoals} (bottom).
This figure also shows the synthesized plan in this environment using the same learned PDFA.  Notice that the robot does not visit the shipwreck first due to environmental constraint. Instead, it visits the fish and then the shipwreck, which is also a correct behavior.
This generality is the strength of learning the specification rather than learning a policy that is strongly dependent on the environment.

\subsection{Learning from Small Number of Samples with Safety}
\label{subsec:caseStudy2}

In this case study, we consider the environment and five demonstrations depicted in \Cref{fig:seshia} taken from \cite{vazquez2017learning} to learn the specification in a form of a PDFA as a comparison to the approach in \cite{vazquez2017learning}, which is based on learning specification formulas.
In this gridworld, each color represents an object, where orange is \emph{lava}, blue is \emph{water}, yellow is a  \emph{drying carpet}, white is an \emph{empty} space, and green is a \emph{charging station}.
The task is to reach a charging station.
However, the robot should not charge while it is wet. That is,
once it gets wet (goes to water), the robot has to dry at the \emph{drying carpet}.

\begin{figure*}
    \centering
    \begin{subfigure}[b]{0.32\textwidth}
        \centering
        \includegraphics[width=\textwidth]{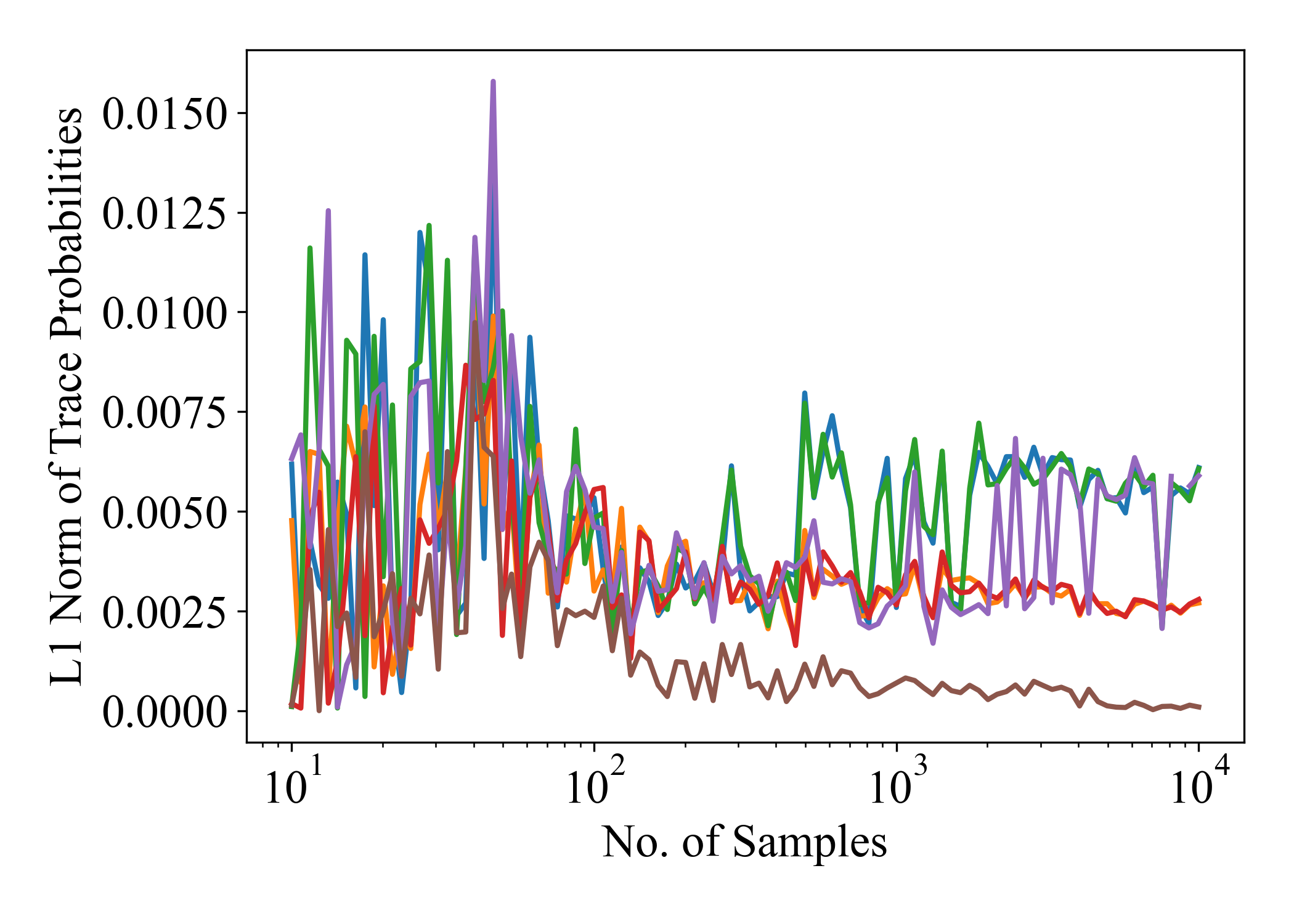} 
        \caption{L1 norm error (log scale)}
        \label{fig:kl}
    \end{subfigure}
    \hfill
    \begin{subfigure}[b]{0.325\textwidth}
        \centering
        \includegraphics[width=\textwidth]{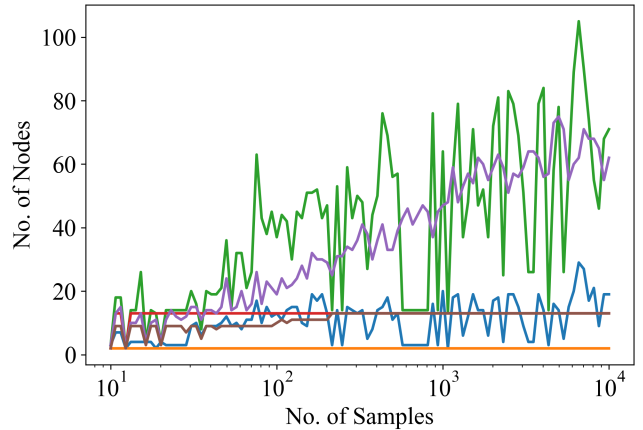}
        \caption{Number of nodes}
        \label{fig:nodes}
    \end{subfigure}
    \hfill
    \begin{subfigure}[b]{0.32\textwidth}
        \centering
        \includegraphics[width=\textwidth]{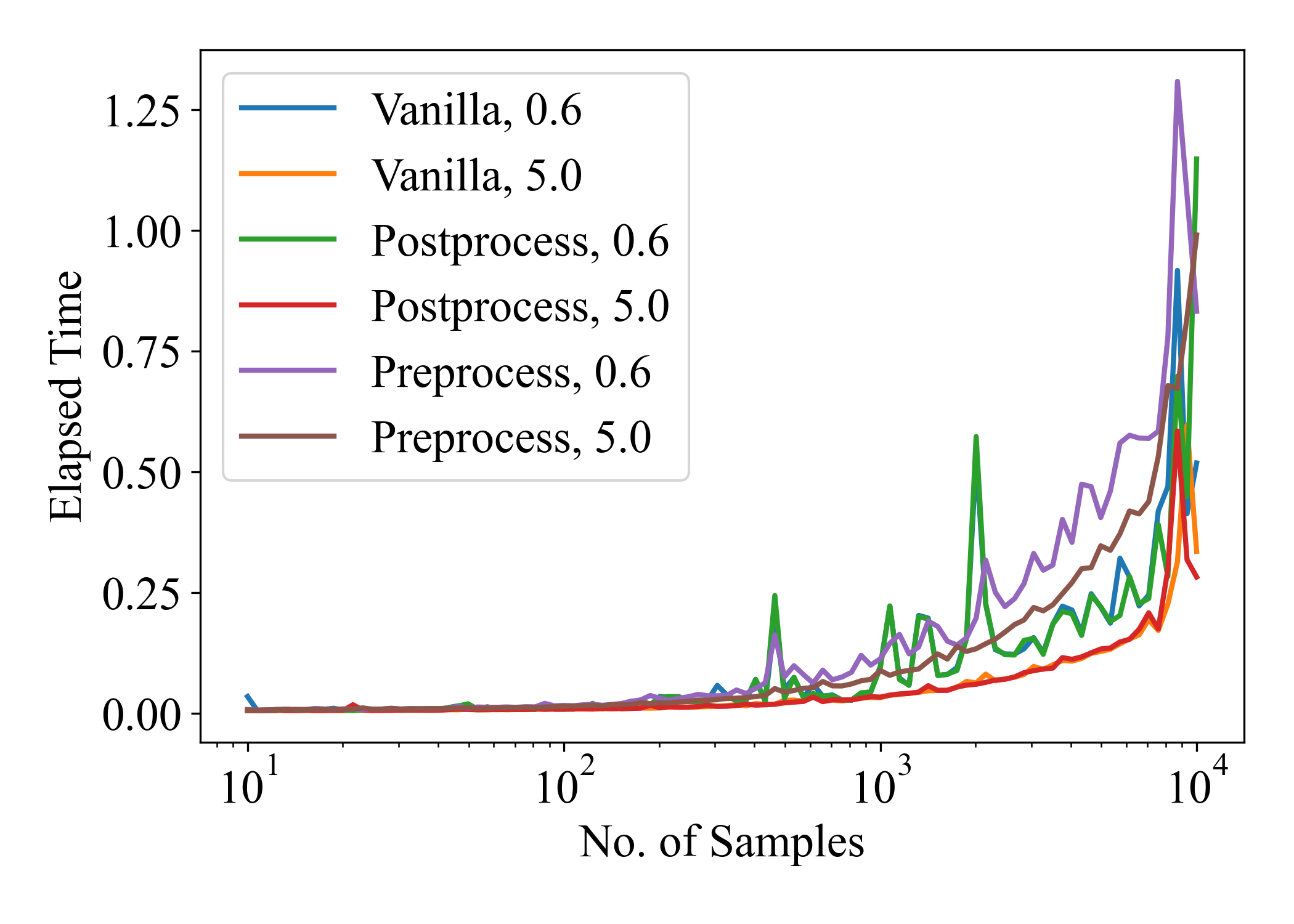}
        \caption{Computation time [s]}
        \label{fig:time}
    \end{subfigure}
    \caption{Performance analysis for the proposed algorithm. Plots in (a)-(b) use the same legend as (c).}
    \label{fig:results}
\end{figure*}

\subsubsection{Small number of samples}
We first used the Vanilla algorithm with the five demonstrations, which learned the PDFA in \Cref{fig:experiment1result}.
Note, in the learned PDFA, region green (\emph{charging station}) must always be observed to reach the final state. This shows that the task of reaching the \emph{charging station} is learned correctly. Next, on the right most branch of the PDFA, \emph{carpet} is always observed when the robot gets wet. Again, the algorithm succeeded in learning the task of visiting \emph{carpet} once the robots gets wet before reaching the \emph{charging station}.
One interesting observation is that the PDFA also learned that the robot has to go to the \emph{charging station} in one step after leaving the \emph{carpet}.  This is in fact a bias in the samples since every shown demonstration that includes \emph{carpet} has this property.  If that is the intention of the demonstrator, then it is a correct behavior.  If it is not, then it can be resolved by providing more samples.

Such one-step bias is not apparent in \cite{vazquez2017learning} because the ``next'' operator is not allowed in the syntax of the \textit{language} they consider. In contrary, our method infers over regular language, which includes the next operator.  Furthermore, in \cite{vazquez2017learning}, it took 95 seconds to learn the specification from just 5 demonstrations whereas ours took less than 0.01 seconds.

\subsubsection{Hyperparameter choice and safety}
The PDFA in \Cref{fig:experiment1result} is the result of the Vanilla algorithm when the hyperparameter of $\alpha$ is set to 0.4. It is a knob of how aggressive we allow the merges. Higher the value of $\alpha$ is, the smaller the PDFA becomes. If we can tune the hyperparameter correctly, we can get a desirable result as described above. But, if we increase $\alpha$ too much, some merges could induce unsafe behavior.   Unwanted merges occur because the algorithm is simply trying to \emph{minimize} the size of the structure.
In fact, the question of how to choose a correct value for $\alpha$ is an open problem.
For $\alpha = 4$, the learned PDFA from the same demonstrations is shown in \Cref{fig:spec}.  This PDFA has no regards for safety and only requires to reach the \emph{charging station}.
We can mitigate this problem by embedding safety specification. We define the following safety formula:
\begin{align*}
    \varphi_\safe = \mathcal{G} \neg \rm{lava} \wedge \mathcal{G} (\rm{water} \rightarrow \mathcal{X} (\varphi(\neg \rm{charge}, \rm{carpet}, k)),
\end{align*}
where
$\varphi$ is a formula recursively defined as: $\varphi (a, b, k) = a \land ( b \lor \mathcal{X} ( \varphi( a, b, k-1)))$ and $\varphi (a, b, 0) = a$,
and is read, ``visit $a$ for $k$ steps unless $b$ is visited''.
This formulas requires never going to \emph{lava} and, if the robot enters \emph{water}, it cannot \emph{charge} unless it visits \emph{carpet} or stays in \emph{empty} for $k$ consecutive steps to get dry. We set $k=10$ in all experiments.

From the same five demonstrations, we now learn PDFAs using the Post-process and Pre-process algorithms with $\alpha=4$ subject to $\varphi_\safe$.
The Post-process algorithm generates a large PDFA with 13 nodes and 36 edges since the safety DFA itself is large (12 nodes and 34 edges). Despite the size, it always guarantees no violation to $\varphi_\safe$.
The PDFA generated by the Pre-process algorithm is shown in \Cref{fig:pre}.  It is small and
correctly embeds both safety and liveness.
Further, all the demonstrations are accepted by both learned PDFA.
As for probabilities, the average L1 norm error was $1.65\times 10^{-3}$ for the Post-process PDFA and $7.42\times 10^{-5}$ for the Pre-process PDFA, indicating better performance by the Pre-process algorithm.
The larger error in the probabilities of the Post-process PDFA is due to the composition with the safety DFA, which prunes away the unsafe traces in the learned PDFA, corrupting the learned probability distributions.

\rev{
Furthermore, we note that learning the task as a PDFA enhances interpretability, providing both the designer and the demonstrator with an additional tool for tuning the hyperparameter $\alpha$, as illustrated in \Cref{fig:pdfas}. 
Next, we perform a thorough comparison of the learning methods by increasing the number of samples.
}

\subsection{Post-process versus Pre-process Algorithm}
\label{subsec:caseStudy3}

Here, the task is the same as the one above, but the goal is to quantitatively analyze and compare the performances of the proposed algorithms as the number of samples increases.
We sampled demonstrations randomly from the true PDFA
and used Post-process and Pre-process algorithms to learn PDFAs with hyperparameter values of $\alpha=0.6$ (less aggressive merge) and $\alpha=5.0$ (aggressive merge) to show the extreme results. We evaluated the resulting PDFAs with respect to the following metrics: L1 norm of the trace probability errors, number of states, and computation times.  

\rev{
Note that, a desirable method aims to reduce all three metrics.  That is, for PDFA learning, the smaller the number of states, the better it is as long as the automaton accurately represents the probability distributions over the accepting traces (language).  That means, the representation of the language (task) is compact.  This leads to several advantages, including faster strategy synthesis (because smaller PDFA results in smaller product game) and better interpretability.
}

\rev{
The results are shown in \Cref{fig:results} (all the plots share the same legend).
The results indicate that the Pre-process algorithm again performs better in both accuracy and size (but slower) than the others. From these results, we can say that the Pre-process algorithm is the best performing algorithm with respect to accuracy and automaton size. Moreover, its output PDFA does not violate the safety across all the trials (checked but not shown in the figures).
}

\begin{figure}
    \centering
    \begin{subfigure}[b]{0.35\linewidth}
        \centering
        \includegraphics[width=\linewidth]{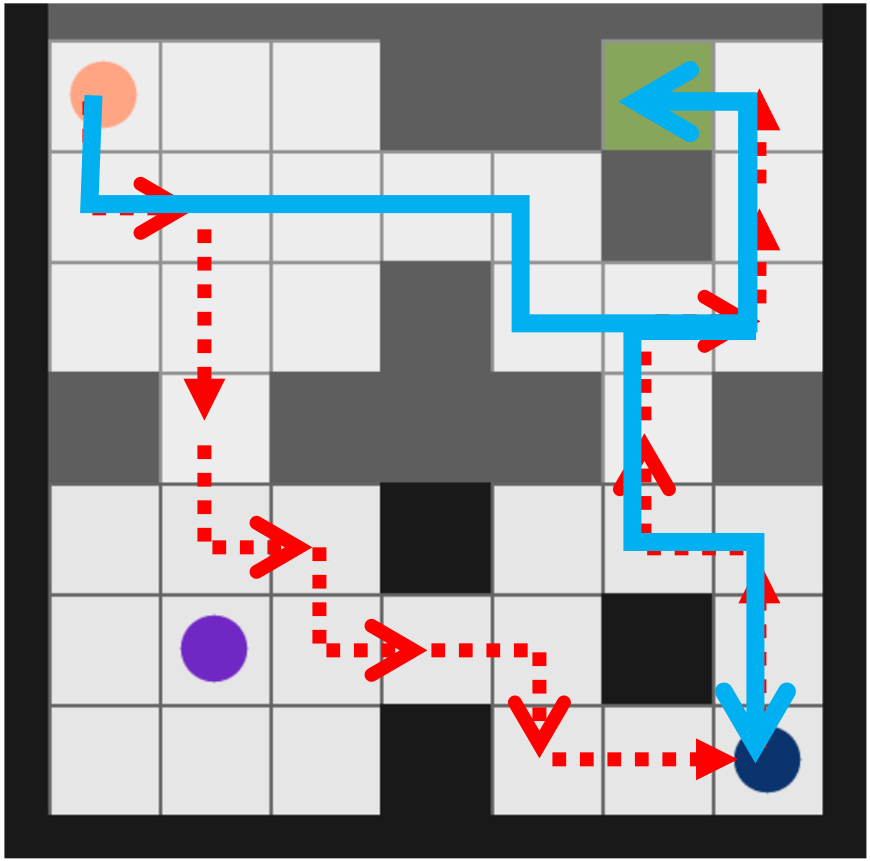}
        \caption{Fish and Shipwreck}
        \label{fig:dynamic_fish_and_shipwreck_env}
    \end{subfigure}
    ~~~
    \begin{subfigure}[b]{0.5\linewidth}
        \centering
        \includegraphics[width=\linewidth]{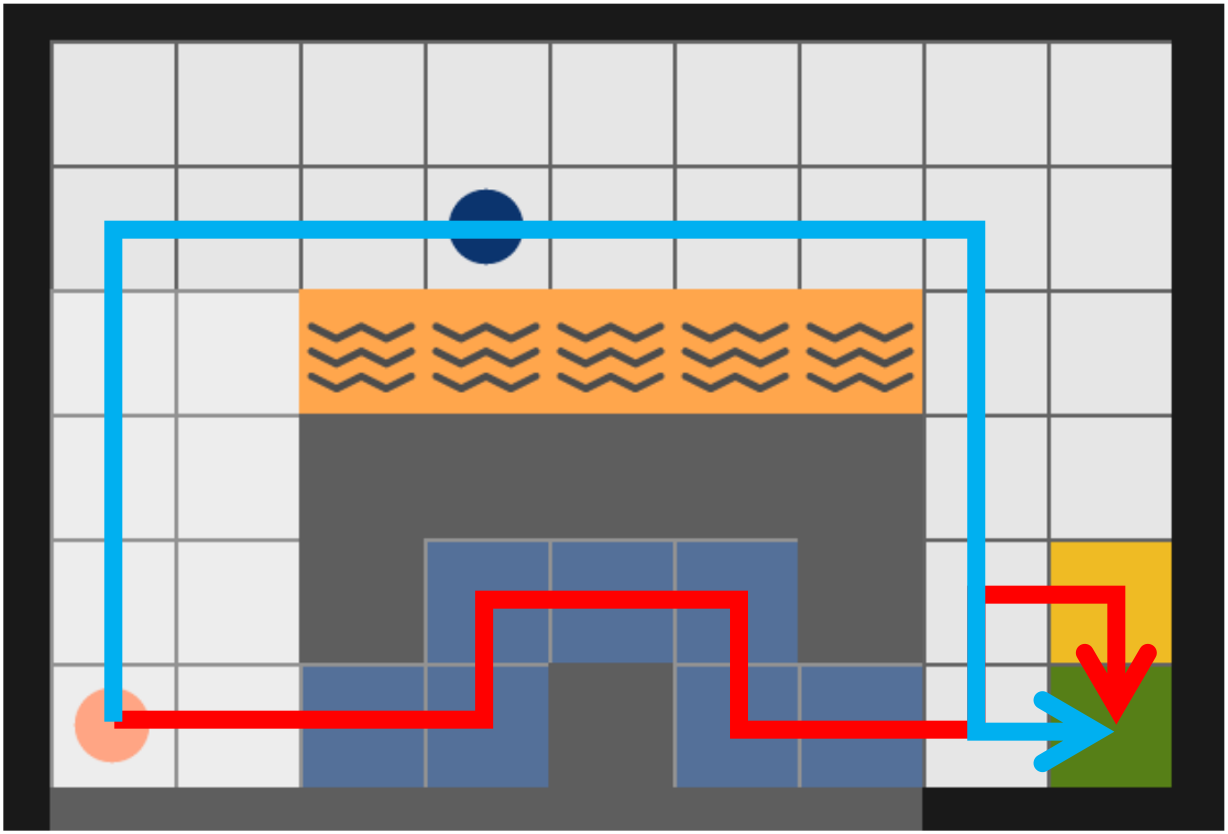}
        \caption{Charging Station}
        \label{fig:dynamic_charging_station_env}
    \end{subfigure}
    \caption{Dynamic MiniGrid Environments. Solid and dotted lines indicate 1-step and 2-step actions, respectively.}
    \label{fig:dynamic_minigrid}
\end{figure}

\subsection{Planning for Various Robots in different Environments}
\subsubsection{Planning in Static Environments}

From the learned PDFAs above, we picked one with a small L1 error norm.
Then, using this PDFA, we planned for various robots and environments  that are different from the one the demonstrations were shown in (see \Cref{fig:seshia}).
In all the cases, the computed plans correctly meet the requirements and preferences.
In the environment in \Cref{fig:gridworld2}, the \emph{lava} forces the robot to go to the bottom-right \emph{charging station}. Note that the robot avoids \emph{water} by going through \emph{carpet}, which is the preferred behavior.
In \Cref{fig:gridworld3}, we modified the robot's dynamics to only allow diagonal moves. The algorithm is again successful in generating a satisfying plan without violating the specification.
Because the specification is independent from any robotic systems and any environments, our framework is robust against the changes in the environment and robot dynamics.

\subsubsection{Planning in Dynamic Environments}
\label{sec: expriments dynamic env}
We synthesize a strategy for the learned PDFA in a dynamic environment. Let us recall the task of visiting both the school of fish and the shipwreck. We now assume the school of fish can dynamically move around freely. The new example is shown in \Cref{fig:dynamic_fish_and_shipwreck_env}. The red vehicle has to catch the blue fish and find the green shipwreck while avoiding the purple vehicle that can only move within the left bottom space. Moreover, we added more actions for the robot; it can choose to move one or two steps at a time. 
We set the energy cost of an action to be proportional to its number of steps. 
Taking two steps at a time will guarantee catching the fish in less number of steps but will cost more energy. 

Our algorithm found a Pareto front consisting of six Pareto points (cost and preference) that the robot can guarantee.
One of the points is $(40.0,40.72)$, i.e., maximum total payoff of $40.0$ for the energy cost and maximum preference ($-\mbox{log}\, P$) of $40.72$. A path obtained under the corresponding strategy is drawn in red in \Cref{fig:dynamic_fish_and_shipwreck_env}. Another Pareto point is $(59.0, 20.01)$, and a play under the corresponding strategy is drawn in blue in \Cref{fig:dynamic_fish_and_shipwreck_env}. 

We simulated 1000 plays on this game by choosing random strategies for the environment. All the obtained plays successfully completed the task and their total payoffs were bounded by the Pareto points. 
This case studies show that, regardless of the environment's behavior, the strategy computed by our algorithm can guarantee the completion of the task and maintain the total payoff within the chosen Parent point.


Here, we show that our algorithm can avoid another agent and complete the task in various ways. Recall the task of reaching the charging station from above.
We now consider the extended environment in \Cref{fig:dynamic_charging_station_env}.
Imagine the robot is an autonomous car and the blue agent is a pedestrian. 
The car has to avoid the pedestrian and reach the charging station or it can go through \emph{water} and get dried at \emph{carpet} to avoid the pedestrian.  We assume that the pedestrian can only walk around in the top two rows and the robot can take one or three steps to avoid conflicts with the pedestrian. 

Our algorithm found eight pareto points for this example. We show two distinct plays obtained by simulating the strategies in \Cref{fig:dynamic_charging_station_env}. The strategy that results in blue path 
corresponds to the Pareto point
$(47, 5.24)$, which trades off a more preferred way of achieving the task with a high robot cost. The red path strategy corresponds to the Pareto point $(12, 15.39)$, which guarantees lower robot cost but less preferred method of achieving the task. Paths corresponding to other Pareto points are similar to these two but their behavior changes based on the number of steps the robot takes per action. 

\begin{figure}[t]
    \centering
    \includegraphics[width=\linewidth]{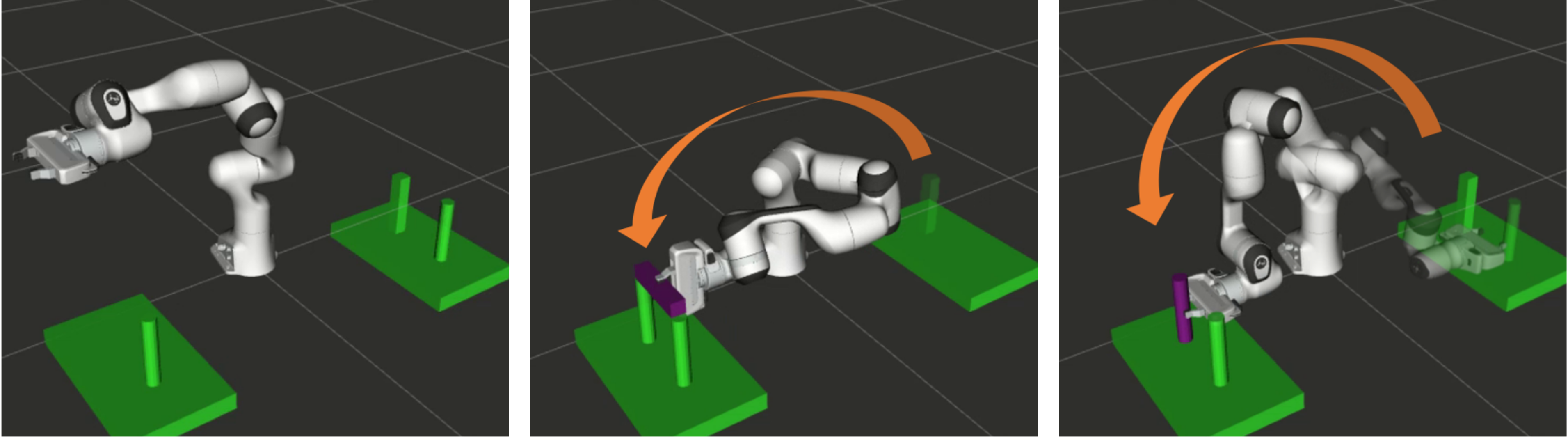}
    \caption{Manipulator completing the learned task of building an arch.}
    \label{fig:franka_sim}
\end{figure}
\begin{figure}[t]
    \centering
    \begin{subfigure}[b]{0.6\linewidth}
        \centering
        \includegraphics[width=1.0\linewidth]{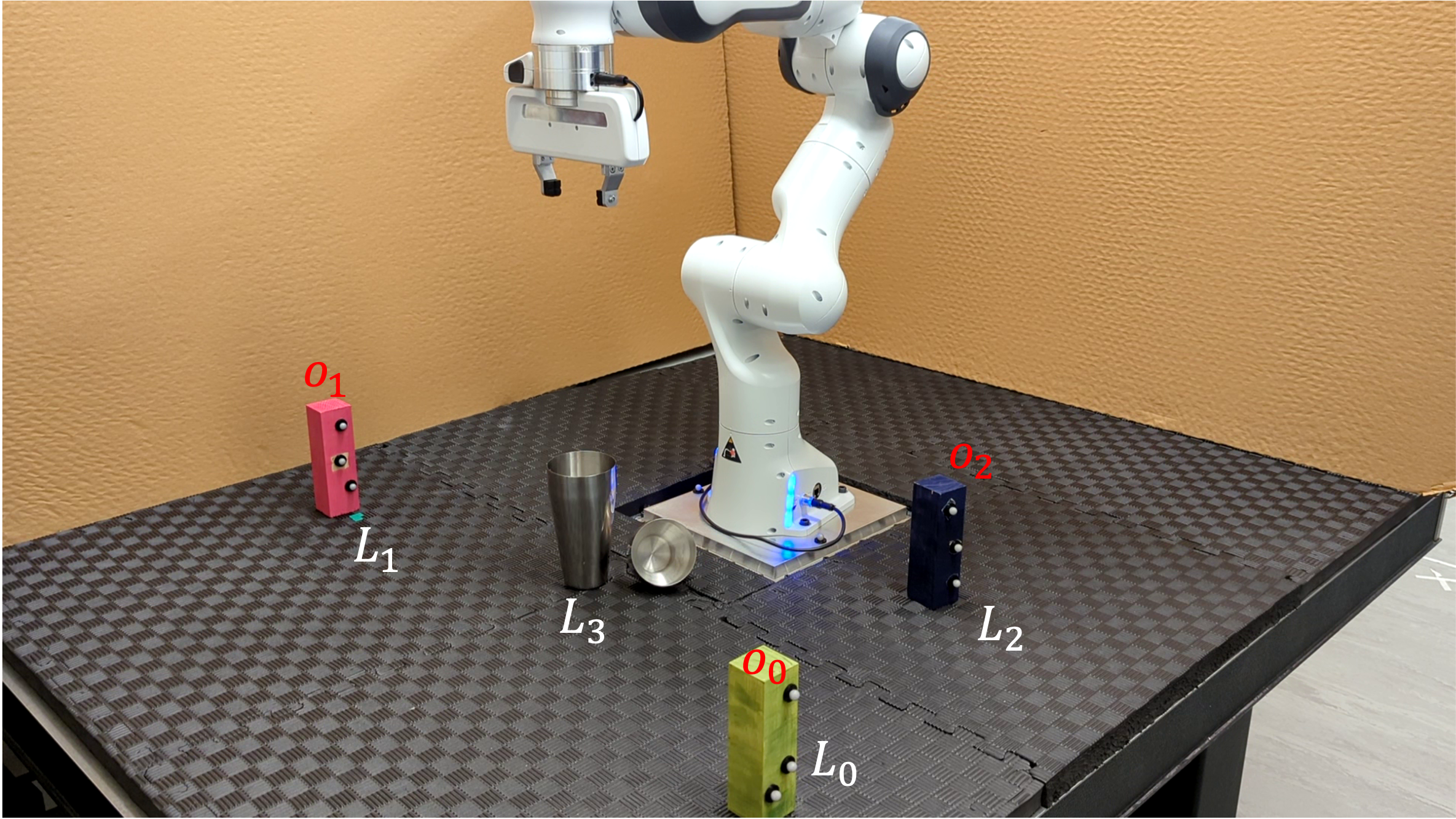}
        \caption{Experimental Setup}
        \label{fig:cocktail_experiment_setup}
    \end{subfigure}
    \hfill
    \begin{subfigure}[b]{0.35\linewidth}
        \centering
        \begin{minipage}[b]{1.0\linewidth}
            \centering
            \scalebox{.7}{

\begin{tikzpicture}[
    ->, 
    >=stealth', 
    node distance=0.5\linewidth, 
    scale=0.8,
    every node/.style={scale=0.8, font=\small},
    ]

  \node [state, initial] (q0) {$q0$};
  
  \node [state, below left of=q0] (q1) {$q1$};
  
  \node [state, below right of=q0] (q2) {$q2$};
  
  \node [state, below left of=q2] (q3) {$q3$};
  
  \node [state, accepting, below of=q3, label=below right:1.00] (q4) {$q3$};

  \draw
  
    (q0) edge[loop above, right, align=center] node[xshift=2mm, yshift=-2mm]{$\emptyset$: 0.80} (q0)
    
    (q1) edge[loop left, below, align=center] node[xshift=2mm, yshift=-2mm]{o0: 0.75} (q1)

    (q2) edge[loop right, below, align=center] node[xshift=-2mm, yshift=-2mm]{o1: 0.75} (q2)
    
    (q3) edge[loop right, below, align=center] node[xshift=-2mm, yshift=-2mm]{o0,o1: 0.75} (q3)
    
    (q0) edge[above left, align=center] node{o0: 0.13} (q1)
    (q0) edge[above right, align=center] node{o1: 0.07} (q2)
    (q1) edge[below left, align=center] node[xshift=2mm, yshift=0mm]{o0,o1: 0.25} (q3)
    (q2) edge[below right, align=center] node[xshift=2mm, yshift=0mm]{o0,o1: 0.25} (q3)
    
    (q3) edge[below right, align=center] node[xshift=-1mm, yshift=0mm]{o0,o1,o2: 0.25} (q4)
    ;
\end{tikzpicture}}
        \end{minipage}
        \caption{Learned PDFA}
        \label{fig:cocktail_making_pdfa}
    \end{subfigure}
    \caption{Cocktail Making Experiment}
    \label{fig:cocktail_example}
\end{figure}

\begin{figure*}
    \centering
    \begin{subfigure}[b]{\textwidth}
        \centering
        \includegraphics[width=\textwidth]{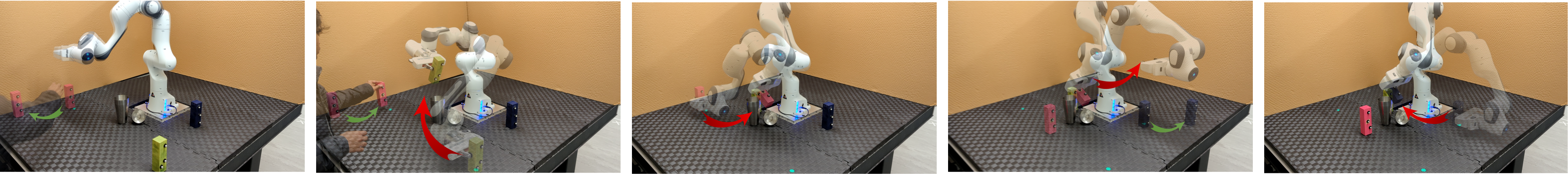}
        \caption{A play with the worst distance cost of [4.41, 7.88] by strategy $\strategy_0$}
        \label{fig:cocktail_experiments_worst_distance_cost}
    \end{subfigure}

    \begin{subfigure}[b]{\textwidth}
        \centering
        \includegraphics[width=\textwidth]{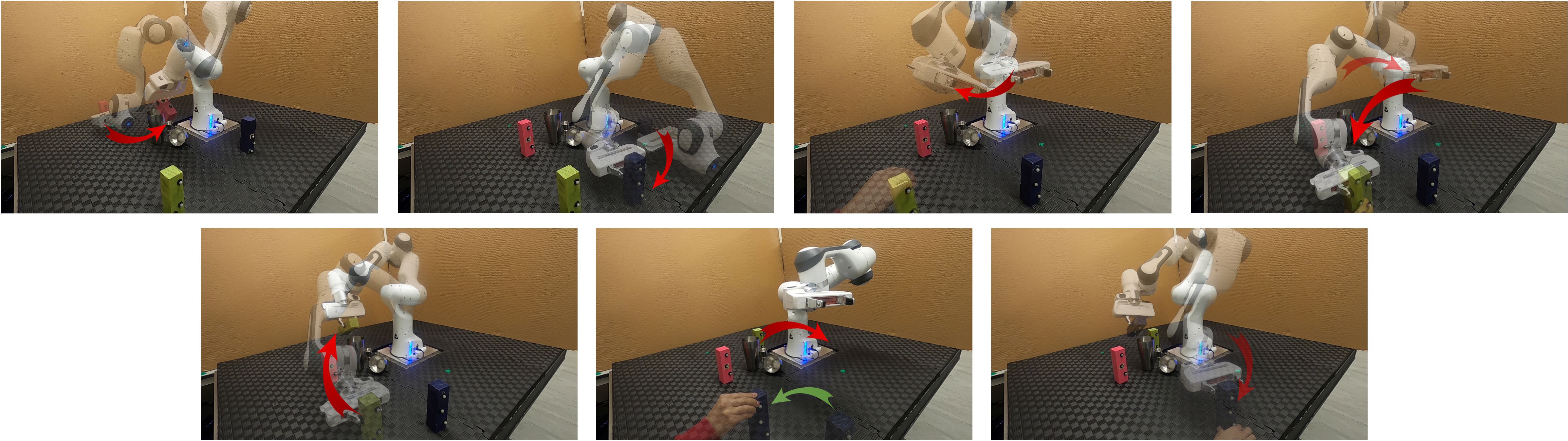}
        \caption{A play with the worst preference cost of [4.35, 12.65] by strategy $\strategy_0$}
        \label{fig:cocktail_experiments_worst_preference_cost}
    \end{subfigure}

    \caption{Plays of the strategy $\strategy_0$.}
    \label{fig:cocktail_experiments}
\end{figure*}

\subsection{Learning and Planning for Manipulation tasks}
\label{subsec:caseStudy4}
\subsubsection{Arch Building Example}

To show that our method is not limited to mobile robots, we considered a manipulation example in \Cref{fig:franka_sim} (left).
The robot is the Franka Emika Panda manipulator with 7 DoF, and the latent task is to build an arch with two cylindrical objects as columns and a rectangular box on top. The abstraction of the robot 
was done according to \cite{He:RAL:2019,He:ICRA:2015,muvvala2022regret}, which ended up with around 20,000 states.
The robot was given nine demonstrations: five most preferred (fastest), three mid-length (1.4 times as many actions), and one very bad demonstration (3 times as many actions).
A PDFA was learned with $\alpha=1.8$.  The learned PDFA has four states, and planning took 0.036 seconds.  The execution of the plan by the robot is shown in \Cref{fig:franka_sim} (middle and right), which shows that the robot successfully learned and executed the most preferred method of completing the task.

\subsubsection{Cocktail Making Example}
\label{sec: experiment cocktail}
To demonstrate the power of our reactive algorithm, we show a cocktail-making example in a human-robot collaboration setting. Imagine a robot and a human making cocktails individually next to each other in a bar kitchen.
The setup of the experiment is shown in \Cref{fig:cocktail_experiment_setup}. The blocks represent liquor bottles, which can be moved between predefined locations $L_0, L_1, L_2$, and $L_3$ by taking actions ``Transit-Grasp" and ``Transfer-Release". 
The human has two options: intervene and move an object at any time, or wait until the robot takes an action. However, there are certain restrictions. The human cannot intervene while the robot is holding an object, nor can they intervene on the same object twice. If the human does intervene on an object, that object must be returned within two robot action steps. The edge weights are automatically assigned based on the distance between each location.

The robot's task is to pour liquor $o_0, o_1$, and $o_2$ into a shaker while the human may intervene to borrow some of the liquors one at a time. The underlying task is to first pour $o_0$ and $o_1$ in any order and $o_2$ at the end. The most preferred way of picking objects is $o_0 \rightarrow o_1 \rightarrow o_2$. We sampled 3 demonstrations and learned a PDFA given a safety requirement of ``never observe $o_2$ before $o_0$ and $o_1$," i.e., $\mathcal{G} (o_2 \rightarrow X (\lnot o_0 \land \lnot o_1))$. The learned PDFA is presented in \Cref{fig:cocktail_making_pdfa}.

Our algorithm computed two distinct Pareto points, each representing a trade-off between distance cost and preference cost: $(4.41, 12.65)$ and $(4.68, 10.88)$. Additionally, it generated corresponding strategies, denoted as $\strategy_0$ and $\strategy_1$, for each Pareto point, respectively. 
These strategies produce plays with total payoffs bounded by their respective Pareto points.
For instance, strategy $\strategy_0$ pours the liquors in the order of $o_1$, $o_0$, and $o_2$, resulting in a total cost of $(3.44, 7.83)$. Conversely, strategy $\strategy_1$ follows the order $o_0$, $o_1$, and $o_2$, incurring a total cost of $(3.45, 7.21)$. Notably, since $o_1$ is closer (less distance cost) but less preferred than $o_0$, the total payoff of strategy $\strategy_0$ achieves a smaller distance cost at the expense of a higher preference cost compared to strategy $\strategy_1$.
Furthermore, due to the imposed safety requirement, the robot never attempts to pick up $o_2$ until the other objects have been retrieved.

In the case of human interventions, these strategies can still ensure that the worst-case total payoffs are within the bounds of the Pareto points. 
In \Cref{fig:cocktail_experiments}, we show the plays generated by strategy $\strategy_0$ that performed the worst total payoffs. 
Strategy $\strategy_1$ generated similar plays to those of strategy $\strategy_0$ with the flipped order of $o_0$ and $o_1$. 

In the scenario depicted in \Cref{fig:cocktail_experiments_worst_distance_cost}, the play started with the robot moving to location $L_1$ to retrieve object $o_1$. However, before the robot could grasp $o_1$, the human operator intervened. Due to the constraint that the human cannot intervene on another object while an intervention is already in progress, the robot deduced that no further intervention would occur. Consequently, it proceeded to pick up $o_0$ and pour its contents into the shaker at $L_3$. Upon the human returning the initially intervened object, the robot navigated back to pour $o_1$ into the shaker. Subsequently, when the robot attempted to pick up $o_2$, the human intervened again. The robot waited patiently until $o_2$ was returned, then poured the final liquor into the shaker.
In \Cref{fig:cocktail_experiments_worst_preference_cost}, the robot's strategy began with pouring $o_1$ into the shaker. Interestingly, it then transitioned to $o_2$, relocating it closer to $L_0$. While this action may seem surprising, it was a strategic move to minimize the cost of traveling between $L_0$ and $L_2$ in case of human intervention at $o_0$. Indeed, when the robot tried to pick up $o_0$, the human intervened, necessitating the robot's return to $L_2$ until $o_0$ was returned. Finally, the robot poured $o_0$ and $o_2$ into the shaker in order.

These case studies show that
our approach based on Pareto front computation allows for the generation of diverse strategies that cater to different priorities and constraints. By providing a range of Pareto optimal solutions, users can select the most suitable strategy based on the specific requirements of the application, such as prioritizing distance cost, preference cost, or striking a balance between the two.

\subsection{\rev{Benchmarks}}
\label{appendix: scalability}


\rev
{Here, we empirically assess the computational and scalability aspects of the proposed framework. To provide a thorough evaluation, we include all the case studies that involve dynamic environments, namely the ones in Sections~\ref{sec: expriments dynamic env} and~\ref{sec: experiment cocktail}, as well as an additional MiniGrid scenario involving two environment agents, illustrated in \Cref{fig: three-agents}. 
}
\rev{
In this scenario, the red robot is tasked with catching one of the blue (environment) agents and delivering it to the green region. Each blue agent has a rich action space, including movement in the four cardinal directions and diagonal moves. The red agent, by contrast, can only move in the four cardinal directions but with the advantage of taking three steps at a time, whereas the blue agents can take only one step at a time.}
\rev{
By taking the Cartesian product of the blue agents, we obtain a two-player game abstraction.  Our synthesis algorithm computes several Pareto-optimal winning strategies for the red agent.
}

\rev{
\Cref{table: scalability} presents a detailed breakdown of the computational performance of our framework across three representative MiniGrid environments as well as the manipulator (cocktail making). It reports 
the sizes (number of nodes and edges) of the learned PDFA $\PA$, game graph $\G$, and product game $\GPd$, as well as the computation times (in seconds) for the product game $\GPd = \PA \times \G$ construction, synthesis of the set of all the Pareto points (Pareto front) $\ppSet$, and synthesis of the set of Pareto optimal strategies $\mathrm{T}^* = \{\tau^*_\pp \mid \pp \in \ppSet \}$.
The table also includes the number of Pareto points $|\ppSet|$ for each experiment.  
}

\rev{
As expected, the size of the product game grows linearly with the size of the PDFA. Importantly, the Pareto-front synthesis algorithm, despite being the most computationally intensive step, remains tractable in practice, consistent with the polynomial-time complexity established in Theorem~\ref{thm: pareto_points_computation}. In fact, the strategy extraction times are even smaller.
}


\rev{
We note that the overall scalability of the framework is inherently limited by the size of the underlying game graph $\G$. 
The most expensive synthesis times in our experiments occur in the scenarios shown in \Cref{fig:dynamic_fish_and_shipwreck_env} and \Cref{fig: three-agents}, where the number of product game edges $|E^\mathcal{P}|$ is large, due to large number of actions in the underlying game graph $\G$.
This is a challenge acknowledged in the literature. To mitigate this, symbolic representations such as Binary Decision Diagrams (BDDs) or Algebraic Decision Diagrams (ADDs) have been proposed and could be incorporated to reduce memory usage and computational costs~\cite{he2019efficient,muvvala2023efficient}. 
%
%
}



\begin{table*}
    \centering
    \caption{
    \rev{Benchmark results across four dynamic environments. The table reports the sizes (number of nodes and edges) of the (i) learned PDFA $\PA$, (ii) game graph $\G$, and (iii) product game $\GPd$, 
    the computation times (in seconds) for the (iv) product game $\GPd = \PA \times \G$ construction, (v) synthesis of set of all the Pareto points (Pareto front) $\ppSet$, and (vi) synthesis of the set of Pareto optimal strategies $\mathrm{T}^* = \{\tau^*_\pp \mid \pp \in \ppSet \}$,
    and the number of Pareto points $|\ppSet|$.
    Notations $|E^{\PA}|$, $|E^{\G}|$, and $|E^{\mathcal{P}}|$ represent the number of edges of $\PA$, $\G$, and $\GPd$, respectively.
    }
    }
    \label{table: scalability}
    \scalebox{0.91}{
    \begin{tabular}{l l  c c  r r  r r r  r c c}
        \toprule
        \multirow{2}{*}{Scenario} & & \multicolumn{2}{c}{\underline{\hspace{7mm}$\PA$\hspace{6mm}}} & \multicolumn{2}{c}{\underline{\hspace{10mm}$\G$\hspace{10mm}}} & \multicolumn{3}{c}{\underline{\hspace{18mm}$\GPd$\hspace{18mm}}} 
        & \multicolumn{2}{c}{\underline{\hspace{3mm}Synthesis Time (s)\hspace{3mm}}} &
        \multirow{2}{*}{$|\ppSet|$}
        \\
        & & $|Q|$ & $|E^{\PA}|$ & \multicolumn{1}{c}{$|S|$} & \multicolumn{1}{c}{$|E^\G|$} & \multicolumn{1}{c}{$|S^{\mathcal{P}}|$} & \multicolumn{1}{c}{$|E^{\mathcal{P}}|$} & \multicolumn{1}{c}{Time (s)} &  
        \multicolumn{1}{c}{$\ppSet$} & \multicolumn{1}{c}{$\mathrm{T}^*$} &
        \\
        \midrule
        Fish \& Shipwreck & \Cref{fig:dynamic_fish_and_shipwreck_env} & 4 & 7 & 5328 & 202464  & 15626 & 585721 & 18.90 
        & 1087.13 & 91.02 & 1\\ 
        Charging Station & \Cref{fig:dynamic_charging_station_env} & 3 & 5 & 1886 & 61192 & 3245 & 100007 & 3.60  
        & 209.79 & 87.77 & 5\\ 
        Cocktail Making & \Cref{fig:cocktail_example} & 5 & 9 & 17662 & 39102 & 31765 & 67298  & 7.24 
        & 498.74 & 66.86 & 3 \\ 
        Three Agents & \Cref{fig: three-agents} & 4 & 7 & 13824 & 497664 & 39619 & 1376348 & 48.16 
        & 1626.92 & 278.50 & 2\\
        \bottomrule
    \end{tabular}
    }
\end{table*}

\rev{
Nonetheless, the reported results provide empirical evidence of the practical feasibility of our proposed framework for problems of moderate scale, especially in structured environments such as indoor robotics.
}

\begin{figure}
    \centering
    \includegraphics[width=0.5\linewidth]{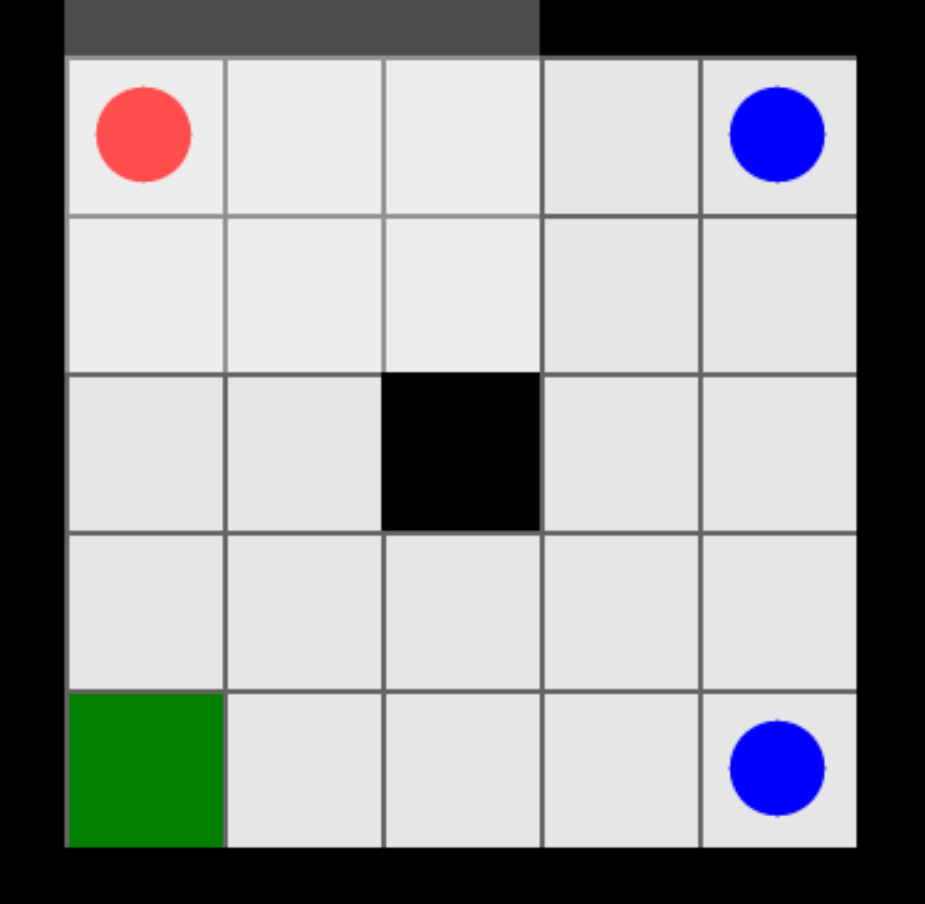}
    \caption{
    \rev{Red agent has to catch either of the blue agents and deliver it to the green region.  The action space of each blue agent consists of the four-cardinal and two-diagonal directions. The red agent can move in four cardinal directions and can take 3 steps at a time.}
    }
    \label{fig: three-agents}
\end{figure}

\section{Conclusion}
\label{sec:conclusion}
In this paper, we presented a new approach to learning specifications from demonstrations in the form of a PDFA. 
Unlike existing works, this method does not require prior knowledge, is fast, and captures preferences. 
We presented a pre-processing algorithm that incorporates safety constraints into the learning process. This algorithm significantly improved speed.
We also introduced a planning algorithm for learned specification while optimizing for multiple costs. 
The algorithm generates a set of all Pareto points that the user can choose from and a Pareto optimal strategy for each Pareto point.
Extensive evaluations illustrate the framework's flexibility and capability of robust knowledge transfer to various environments and robots.

Future directions for specification learning include inferring specifications over infinite horizons, embedding prior predictions or knowledge into the inference algorithm, and utilizing counterexamples to guide the inference. For strategy synthesis, our interest is on synthesizing strategies over infinite horizons, generating explanations for robotic behaviors, and recovering from unpredicted states (e.g., failures or unmodeled human interventions).


{\appendix

\section{Proof of Lemma~\ref{lemma: monotonic}}
\label{appendix: proof of lemma monotonic}
\begin{proof}
    This can easily be shown for the system nodes. As the algorithm progressively finds multiple feasible paths, it picks paths with the dominant total payoffs. Formally, by taking the union and the upper set of the total payoffs, only the dominant Pareto points remain in the set. Let $\tpSetAfter{i}(\GPdstate) = \{ u_1, \ldots, u_n  \}$ and assume a path with a smaller total payoff $u_1' \succeq u_1$ is found in one iteration. Then, 
    $$ \tpSetAfter{i+1}(\GPdstate) = \{ u_1',u_2, \ldots, u_n \} \succeq \{ u_1, u_2, \ldots, u_n \} = \tpSetAfter{i}(\GPdstate). $$
    
    At the environment nodes, the Pareto points at a state remain as infinities if the algorithm has not found a path to the terminating node from that state. The Pareto points only get updated after the Pareto points of its successor nodes get updated. Let $u_1$ be infinities, then
    $$ \tpSetAfter{i+1}(\GPdstate) =\{\vec{\infty} \} = \tpSetAfter{i}(\GPdstate). $$
    If all successor nodes have shorter paths, i.e., $u_i' \succeq u_i$ for all $i \in \{1, \ldots, n\}$, then 
    $$ \tpSetAfter{i+1}(\GPdstate) = \{ u_1', \ldots, u_n' \} \succeq \{ u_1, \ldots, u_n \} = \tpSetAfter{i}(\GPdstate). $$
    As the total payoffs at system nodes decrease monotonically, the set of all total payoffs at the environment nodes can only decrease monotonically. 
    Thus, we can derive $\tpSetAfter{i+1}(\GPdstate) \succeq \tpSetAfter{i}(\GPdstate)$ for all $\GPdstate$.
    Below, we show that this also holds when there exist strongly connected components (SCCs) in the game.
    
    Assume the game consists of SCCs.
    Environment nodes force a loop which leads to a non-winning region, but system nodes can break a loop if there exists a path to a terminating node. Let $\GPdstate_k$ be a node that has the option to exit the loop, $\GPdstate_{k+1}$ be its successor node that leads to the terminating node, and  $\play_{\rm{loop}} = \{\GPdstate_k, \GPdstate_{k'+1}, \ldots, \GPdstate_{k'+n}\}$ be a sequence of nodes in the loop.
    The smallest total payoffs at $\GPdstate_k$ are updated by taking the shortest path to the terminating node, i.e., 
    \rev{
    $\tpSetAfter{i}(\GPdstate_k) = \tpSetAfter{i-1}(\GPdstate_{k+1}) + \GPdWeight(\GPdstate_k, a, \GPdstate_{k+1})$, and the total payoffs at node $\GPdstate_{k'+1}$ is the sum of the total payoffs at $\GPdstate_k$ and the edge weights, i.e., $\tpSetAfter{i}(\GPdstate_k) \oplus \{\GPdWeight(s_{k'+n}, a, s_k) + \sum_{i=1}^{n-1}  \GPdWeight(s_{k'+i}, a, s_{k'+i+1})\}$, 
    where $\oplus$ is the Minkowski sum.}
    
    By taking the union of the total payoffs of its successor nodes, we get,
    \begin{align*} 
    \tpSetAfter{i+1}(\GPdstate_k) 
    & = \tpSetAfter{i}(\GPdstate_k) \cup \tpSetAfter{i}(\GPdstate_{k'+1}) \\
    & = \tpSetAfter{i}(\GPdstate_k) \cup \big(\tpSetAfter{i}(\GPdstate_k) \oplus \\ & \{ \GPdWeight(s_{k'+n}, a, s_k) + \sum_{i=1}^{n-1}  \GPdWeight(s_{k'+i}, a, s_{k'+i+1}) \}\big) \\
    & = \tpSetAfter{i}(\GPdstate_k)
    \end{align*}
    \rev{
    Since the sum of weights are all positive and the total payoffs are strictly greater than $\tpSet_i(\GPdstate_k)$, the union operation picks the dominant total payoffs $\tpSet_i(\GPdstate_k)$. Thus, the total payoff decreases monotonically even if there exists a loop.}
\end{proof}

\section{Proof of Proposition~\ref{lemma:maxpropagation}}
\label{appendix: proof of lemma:maxpropagation}

\begin{proof} The proof of Proposition~\ref{lemma:maxpropagation} relies on the following lemma.
    \begin{lemma}
    The maximum number of steps from the initial state $\GPdInit$ to the accepting state $\GPdTerm$ is $\GPdMaxStep$.
    \label{lemma:max_steps}
    \end{lemma}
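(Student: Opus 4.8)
The plan is to reduce Lemma~\ref{lemma:max_steps} to the elementary graph-theoretic fact that a simple (acyclic) path in a graph on $\GPdNumState$ nodes traverses at most $\GPdNumState-1 = \GPdMaxStep$ edges. The observation that makes this reduction legitimate in our weighted setting is that all edge weights are non-negative, $\GPdWeight(\cdot) \in \mathbb{R}^{m+1}_{\geq 0}$, so traversing a cycle can never improve (dominate) the total payoff. Hence, among all paths from $\GPdInit$ to $\GPdTerm$ that realize a Pareto-optimal total payoff, it suffices to consider those that visit no state twice, and the step count is then bounded purely by the number of states.

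Concretely, I would first show that any path revisiting a state can be shortened without worsening its payoff. Suppose a path $\pathGPd$ from $\GPdInit$ to $\GPdTerm$ visits some state $\GPdstate$ at two distinct positions. Excising the segment of $\pathGPd$ strictly between these two visits yields a path $\pathGPd'$ that still starts at $\GPdInit$, ends at $\GPdTerm$, and is a valid path in $\GPd$ (the transitions before and after the splice remain consistent, since both endpoints of the removed segment are the same state $\GPdstate$). Because the removed segment has total weight in $\mathbb{R}^{m+1}_{\geq 0}$, we obtain $\totalpayoff(\pathGPd') \preceq \totalpayoff(\pathGPd)$, i.e., the shortened path dominates the original. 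Iterating this loop-removal terminates in a simple path whose payoff dominates that of $\pathGPd$. A simple path visits each of the $\GPdNumState$ states at most once, hence contains at most $\GPdNumState-1$ edges and thus at most $\GPdMaxStep$ steps. Consequently every Pareto-optimal play reaches $\GPdTerm$ from $\GPdInit$ within $\GPdMaxStep$ steps, which is exactly the claimed bound.

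The main subtlety lies in transferring this path-level reasoning to the two-player game semantics, where the relevant object is a worst-case play under a system strategy rather than a single freely chosen path. At environment nodes the adversary might try to force a cycle; here I would argue that any state from which the environment can force a cycle that never reaches $\GPdTerm$ lies outside the winning region and carries value $\{\vec{\infty}\}$, consistent with the initialization in \Cref{alg:pareto_points_computation}, so such states contribute nothing to the finite Pareto front. On the winning region, the loop-excision argument applies along each branch of the worst-case play, so the depth to which $\fpoperator_\ppSet$ must back-propagate dominant payoffs is governed by the length of the longest relevant simple path, namely $\GPdMaxStep$. Making this transfer airtight---in particular, verifying that the loop-removal performed branch-by-branch remains consistent with a single (memoryless) system strategy and with the turn alternation---is the only delicate point; the underlying counting argument is immediate once acyclicity is established.
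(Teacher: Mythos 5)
Your proof is correct and takes essentially the same route as the paper's: both reduce the bound to the observation that the relevant plays are simple paths, which on $\GPdNumState$ nodes traverse at most $\GPdMaxStep$ edges. Your loop-excision argument based on the non-negativity of $\GPdWeight$ in fact supplies the justification that the paper's proof merely asserts (``winning strategies \ldots never take a loop''), and your handling of environment-forced cycles (value $\{\vec{\infty}\}$, outside the winning region) matches the paper's implicit restriction to the winning region.
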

    \begin{proof}
    If the initial state is in the winning region, there always exists a path from the initial state to the accepting state. Winning strategies take the shortest paths and never take a loop. This results in a visit at each node at most once. Therefore, the maximum number of steps (edges) the strategy can take is bounded by the number of nodes $\GPdMaxStep$.
    \end{proof}

    Applying the $\fpoperator_\ppSet$ at each node starting from the terminal state until the initial state explores all nodes and edges in the product game, hence $\GPdNumState + \GPdNumEdge$ at each iteration.
    From Lemma~\ref{lemma:max_steps}, every state must be reached in $\GPdMaxStep$ steps from the terminating state. Therefore, by reiterating the procedure $\GPdMaxStep$ times, all paths must be explored and the cost of all paths is taken into account.  
\end{proof}

\theendnotes

\bibliographystyle{SageH}
\bibliography{references}

@article{ravichandar2020recent,
  title={Recent advances in robot learning from demonstration},
  author={Ravichandar, Harish and Polydoros, Athanasios S and Chernova, Sonia and Billard, Aude},
  journal={Annual Review of Control, Robotics, and Autonomous Systems},
  volume={3},
  pages={297--330},
  year={2020},
  publisher={Annual Reviews}
}

@article{hussein2017imitation,
  title={Imitation learning: A survey of learning methods},
  author={Hussein, Ahmed and Gaber, Mohamed Medhat and Elyan, Eyad and Jayne, Chrisina},
  journal={ACM Computing Surveys (CSUR)},
  volume={50},
  number={2},
  pages={1--35},
  year={2017},
  publisher={ACM New York, NY, USA}
}

@book{sutton2018reinforcement,
  title={Reinforcement learning: An introduction},
  author={Sutton, Richard S and Barto, Andrew G},
  year={2018},
  publisher={MIT press}
}

@incollection{schaal2006dynamic,
  title={Dynamic movement primitives-a framework for motor control in humans and humanoid robotics},
  author={Schaal, Stefan},
  booktitle={Adaptive motion of animals and machines},
  pages={261--280},
  year={2006},
  publisher={Springer}
}

@article{paraschos2013probabilistic,
  title={Probabilistic movement primitives},
  author={Paraschos, Alexandros and Daniel, Christian and Peters, Jan and Neumann, Gerhard},
  journal={Neurips},
  year={2013}
}

@inproceedings{ng2000algorithms,
  title={Algorithms for inverse reinforcement learning.},
  author={Ng, Andrew Y and Russell, Stuart J},
  booktitle={{ICML}},
  volume={1},
  pages={2},
  year={2000}
}

@inproceedings{ziebart2008maximum,
  title={Maximum entropy inverse reinforcement learning.},
  author={Ziebart, Brian D and Maas, Andrew L and Bagnell, J Andrew and Dey, Anind K},
  booktitle={{AAAI}},
  volume={8},
  pages={1433--1438},
  year={2008},
  organization={Chicago, IL, USA}
}

@article{wulfmeier2015maximum,
  title={Maximum entropy deep inverse reinforcement learning},
  author={Wulfmeier, Markus and Ondruska, Peter and Posner, Ingmar},
  journal={arXiv:1507.04888},
  year={2015}
}

@inproceedings{ramachandran2007bayesian,
  title={Bayesian Inverse Reinforcement Learning.},
  author={Ramachandran, Deepak and Amir, Eyal},
  booktitle={IJCAI},
  volume={7},
  pages={2586--2591},
  year={2007}
}

@inproceedings{vazquez2017logical,
  title={Logical clustering and learning for time-series data},
  author={Vazquez-Chanlatte, Marcell and Deshmukh, Jyotirmoy V and Jin, Xiaoqing and Seshia, Sanjit A},
  booktitle={Computer Aided Verification},
  pages={305--325},
  year={2017},
  organization={Springer}
}

@inproceedings{li2017reinforcement,
  title={Reinforcement learning with temporal logic rewards},
  author={Li, Xiao and Vasile, Cristian-Ioan and Belta, Calin},
  booktitle={2017 IEEE/RSJ International Conference on Intelligent Robots and Systems},
  pages={3834--3839},
  year={2017},
  organization={IEEE}
}

@article{li2019formal,
  title={A formal methods approach to interpretable reinforcement learning for robotic planning},
  author={Li, Xiao and Serlin, Zachary and Yang, Guang and Belta, Calin},
  journal={Science Robotics},
  volume={4},
  number={37},
  year={2019},
  publisher={Science Robotics}
}

@inproceedings{ijcai2019-840,
  title     = {{LTL} and Beyond: Formal Languages for Reward Function Specification in Reinforcement Learning},
  author    = {Camacho, Alberto and Toro Icarte, Rodrigo and Klassen, Toryn Q. and Valenzano, Richard and McIlraith, Sheila A.},
  booktitle = {Int'l Joint Conference on Artificial Intelligence},
  pages     = {6065--6073},
  year      = {2019},
  month     = {7},
}

@article{xu2018advisory,
  title={Advisory temporal logic inference and controller design for semiautonomous robots},
  author={Xu, Zhe and Saha, Sayan and Hu, Botao and Mishra, Sandipan and Julius, A Agung},
  journal={IEEE Transactions on Automation Science and Engineering},
  volume={16},
  number={1},
  pages={459--477},
  year={2018},
  publisher={IEEE}
}

@inproceedings{jha2017telex,
  title={Telex: Passive {stl} learning using only positive examples},
  author={Jha, Susmit and Tiwari, Ashish and Seshia, Sanjit A and Sahai, Tuhin and Shankar, Natarajan},
  booktitle={International Conference on Runtime Verification},
  pages={208--224},
  year={2017},
  organization={Springer}
}

@inproceedings{shah2018bayesian,
  title={Bayesian inference of temporal task specifications from demonstrations},
  author={Shah, Ankit Jayesh and Kamath, Pritish and Li, Shen and Shah, Julie A},
  page={},
  year={2018},
  booktitle={Neural Information Processing Systems Foundation, Inc.}
}

@inproceedings{vazquez2017learning,
  title={Learning Task Specifications from Demonstrations},
  author={Vazquez-Chanlatte, Marcell and Jha, Susmit and Tiwari, Ashish and Ho, Mark K and Seshia, Sanjit},
  booktitle={NeurIPS},
  volume={31},
  year={2018}
}

@inproceedings{araki2019learning,
  title={Learning to plan with logical automata},
  author={Araki, Brandon and Vodrahalli, Kiran and Leech, Thomas and Vasile, Cristian-Ioan and Donahue, Mark D and Rus, Daniela L},
  year={2019},
  booktitle={Robotics: Science and Systems Foundation}
}

@inproceedings{verwer2017flexfringe,
  title={Flexfringe: a passive automaton learning package},
  author={Verwer, Sicco and Hammerschmidt, Christian A},
  booktitle={Intl. Conf. Software Maintenance and Evolution (ICSME)},
  pages={638--642},
  year={2017},
  organization={IEEE}
}

@book{de2010grammatical,
  title={Grammatical inference: learning automata and grammars},
  author={De la Higuera, Colin},
  year={2010},
  publisher={Cambridge University Press}
}

@inproceedings{he2017reactive,
  title={Reactive synthesis for finite tasks under resource constraints},
  author={He, Keliang and Lahijanian, Morteza and Kavraki, Lydia E and Vardi, Moshe Y},
  booktitle={2017 IEEE/RSJ International Conference on Intelligent Robots and Systems (IROS)},
  pages={5326--5332},
  year={2017},
  organization={IEEE}
}

@inproceedings{chen2013synthesis,
  title={Synthesis for multi-objective stochastic games: An application to autonomous urban driving},
  author={Chen, Taolue and Kwiatkowska, Marta and Simaitis, Aistis and Wiltsche, Clemens},
  booktitle={International Conference on Quantitative Evaluation of Systems},
  pages={322--337},
  year={2013},
  organization={Springer}
}

@inproceedings{basset2015strategy,
  title={Strategy synthesis for stochastic games with multiple long-run objectives},
  author={Basset, Nicolas and Kwiatkowska, Marta and Topcu, Ufuk and Wiltsche, Clemens},
  booktitle={International Conference on Tools and Algorithms for the Construction and Analysis of Systems},
  pages={256--271},
  year={2015},
  organization={Springer},
}

@inproceedings{chen2013stochastic,
  title={On stochastic games with multiple objectives},
  author={Chen, Taolue and Forejt, Vojt{\v{e}}ch and Kwiatkowska, Marta and Simaitis, Aistis and Wiltsche, Clemens},
  booktitle={International Symposium on Mathematical Foundations of Computer Science},
  pages={266--277},
  year={2013},
  organization={Springer},
}

@inproceedings{chatterjee2012strategy,
  title={Strategy synthesis for multi-dimensional quantitative objectives},
  author={Chatterjee, Krishnendu and Randour, Mickael and Raskin, Jean-Fran{\c{c}}ois},
  booktitle={International Conference on Concurrency Theory},
  pages={115--131},
  year={2012},
  organization={Springer}
}

@article{sastry2005new,
  title={New polynomial time algorithms to compute a set of Pareto optimal paths for multi-objective shortest path problems},
  author={Sastry, VN and Janakiraman, TN and Mohideen, S Ismail},
  journal={International Journal of Computer Mathematics},
  volume={82},
  number={3},
  pages={289--300},
  year={2005},
  publisher={Taylor \& Francis}
}

@inproceedings{fainekos2005temporal,
  title={Temporal logic motion planning for mobile robots},
  author={Fainekos, Georgios E and Kress-Gazit, Hadas and Pappas, George J},
  booktitle={Proceedings of the 2005 IEEE International Conference on Robotics and Automation},
  pages={2020--2025},
  year={2005},
  organization={IEEE}
}

@inproceedings{bhatia2010sampling,
  title={Sampling-based motion planning with temporal goals},
  author={Bhatia, Amit and Kavraki, Lydia E and Vardi, Moshe Y},
  booktitle={2010 IEEE International Conference on Robotics and Automation},
  pages={2689--2696},
  year={2010},
  organization={IEEE}
}

@inproceedings{camacho2019ltl,
  title={LTL and Beyond: Formal Languages for Reward Function Specification in Reinforcement Learning.},
  author={Camacho, Alberto and Icarte, Rodrigo Toro and Klassen, Toryn Q and Valenzano, Richard Anthony and McIlraith, Sheila A},
  booktitle={IJCAI},
  volume={19},
  pages={6065--6073},
  year={2019}
}

@inproceedings{puri2013efficient,
  title={Efficient parallel and distributed algorithms for GIS polygonal overlay processing},
  author={Puri, Satish and Prasad, Sushil K},
  booktitle={2013 IEEE International Symposium on Parallel \& Distributed Processing, Workshops and Phd Forum},
  pages={2238--2241},
  year={2013},
  organization={IEEE}
}

@InProceedings{Hadas:ICRA:2007,
  author =       {H. Kress-Gazit and G. Fainekos and G. J. Pappas},
  title =        {Where's {W}aldo? Sensor-based temporal logic motion planning},
  booktitle =    {Int. Conf. on Robotics and Automation},
  year =         {2007},
  pages=         {3116--3121},
  address =      {Rome, Italy},
  organization={IEEE},
}

@INPROCEEDINGS{Lahijanian:ICRA:2009,
  AUTHOR =       {Morteza Lahijanian and M. Kloetzer and S. Itani and C. Belta and S.B. Andersson},
  TITLE =        {Automatic deployment of autonomous cars in a robotic urban-like environment {(RULE)}},
  BOOKTITLE =    {Int. Conf. on Robotics and Automation},
  YEAR =         {2009},
  pages =        {2055--2060},
  address =      {Kobe, Japan},
  organization = {IEEE},
}

@BOOK{BaierBook2008,
  AUTHOR =       {Christel Baier and Joost-Pieter Katoen},
  TITLE =        {Principles of Model Checking},
  PUBLISHER =    {The MIT Press},
  YEAR =         {2008},
  address =      {Cambridge, MA},
}

@article{kupferman:FMSD:2001,
  author = {Orna Kupferman and Moshe Y. Vardi},
  issue = {3},
  journal = {Formal Methods in System Design},
  keywords = {LTL,finite prefixes,model checking,syntactically safe},
  pages = {291--314},
  title = {Model Checking of Safety Properties},
  volume = {19},
  year = {2001}
}

@InProceedings{He:ICRA:2015,
  title = {Towards Manipulation Planning with Temporal Logic Specifications},
  author = {Keliang He and Morteza Lahijanian and Lydia E. Kavraki and Moshe Y. Vardi},
  booktitle = {Int. Conf. Robotics and Automation},
  year = {2015},
  month = {May},
  pages = {346--352},
  publisher = {IEEE}
}

@ARTICLE{Lahijanian:TRO:2016,
  AUTHOR =       {Morteza Lahijanian and Matthew R. Maly and Dror Fried and Lydia E. Kavraki and Hadas Kress-Gazit and Moshe Y. Vardi},
  TITLE =        {Iterative Temporal Planning in Uncertain Environments With Partial Satisfaction Guarantees},
  JOURNAL =      {IEEE Transactions on Robotics},
  YEAR =         {2016},
  volume =       {32},
  number =       {3},
  pages =        {538--599},
  month =        {May},
  publisher =    {IEEE},
  doi =          {10.1109/TRO.2016.2544339},
}

@INPROCEEDINGS{He:IROS:2017,
  AUTHOR =       {Keling He and Morteza Lahijanian and Lydia E. Kavraki and Moshe Y. Vardi},
  TITLE =        {Reactive Synthesis for Finite Tasks Under Resource Constraints},
  BOOKTITLE =    {Int. Conf. on Intelligent Robots and Systems (IROS)},
  month =        {Sep.},
  YEAR =         {2017},
  publisher =    {IEEE},
  pages =        {5326--5332},
  address =      {Vancouver, BC, Canada},
}

@ARTICLE{Lahijanian:AR-CRAS:2018,
  AUTHOR =       {Hadas Kress-Gazit and Morteza Lahijanian and Vasumathi Raman},
  TITLE =        {Synthesis for Robots: Guarantees and Feedback for Robot Behavior},
  JOURNAL =      {Annual Review of Control, Robotics, and Autonomous Systems},
  YEAR =         {2018},
  MONTH =        {May},
  VOLUME =       {1},
  PAGES =        {211--236},
  DOI =          {10.1146/annurev-control-060117-104838}
}

@article{He:RAL:2019,
  title={Automated Abstraction of Manipulation Domains for Cost-Based Reactive Synthesis},
  author={He, Keliang and Lahijanian, Morteza and Kavraki, Lydia, E and Vardi, Moshe, Y},
  journal={IEEE Robotics and Automation Letters},
  volume={4},
  number={2},
  pages={285--292},
  year={2019},
  month = {Apr.},
  publisher={IEEE},
}

@INPROCEEDINGS{muvvala2022regret,
  author={Muvvala, Karan and Amorese, Peter and Lahijanian, Morteza},
  booktitle={2022 International Conference on Robotics and Automation (ICRA)}, 
  title={Let's Collaborate: Regret-based Reactive Synthesis for Robotic Manipulation}, 
  year={2022},
  pages={4340-4346}}

@inproceedings{watanabe2021probabilistic,
  title={Probabilistic specification learning for planning with safety constraints},
  author={Watanabe, Kandai and Renninger, Nicholas and Sankaranarayanan, Sriram and Lahijanian, Morteza},
  booktitle={2021 IEEE/RSJ International Conference on Intelligent Robots and Systems (IROS)},
  pages={6558--6565},
  year={2021},
  organization={IEEE}
}

@inproceedings{muvvala2023efficient,
  title={Efficient symbolic approaches for quantitative reactive synthesis with finite tasks},
  author={Muvvala, Karan and Lahijanian, Morteza},
  booktitle={2023 IEEE/RSJ International Conference on Intelligent Robots and Systems (IROS)},
  pages={8666--8672},
  year={2023},
  organization={IEEE}
}

@inproceedings{he2019efficient,
  title={Efficient symbolic reactive synthesis for finite-horizon tasks},
  author={He, Keliang and Wells, Andrew M and Kavraki, Lydia E and Vardi, Moshe Y},
  booktitle={2019 International Conference on Robotics and Automation (ICRA)},
  pages={8993--8999},
  year={2019},
  organization={IEEE}
}

@inproceedings{Muvvala:ICRA:2024,
  title = {Stochastic Games for Interactive Manipulation Domains},
  author = {Muvvala, Karan and Wells, Andrew and Lahijanian, Morteza and Kavraki, Lydia and Vardi, Moshe},
  booktitle = {2024 IEEE Conference on Robotics and Automation (ICRA)},
  month = may,
  year = {2024},
  address = {Yokohama, Japan},
  organization = {IEEE},
  bibkey = {maniprs},
  videourl = {https://www.youtube.com/watch?v=OTosSD7yPA0},
  url = {https://arxiv.org/abs/2403.04910},
  doi = {10.1109/ICRA57147.2024.10611623}
}

\end{document}